\newcommand{\noticml}[1]{\iftoggle{icml}{}{#1}}
\newcommand{\icml}[1]{\iftoggle{icml}{#1}{}}
\newcommand{\preprint}[1]{\iftoggle{preprint}{#1}{}}
\newcommand{\neurips}[1]{\iftoggle{neurips}{#1}{}}
\preprint{
\pdfoutput=1
\usepackage[T1]{fontenc}

\usepackage{lmodern} \normalfont %
\usepackage{anyfontsize}
\DeclareFontShape{T1}{lmr}{bx}{sc} { <-> ssub * cmr/bx/sc }{}
\DeclareFontShape{T1}{lmr}{m}{scit}{ <-> ssub * cmr/m/sc }{}
\DeclareFontShape{T1}{lmr}{bx}{scit}{ <-> ssub * cmr/bx/sc }{}
}
\tikzset{
    -Latex,auto,node distance =1 cm and 1 cm,semithick,
    state/.style ={ellipse, draw, minimum width = 0.7 cm},
    point/.style = {circle, draw, inner sep=0.04cm,fill,node contents={}},
    bidirected/.style={Latex-Latex,dashed},
    el/.style = {inner sep=2pt, align=left, sloped}
}
    \let\Cref\crtCref
    \let\cref\crtcref
\crefname{figure}{Figure}{Figures}
\crefname{subsection}{Subsection}{Subsections}
\crefname{lemma}{Lemma}{Lemmas}
\crefname{corollary}{Corollary}{Corollaries}
\crefname{theorem}{Theorem}{Theorems}
\declaretheorem[name=Theorem]{theorem}
\declaretheorem[sibling=theorem,name=Lemma]{lemma}
\declaretheorem[sibling=theorem,name=Definition]{definition}
\declaretheorem[name=Assumption, numbered=no]{assumption*}
\declaretheorem[qed=$\triangleleft$,sibling=theorem,name=Remark]{remark}
\numberwithin{equation}{section}
\numberwithin{theorem}{section}
\numberwithin{lemma}{section}
\numberwithin{remark}{section}
\numberwithin{proposition}{section}
\numberwithin{definition}{section}
\renewcommand{\maketag@@@}[1]{\hbox{\m@th\normalsize\normalfont#1}}%
\let\reftagform@=\tagform@
\def\tagform@#1{\maketag@@@{\ignorespaces\textcolor{gray}{(#1)}\unskip\@@italiccorr}}
\renewcommand{\eqref}[1]{\textup{\reftagform@{\ref{#1}}}}
\renewcommand{\AA}{\mathbb{A}}
\newcommand{\EE}{\mathbb{E}}
\newcommand{\II}{\mathbb{I}}
\newcommand{\PP}{\mathbb{P}}
\newcommand{\RR}{\mathbb{R}}
\newcommand{\TT}{\mathbb{T}}
\newcommand{\Aa}{\mathcal{A}}
\newcommand{\Bb}{\mathcal{B}}
\newcommand{\Gg}{\mathcal{G}}
\newcommand{\Oo}{\mathcal{O}}
\newcommand{\Rr}{\mathcal{R}}
\newcommand{\Vv}{\mathcal{V}}
\newcommand{\Yy}{\mathcal{Y}}
\newcommand{\Zz}{\mathcal{Z}}
\newcommand{\eps}{\varepsilon}
\def\[#1\]{\begin{equation}\begin{aligned}#1\end{aligned}\end{equation}}
\def\*[#1\]{\begin{equation*}\begin{aligned}#1\end{aligned}\end{equation*}}
\def\s*[#1\s]{\small\begin{align*}#1\end{align*}\normalsize}
\newcommand{\lcrx}[4][{-1}]{ 
	\IfEq{#1}{-1}{\left #2 {{{{#3}}}} \right #4}{
   	\IfEq{#1}{0}{#2 {{{{#3}}}} #4}{
	\IfEq{#1}{1}{\bigl #2 {{{{#3}}}} \bigr #4}{
	\IfEq{#1}{2}{\Bigl #2 {{{{#3}}}} \Bigr #4}{
	\IfEq{#1}{3}{\biggl #2 {{{{#3}}}} \biggr #4}{
	\IfEq{#1}{4}{\Biggl #2 {{{{#3}}}} \Biggr #4}{
    \GenericWarning{"4th argument to lcrx must be -1, 0, 1, 2, 3, or 4"}
    }}}}}}} %
\newcommand{\KL}[2]{\mathrm{KL}\rbra{#1 \ \Vert \ #2}}
\newcommand{\setdelim}{\ \vert \ }
\newcommand{\Bigsetdelim}{\ \Big\vert \ }
\newcommand{\iid}{\text{i.i.d.}}
\newcommand{\ind}{\II} %
\def\multiset#1#2{\ensuremath{\left(\kern-.3em\left(\genfrac{}{}{0pt}{}{#1}{#2}\right)\kern-.3em\right)}}
\DeclareMathOperator*{\argmax}{\arg\max} %
\DeclareMathOperator*{\newlim}{\mathrm{lim}\vphantom{\mathrm{infsup}}}
\DeclareMathOperator*{\newmin}{\mathrm{min}\vphantom{\mathrm{infsup}}}
\DeclareMathOperator*{\newmax}{\mathrm{max}\vphantom{\mathrm{infsup}}}
\DeclareMathOperator*{\newinf}{\mathrm{inf}\vphantom{\mathrm{infsup}}}
\DeclareMathOperator*{\newsup}{\mathrm{sup}\vphantom{\mathrm{infsup}}}
\renewcommand{\lim}{\newlim}
\renewcommand{\min}{\newmin}
\renewcommand{\max}{\newmax}
\renewcommand{\inf}{\newinf}
\renewcommand{\sup}{\newsup}
\newcommand{\dee}{\mathrm{d}} %
\newcommand{\bernoullidist}{\mathrm{Ber}}
\newcommand{\rbra}[2][{-1}]{\lcrx[#1] ( {#2} ) }
\newcommand{\sbra}[2][{-1}]{\lcrx[#1] [ {#2} ] }
\newcommand{\abs}[2][{-1}]{\lcrx[#1] \vert {#2} \vert }
\newcommand{\defas}{:=}
\newcommand{\Reals}{\RR}
\newcommand{\range}[2][{1}]{
	\IfEq{#1}{1}{\sbra{#2}}{\sbra{#2}_{#1}}}
\newcommand{\rangeO}[2][{0}]{
	\IfEq{#1}{0}{\sbra{#2}_0}{\sbra{#2}_{#1}}}
\newcommand{\statname}{experimenter}
\newcommand{\unobservedname}{$d$-separator}
\newcommand{\unobservednames}{$d$-separators}
\newcommand{\unobservedNames}{$d$-Separators}
\newcommand{\nullaction}{null}
\newcommand{\genactionname}{intervention}
\newcommand{\genactionnames}{interventions}
\newcommand{\actionname}{action}
\newcommand{\actionnames}{actions}
\newcommand{\postcontextname}{post-action context}
\newcommand{\postcontextnames}{post-action contexts}
\newcommand{\envname}{environment}
\newcommand{\envnames}{environments}
\newcommand{\envName}{Environment}
\newcommand{\policyname}{policy}
\newcommand{\policynames}{policies}
\newcommand{\algoname}{algorithm}
\newcommand{\algonames}{algorithms}
\newcommand{\playername}{player}
\newcommand{\historyname}{observed history}
\newcommand{\regretname}{regret}
\newcommand{\propertyname}{conditionally benign}
\newcommand{\propertyNAME}{Conditionally Benign}
\newcommand{\causalgraphname}{causal bandit graph}
\newcommand{\stochgamename}{stochastic bandit problem with \postcontextnames}
\newcommand{\Corral}{Corral}
\newcommand{\UCB}{UCB}
\newcommand{\UCBfull}{upper confidence bound}
\newcommand{\CUCB}{C-UCB}
\newcommand{\CUCBt}{C-UCB-2}
\newcommand{\CUCBfull}{causal upper confidence bound}
\newcommand{\HCUCB}{HAC-UCB}
\newcommand{\HCUCBfull}{hypothesis-tested adaptive causal upper confidence bound}
\newcommand{\UCBsymsmall}{\text{\tiny\upshape UCB}}
\newcommand{\CUCBsymsmall}{\text{\tiny\upshape C}}
\newcommand{\HCUCBsymsmall}{\text{\tiny\upshape HAC}}
\newcommand{\mabalgospace}{\AA_{\text{\tiny\upshape MAB}}}
\newcommand{\pcalgospace}{\AA_{\text{\tiny\upshape C-MAB}}}
\newcommand{\adapcalgospace}{\AA^{\star}_{\text{\tiny\upshape C-MAB}}}
\newcommand{\probspace}{\mathscr{P}}
\newcommand{\responsespace}{\Yy}
\newcommand{\postcontextspace}{\Zz}
\newcommand{\actionspace}{\Aa}
\newcommand{\actionspacenull}{\Aa_0}
\newcommand{\env}[1]{\nu_{#1}}
\newcommand{\baseenv}[1]{\tilde\nu_{#1}}
\newcommand{\specenv}[3]{\nu^{#3}_{#1}}
\newcommand{\envspace}{\probspace(\postcontextspace\times\responsespace)^{\actionspace}}
\newcommand{\envmargspace}{\Pi_{\actionspace,\postcontextspace}}
\newcommand{\policy}[1]{\pi_{#1}}
\newcommand{\policymarg}[1]{\pi^{\margdist}_{#1}}
\newcommand{\policyspace}{\Pi}
\newcommand{\algo}{\mathfrak{a}}
\newcommand{\adaalgo}{{\mathfrak{a}}^{\star}}
\newcommand{\propertyfunc}{\mathfrak{p}}
\newcommand{\margdist}{q}
\newcommand{\baseconddist}{\tilde p}
\newcommand{\conddist}[1]{p^{#1}}
\newcommand{\EEenv}[1]{\EE_{\env{#1}}}
\newcommand{\PPenv}[1]{\PP_{\env{#1}}}
\newcommand{\PPpriorenv}[1]{\PP_{\priormargenv{#1}}}
\newcommand{\PPestenv}[1]{\PP_{\estmargenv{#1}}}
\newcommand{\EEboth}{\EE_{\env{},\policy{}}}
\newcommand{\EEbothucb}{\EE_{\env{},\UCBsymsmall}}
\newcommand{\EEbothc}{\EE_{\env{},\CUCBsymsmall}}
\newcommand{\EEbothhc}{\EE_{\env{},\HCUCBsymsmall}}
\newcommand{\PPboth}{\PP_{\env{},\policy{}}}
\newcommand{\actionval}{a}
\newcommand{\refactionval}{a_0}
\newcommand{\actionvaldum}{a'}
\newcommand{\postcontextval}{z}
\newcommand{\response}[1]{Y_{#1}}
\newcommand{\responserv}[2]{Y_{#1}(#2)}
\newcommand{\postcontext}[1]{Z_{#1}}
\newcommand{\postcontextrv}[2]{Z_{#1}{(#2)}}
\newcommand{\actionrv}[1]{A_{#1}}
\newcommand{\actionrvucb}[1]{A^{\UCBsymsmall}_{#1}}
\newcommand{\actionrvc}[1]{A^{\CUCBsymsmall}_{#1}}
\newcommand{\actionrvhc}[1]{A^{\HCUCBsymsmall}_{#1}}
\newcommand{\martingalerv}[1]{M_{#1}}
\newcommand{\historyrv}[1]{H_{#1}}
\newcommand{\regretalgo}[1]{R_{\env{},\algo(\actionspace,\postcontextspace,\margdist,#1)}(#1)}
\newcommand{\regretadaalgo}[1]{R_{\env{},\adaalgo(\actionspace,\postcontextspace,\margdist,#1)}(#1)}
\newcommand{\regretalgoadapt}[1]{R_{\env{},\algo(\actionspace,\postcontextspace,\env{}(\postcontext{}),#1)}(#1)}
\newcommand{\regretboth}[1]{R_{\env{},\policy{}}(#1)}
\newcommand{\regretucb}[1]{R_{\env{},\UCBsymsmall}(#1)}
\newcommand{\regretc}[1]{R_{\env{},\CUCBsymsmall}(#1)}
\newcommand{\regrethc}[1]{R_{\env{},\HCUCBsymsmall}(#1)}
\newcommand{\regretcustom}[3]{R_{#1,#2}(#3)}
\newcommand{\borelset}{\Bb}
\newcommand{\dummyvecval}{\mathbf{u}}
\newcommand{\condvec}{\mathbf{Z}}
\newcommand{\condvecval}{\mathbf{z}}
\newcommand{\intervenevec}{\mathbf{A}}
\newcommand{\intervenevecval}{\mathbf{a}}
\newcommand{\nodevec}{\mathbf{V}}
\newcommand{\nodevecval}{\mathbf{v}}
\newcommand{\noderv}[1]{V_{#1}}
\newcommand{\nodeval}[2]{v_{#1}^{#2}}
\newcommand{\nodespace}[1]{\Vv_{#1}}
\newcommand{\numnodevals}[1]{k_{#1}}
\newcommand{\graphdists}{\probspace_{\nodevec}}
\newcommand{\graphset}{\Gg}
\newcommand{\numnodes}{M}
\newcommand{\intervenedist}[1]{p_{#1}}
\newcommand{\intervenedistnull}[1]{p'_{#1}}
\newcommand{\basedist}{p}
\newcommand{\basecontextdist}{q}
\newcommand{\probparents}[2]{\mathrm{Pa}^{#1}_{#2}}
\newcommand{\graphparents}[2]{\mathrm{Pa}^{#1}_{#2}}
\newcommand{\lowerconc}{F}
\newcommand{\lowerprobconc}{G}
\newcommand{\actionconc}[1]{E^{\actionspace}_{#1}}
\newcommand{\contextconc}[1]{E^{\postcontextspace}_{#1}}
\newcommand{\margconc}{E^{\nu}}
\newcommand{\concevent}{E}
\newcommand{\switchtime}{\tau_1^{\HCUCBsymsmall}}
\newcommand{\exploretime}{\tau_0^{\HCUCBsymsmall}}
\newcommand{\highprobparam}[1]{\delta_{#1}}
\newcommand{\doaction}{\mathrm{do}}
\newcommand{\priormargenv}[1]{{\tilde \nu}_{#1}}
\newcommand{\estmargenv}[1]{{\hat \nu}_{#1}}
\newcommand{\priormargscale}{\eps}
\newcommand{\optactionval}{a^*_{\env{}}}
\newcommand{\reward}{Y}
\newcommand{\genreward}{Y^{\circ}}
\newcommand{\gencontextreward}{V^{\circ}}
\newcommand{\estcondmeancontext}[1]{{\hat\mu}^{\postcontextspace}_{#1}}
\newcommand{\estcondmeanaction}[1]{{\hat\mu}^{\actionspace}_{#1}}
\newcommand{\estcondmeanboth}[1]{{\hat\mu}^{\actionspace,\postcontextspace}_{#1}}
\newcommand{\margmean}{p^{\actionspace}}
\newcommand{\condmean}{\mu^{\actionspace,\postcontextspace}}
\newcommand{\numobscontext}[1]{\TT^{\postcontextspace}_{#1}}
\newcommand{\numobsaction}[1]{\TT^{\actionspace}_{#1}}
\newcommand{\numobsboth}[1]{\TT^{\actionspace,\postcontextspace}_{#1}}
\newcommand{\actionspaceidx}[1]{\actionspace_{#1}}
\newcommand{\actionvalidx}{i}
\newcommand{\postcontextspaceidx}[1]{\postcontextspace_{#1}}
\newcommand{\postcontextvalidx}{j}
\newcommand{\ucbaction}[1]{\mathrm{UCB}^{\actionspace}_{#1}}
\newcommand{\ucbdiff}[1]{\mathrm{D}^{\actionspace}_{#1}}
\newcommand{\ucbcontext}[1]{\mathrm{UCB}^{\postcontextspace}_{#1}}
\newcommand{\pseudoucb}[1]{\widetilde{\mathrm{UCB}}_{#1}}
\renewcommand{\complement}{'}
\preprint{
\title{Adaptively Exploiting \unobservedNames{} with Causal Bandits}
\author{Blair Bilodeau\footnote{Correspondence to \url{blair.bilodeau[at]mail.utoronto.ca}}\,, Linbo Wang, and Daniel M. Roy\\University of Toronto, Department of Statistical Sciences}
\date{}

\setlength{\parindent}{0pt}
\setlength{\parskip}{6pt}
\usepackage{titlesec}
\newcommand{\addperiod}[1]{#1.}
\titleformat{\paragraph}[runin]
  {\normalfont\normalsize\bfseries}
  {\theparagraph}
  {1em}
  {\addperiod}
\titlespacing{\section}{0pt}{\parskip}{0pt}
\titlespacing{\subsection}{0pt}{\parskip}{0pt}
\titlespacing{\subsubsection}{0pt}{\parskip}{0pt}
\usepackage[letterpaper, margin=1in]{geometry}
\usepackage[hide=true,setmargin=true,marginparwidth=1in]{marginalia}
}
\title{Adaptively Exploiting \unobservedNames{}\\ with Causal Bandits}
\author{%
  Blair Bilodeau\thanks{Correspondence to blair.bilodeau@mail.utoronto.ca} \\
  University of Toronto \\
  \And
  Linbo Wang \\
  University of Toronto \\
  \And
  Daniel M. Roy \\
  University of Toronto \\
}
\icmltitlerunning{Adaptively Exploiting \unobservedNames{} with Causal Bandits}
\newcommand{\manualendproof}{\hfill\qedsymbol\\[2mm]}
\begin{document}

\icml{
\twocolumn[
\icmltitle{Adaptively Exploiting \unobservedNames{} with Causal Bandits}

\icmlsetsymbol{equal}{*}

\begin{icmlauthorlist}
\icmlauthor{Blair Bilodeau}{toronto,vector}
\icmlauthor{Linbo Wang}{toronto}
\icmlauthor{Daniel M. Roy}{toronto,vector}
\end{icmlauthorlist}

\icmlaffiliation{toronto}{Department of Statistical Sciences, University of Toronto}
\icmlaffiliation{vector}{Vector Institute}

\icmlcorrespondingauthor{Blair Bilodeau}{blair.bilodeau[at]mail.utoronto.ca}

\icmlkeywords{Bandits, Sequential Causal, Adaptive, Confounding}

\vskip 0.3in
]

\printAffiliationsAndNotice{}  %
}

\preprint{
\maketitle
\vspace{-30pt} %
}
\neurips{\maketitle}
\begin{abstract}

Multi-armed bandit problems provide a framework to identify the optimal \genactionname{} 
over a sequence of repeated experiments.
Without additional assumptions, minimax optimal performance (measured by cumulative regret) is well-understood.
With access to additional observed variables that $d$-separate the \genactionname{} from the outcome (i.e., they are a \emph{\unobservedname{}}), recent ``causal bandit'' algorithms provably incur less regret.
However, in practice it is desirable to be agnostic to whether observed variables are a \unobservedname{}. 
Ideally, an algorithm should be \emph{adaptive}; that is, perform nearly as well as an algorithm with oracle knowledge of the presence or absence of a \unobservedname{}.
In this work, we formalize and study this notion of adaptivity, and provide a novel algorithm that \emph{simultaneously} achieves (a) optimal regret when a \unobservedname{} is observed, improving on classical minimax algorithms, and (b) significantly smaller regret than recent causal bandit algorithms when the observed variables are not a \unobservedname{}.
Crucially, our algorithm does not require any oracle knowledge of whether a \unobservedname{} is observed.
We also generalize this adaptivity to other conditions, such as the front-door criterion.
\end{abstract}

\section{Introduction}
\label{sec:introduction}

Given a set of \genactionnames{} (\actionnames{}) for a specific experiment, we are interested in learning the best one with respect to some outcome of interest.
Without knowledge of specific causal structure relating the observed variables,
this task is impossible from solely observational data 
\noticml{(c.f.\ Theorem~4.3.2 of \citep{pearl09causality2}).}
\icml{\citep[c.f.\ Theorem~4.3.2 of][]{pearl09causality2}.}
Instead, we seek
the most efficient way to sequentially choose \genactionnames{} for \iid{} repetitions of the experiment, where the main challenge is that we cannot observe the counterfactual effect of \genactionnames{} we did not choose.
Without any structural assumptions beyond \iid{}, one can always learn the best \genactionname{} with high confidence by performing each \genactionname{} a sufficient number of times \citep{bubeck09bestarm}.
In the presence of additional structure---such as a causal graph on observables---this strategy may result in performing suboptimal interventions unnecessarily often.
However, the presence of such structure is often unverifiable, and incorrectly supposing that it exists may catastrophically mislead the \statname{}.
Thus, a fundamental question arises: Can we avoid strong, unverifiable assumptions while simultaneously performing fewer harmful \genactionnames{} when advantageous structure exists?

A natural framework in which to study this question is that of (multi-armed) bandit problems: 
Over a sequence of interactions with the environment, the \statname{} chooses an \actionname{} using their experience of the previous interactions, and then observes the \emph{reward} of the chosen \actionname{}. 
The goal is to achieve comparable performance with what would have been achieved if the \statname{} had chosen the (unknown) optimal \actionname{} in each interaction.
Formally, performance is measured by \emph{regret}, which is the difference of the \emph{cumulative reward} incurred by the \statname{} compared to the optimal \actionname{}.
In this partial-information setting,
\regretname{} induces the classical trade-off between \emph{exploration} (choosing potentially suboptimal \actionnames{} to learn if they're optimal) and \emph{exploitation} (choosing the \actionname{} that empirically appears the best).
In contrast, other measures of performance (e.g., only identifying the average treatment effect or the best \actionname{} at the end of all interactions) do not penalize the \statname{} for performing suboptimal \actionnames{} during exploration, and consequently are insufficient to study our question of interest. 

For an \emph{action set} $\actionspace$ and a \emph{time horizon} 
$T$, 
the minimax optimal \regretname{} for bandits without any assumptions on the data (worst-case) is 
$\tilde\Oo(\sqrt{\abs{\actionspace} T})$ \citep{auer02nonstochastic}, and is achieved by many \algonames{} \citep{bandit20book}.
Recently, \citet{lu20causal} showed that, under additional causal structure, 
a new \algoname{} (\CUCB{}) can achieve improved regret.
In particular, if the \statname{} has access to a variable $\postcontext{}$---taking values in a finite set $\postcontextspace$---that 
\emph{$d$-separates} \citep{pearl09causality2} the \genactionname{} and the reward, 
as well as the 
interventional distribution
of $\postcontext{}$ for each $\actionval\in\actionspace$,
\CUCB{} achieves $\tilde\Oo(\sqrt{\abs{\postcontextspace} T})$ regret.
However, as we show, the performance of \CUCB{} when the $d$-separation assumption fails is orders of magnitude worse than that of \UCB{}.
Is \emph{strict adaptation} possible?
That is, is there an \algoname{} that recovers the guarantee of \CUCB{} when $\postcontext{}$ is a \unobservedname{} and the guarantee of \UCB{} in all other \envnames{}, without advance knowledge of whether $\postcontext{}$ is a \unobservedname{}?

As of yet, there is no general theory of adaptivity in the bandit setting.
The closest we have to a general method is
 the \Corral{} algorithm and its offspring \citep{agarwal17corral,arora21corral}. 
 \Corral{} uses online mirror descent to combine ``base'' bandit algorithms,
 but requires 
that each of the base algorithms is ``stable'' when operating on importance-weighted observations.
Unfortunately, while \UCB{} is stable, simulations reveal this not to be the case for \CUCB{}.
This presents a barrier to adapting to causal structure via \Corral{}-like techniques,
and raises the question of whether there is a new way to achieve 
adaptivity.

\textbf{Contributions.}
We introduce the \emph{\propertyname{}} property for bandit environments: informally, there exists a random variable $\postcontext{}$ such that the conditional distribution of the reward given $\postcontext{}$ is the same for each action $\actionval\in\actionspace$.
We show that the \propertyname{} property is (a) strictly weaker than the assumption of \citet{lu20causal} (in their proofs, they actually assume that all causal parents of the reward are observed); (b) equivalent to $\postcontext{}$ being a \unobservedname{} when $\actionspace$ is all \genactionnames{}; and (c) implied by the front-door criterion \citep{pearl09causality2} when $\actionspace$ is all \genactionnames{} except the \emph{null \genactionname{}} (i.e., a pure observation).
We then prove that any \algoname{} that achieves optimal worst-case regret must incur suboptimal regret for some \propertyname{} \envname{}, and hence strict adaptation to the \propertyname{} property  is impossible.
Despite this, 
we introduce the \HCUCBfull{} \algoname{} (\HCUCB{}), which provably
(a) achieves non-vacuous (sublinear in $T$) \regretname{} at all times without any assumptions,
(b) recovers the improved performance of \CUCB{} for \propertyname{} \envnames{},
and (c) performs as well as \UCB{} in certain \envnames{} where \CUCB{} 
and related \algonames{}
\noticml{(such as those studied in \citep{lu21unknown,nair21budget})}
\icml{\citep[such as those studied in][]{lu21unknown,nair21budget}} 
incur linear \regretname{}.
Empirically, we observe these performance improvements 
on simulated data.

\textbf{Impact.}
Recently, multiple works have developed causal bandit \algonames{} that achieve improved performance in the presence of advantageous causal relationships 
\noticml{(initiated by \citet{bareinboim15causal} and \citet{lattimore16causal}; see \cref{sec:literature} for more literature).}
\icml{\citep[initiated by][see \cref{sec:literature} for more literature]{bareinboim15causal,lattimore16causal}.} 
Further, the last decade has seen a flurry of work in bandits on designing \algonames{} that recover worst-case \regretname{} bounds while simultaneously performing significantly better in advantageous settings, without requiring advance knowledge of which case holds \citep[e.g.,][]{bubeck12bandits,seldin14sao,thune18easy,lykouris18corruption,abbasiyadkori18bobwarm}. 
However, to the best of our knowledge, no existing work studies \algonames{} that achieve \emph{adaptive regret guarantees with respect to causal structure}.
The present work provides a framework that expands the study of adaptive decision making to the rich domain of causal inference.

\section{Preliminaries}\label{sec:bandit-defs}

\subsection{Problem Setting}

We consider a general extension of the usual  bandit setting where, in addition to a reward corresponding to the \actionname{} played, the \statname{} observes some additional variables \emph{after} choosing their \actionname{}; we call this the \emph{\postcontextname{}}.
This is distinct from the contextual bandit problem, where the \statname{} has access to side-information \emph{before} choosing their \actionname{}.
\preprint{The causal interpretation of our setting is that the \actionname{} chosen by the \statname{} not only impacts the reward, but also other aspects of the \envname{} that may be used to more efficiently share information between \actionnames{}.}

Let $\responsespace = [0,1]$ be the \emph{reward space}\footnote{Our results hold for $\responsespace=\Reals$ using sub-Gaussian rewards with bounded mean at the expense of constants.}, $\postcontextspace$ be a finite set of values for the \postcontextname{} to take,
and $\probspace(\postcontextspace \times \responsespace)$ denote the set of joint probability distributions. 
For any $\basedist \in \probspace(\postcontextspace \times \responsespace)$ and $(\postcontextspace, \responsespace)$-valued random variable $(\postcontext{},\response{}\,)$, let $\basedist(\postcontext{})$ and $\basedist(\response{}\setdelim \postcontext{})$ denote the marginal and conditional distributions respectively.
Let $\EE_\basedist$ and $\PP_\basedist$ denote expectation and probability operators
under $\basedist$.

The \emph{\stochgamename{}} proceeds as follows: For each round $t\in[T]$, the \statname{} selects $\actionrv{t}\in\actionspace$ 
while simultaneously $\{(\postcontextrv{t}{\actionval},\responserv{t}{\actionval}): \actionval\in\actionspace\}$ are independently sampled from the \emph{\envname{}}, which is any family of distributions $\env{} = \{\env{\actionval}: \actionval\in\actionspace\} \in \envspace$ indexed by the \actionname{} set. 
The \statname{} only observes $(\postcontextrv{t}{\actionrv{t}},\responserv{t}{\actionrv{t}})$ and receives reward $\responserv{t}{\actionrv{t}}$.
From a causal perspective,
$(\postcontextrv{t}{\actionval},\responserv{t}{\actionval})_{\actionval\in\actionspace}$ corresponds to the potential outcome vector, and under causal consistency, $(\postcontextrv{t}{\actionrv{t}},\responserv{t}{\actionrv{t}})$ corresponds to the observed data $(\postcontext{t},\response{t})$ under the chosen intervention $\actionrv{t}$.

The \emph{\historyname{} up to round $t$} is the random variable $\historyrv{t} = (\actionrv{s}, \postcontextrv{s}{\actionrv{s}}, \responserv{s}{\actionrv{s}})_{s\in[t]}$.
\noticml{
A \emph{\policyname{}} is a sequence of measurable maps from the \historyname{} to the \actionname{} set, denoted 
\*[
	\policy{} = (\policy{t})_{t\in[T]}
	\in 
	\policyspace(\actionspace, \postcontextspace, T)
	\defas
	\prod_{t=1}^T \Big\{(\actionspace\times \postcontextspace\times \responsespace)^{t-1} \to \actionspace\Big\}.
\]
}
\icml{
A \emph{\policyname{}} is a sequence of measurable maps from the \historyname{} to the \actionname{} set, denoted by $\policy{} = (\policy{t})_{t\in[T]} \in \policyspace(\actionspace, \postcontextspace, T)$, where
\*[
	\policyspace(\actionspace, \postcontextspace, T)
	\defas
	\prod_{t=1}^T \Big\{(\actionspace\times \postcontextspace\times \responsespace)^{t-1} \to \actionspace\Big\}.
\]
}
The \statname{} chooses a policy in advance to select their \actionname{} on each round according to $\actionrv{t} = \policy{t}(\historyrv{t-1})$. 
Clearly, an \envname{} $\env{}$ and a \policyname{} $\policy{}$ together define a joint probability distribution on $(\actionrv{t}, \postcontextrv{t}{\actionval}, \responserv{t}{\actionval})_{t\in[T], \actionval\in\actionspace}$ (which includes the ``counterfactuals'' not seen by the \playername{}).  Let $\EEboth$ denote expectation under this joint distribution. 
The performance of a \policyname{} under an \envname{} is quantified by the \emph{\regretname{}}
\*[
	\regretboth{T}
	= T \cdot \max_{\actionval\in\actionspace}\EEenv{\actionval}[\response{} \,]
	- \EEboth \sum_{t=1}^T \EEenv{\actionrv{t}}[\response{} \,].
\]

\subsection{Specific Algorithms}\label{sec:algos}

Classical  bandit algorithms take $\abs{\actionspace}$ and $T$ as inputs.
The dependence on $T$ can often be dropped using the doubling trick or more sophisticated techniques such as decreasing learning rates, but we do not focus on these refinements in this work, instead allowing $T$ as an input. 
In order to account for the additional information provided by the \postcontextname{}, 
we also consider algorithms that take $\abs{\postcontextspace}$ as an input. 
By restricting dependence to only the cardinality of $\actionspace$ and $\postcontextspace$, we explicitly suppose that there is no additional structure to exploit on these spaces; much work in the bandit literature has focused on such structure through linear or Lipschitz rewards, but we defer these extensions to future work in favour of focusing on adaptivity.
For notational simplicity, we denote algorithmic dependence on $\actionspace$ or $\postcontextspace$ even though the dependence is actually through their cardinality (i.e., the labellings of items in the sets are arbitrary).

In the causal bandit literature \citep{lattimore16causal,lu20causal,nair21budget}, it is common to suppose that the algorithm also receives distributional information relating actions to intermediate variables. 
In particular, if the (unknown) \envname{} is $\env{}$, prior work supposes that the algorithm has access to the (interventional) marginal distributions $\env{}(\postcontext{}) = \{\env{\actionval}(\postcontext{}): \actionval\in\actionspace\}$. 
In this work, we suppose instead that the algorithm has access to a collection of \emph{approximate} marginal distributions $\priormargenv{}(\postcontext{}) = \{\priormargenv{\actionval}(\postcontext{}): \actionval\in\actionspace \}$; for example, these could be an estimate of $\env{}(\postcontext{})$ that was learned offline.
Ideally, $\priormargenv{}(\postcontext{})$ will be close to $\env{}(\postcontext{})$, but our novel method is entirely adaptive to this assumption: regardless of how well $\priormargenv{}(\postcontext{})$ approximates $\env{}(\postcontext{})$, \HCUCB{} incurs sublinear regret.

We now introduce additional notation to define the algorithms of interest in this work.
Suppose that $\actionspace$, $\postcontextspace$, $\priormargenv{}(\postcontext{})$, and $T$ are all fixed in advance, as well as a confidence parameter $\highprobparam{} = \highprobparam{T} \in(0,1)$. 
For each $t \in [T]$, $\postcontextval\in\postcontextspace$, and $\actionval\in\actionspace$, define the number of the first $t$ rounds on which $\postcontextval$ was observed by $\numobscontext{t}(\postcontextval) = 1 \vee \sum_{s=1}^t \ind\{\postcontextrv{s}{\actionrv{s}}=\postcontextval\}$ (where $a \vee b$ = $\max\{a,b\}$), and similarly the number of rounds on which $\actionval$ was chosen by $\numobsaction{t}(\actionval) = 1 \vee \sum_{s=1}^t \ind\{\actionrv{s}=\actionval\}$. 
Further, define 
the empirical mean estimate for the reward under the distribution induced by choosing the \actionname{} $\actionval$ as
$\estcondmeanaction{t}(\actionval) = [\numobsaction{t}(\actionval)]^{-1} \sum_{s=1}^t \responserv{s}{\actionrv{s}} \ind\{\actionrv{s}=\actionval\}$
and
the empirical conditional mean estimate for the reward given that $\postcontextval$ was observed as
$\estcondmeancontext{t}(\postcontextval) = [\numobscontext{t}(\postcontextval)]^{-1} \sum_{s=1}^t \responserv{s}{\actionrv{s}} \ind\{\postcontextrv{s}{\actionrv{s}}=\postcontextval\}$.
Define $\ucbaction{t}(\actionval) = \estcondmeanaction{t}(\actionval) + \sqrt{\log (2/\highprobparam{}) / (2\numobsaction{t}(\actionval))}$, $\ucbcontext{t}(\postcontextval) = \estcondmeancontext{t}(\postcontextval) + \sqrt{\log (2/\highprobparam{}) / (2\numobscontext{t}(\postcontextval))}$, and $\pseudoucb{t}(\actionval) = \sum_{\postcontextval\in\postcontextspace} \ucbcontext{t}(\postcontextval) \PPpriorenv{\actionval}[\postcontext{}=\postcontextval]$.

Using these objects, we define three algorithms, each of which produces actions that are $\historyrv{t}$-measurable. 
The \UCBfull{} algorithm 
\noticml{(\UCB{}, \citep{auer02nonstochastic})}
\icml{\citep[\UCB{},][]{auer02nonstochastic}}
is defined by $\actionrvucb{t+1} = \argmax_{\actionval\in\actionspace} \ucbaction{t}(\actionval)$, and the \CUCBfull{} algorithm 
\noticml{(\CUCB{}, \citep{lu20causal})} 
\icml{\citep[\CUCB{},][]{lu20causal}}
is defined by $\actionrvc{t+1} = \argmax_{\actionval\in\actionspace} \pseudoucb{t}(\actionval)$, where ties are broken by using some predetermined ordering on $\actionspace$.
Finally, we define a new combination of these two methods, which we call the \HCUCBfull{} algorithm (\HCUCB) and describe precisely in \cref{alg:hyp-test}; we denote its actions by $\actionrvhc{t+1}$.

\SetKwIF{If}{ElseIf}{Else}{if}{}{else if}{else}{end if}%
\SetKwFor{While}{while}{}{end while}%
\SetKwRepeat{Do}{do}{while}

\begin{algorithm}[h]
\SetAlgoLined
\caption{\HCUCB($\actionspace$, $\postcontextspace$, $T$, $\priormargenv{}(\postcontext{})$)}\label{alg:hyp-test}
\textbf{do} Play each $\actionval\in\actionspace$ for $\lceil 4 \sqrt{T} \,/\, \abs{\actionspace} \, \rceil$ rounds, and let $\estmargenv{\actionval}(\postcontext{})$ be the MLE of $\env{\actionval}(\postcontext{})$\\
	\If{$\sup_{\actionval\in\actionspace} \sum_{\postcontextval\in\postcontextspace} \abs[2]{\PPpriorenv{\actionval}[\postcontext{} = \postcontextval] - \PPestenv{\actionval}[\postcontext{} = \postcontextval]} > 2 T^{-1/4}\sqrt{\abs{\actionspace}\abs{\postcontextspace}\log T}$}{
		\textbf{replace} $\priormargenv{}(\postcontext{}) \longleftarrow \estmargenv{}(\postcontext{})$
	}
\textbf{do} Play each $\actionval\in\actionspace$ for $\lceil \sqrt{T} \,/\, \abs{\actionspace} \, \rceil$ rounds\\
\textbf{set} flag = True\\
\While{$t \leq T$}{
	\eIf{$\mathrm{flag}$}{
		\tcc{Check if either of the two conditions fail}
		\textbf{set} $\ucbdiff{t-1}(\actionval)  = \ucbaction{t-1}(\actionval) - \pseudoucb{t-1}(\actionval) + \frac{\sqrt{\abs{\actionspace}\abs{\postcontextspace}\log T}}{T^{1/4}}$\\
		\For{$\actionval\in\actionspace$}{
			\If{\textbf{\upshape not} $-2\sum\limits_{\postcontextval\in\postcontextspace} \sqrt{\tfrac{\log T}{\numobscontext{t-1}(\postcontextval)}}\, \PPpriorenv{\actionval}[\postcontext{}{}=\postcontextval] \leq \ucbdiff{t-1}(\actionval) \leq 2\sqrt{\tfrac{\log T}{\numobsaction{t-1}(\actionval)}} + 2\frac{\sqrt{\abs{\actionspace}\abs{\postcontextspace}\log T}}{T^{1/4}}$}{
				\textbf{set} flag = False;
				\textbf{break}
			}
		}
		\tcc{If conditions pass, play \CUCB{}}
		\eIf{$\mathrm{flag}$}{
			\textbf{set} $\actionrvhc{t} = \actionrvc{t}$;
		}{
			\textbf{set} $\actionrvhc{t} = \actionrvucb{t}$;
		}
	}{
		\tcc{If conditions ever fail, play \UCB{} forever}
		\textbf{set} $\actionrvhc{t} = \actionrvucb{t}$;
	}
}
\end{algorithm}
\newcommand{\True}{\textbf{true}}

Heuristically, \HCUCB{} has an initial exploration period to ensure that $\priormargenv{}(\postcontext{})$ is sufficiently accurate---if not, it is replaced with the maximum likelihood estimate (MLE) of the marginals---and then optimistically plays \CUCB{} until there is sufficient evidence that the \envname{} is not \propertyname{}.
The switch from \CUCB{} to \UCB{} is decided by a hypothesis test performed on each round, which uses the confidence intervals that will hold if the \envname{} is \propertyname{}.
When the arm mean estimates of \UCB{} and \CUCB{} disagree, this provides evidence that the \envname{} is not \propertyname{}, and the evidence is considered sufficient to switch when the size of the disagreement is large compared to the size of the confidence intervals themselves.
As we illustrate in the proof of the regret bounds, with high probability this test will not induce a switch for a \propertyname{} \envname{}, and will sufficiently limit the regret incurred by \CUCB{} if the \envname{} is not \propertyname{}.

\section{\propertyNAME{} Property}\label{sec:causal-theorems}

We now formalize the main property that \HCUCB{} will adaptively exploit.

\begin{definition}\label{def:main-property}
An \envname{} \emph{$\env{}\in\envspace$ is \propertyname{}} if and only if there exists $\basedist\in\probspace(\postcontextspace\times\responsespace)$ such that for each $\actionval\in\actionspace$, $\env{\actionval}(\postcontext{}) \ll \basedist(\postcontext{})$ and
$\env{\actionval}(\response{} \setdelim \postcontext{}) = \basedist(\response{} \setdelim \postcontext{})$ $\basedist$-a.s.
\end{definition}

This definition does not require any causal terminology to define or use for \regretname{} bounds, but we now instantiate it for the causal setting.
For a collection of finite random variables $\nodevec$ and a (potentially) continuous random variable $\reward{}$, let $\graphdists$ be the set of all joint probability distributions with strictly positive marginal probabilities on $\nodevec$.
Fix a DAG $\graphset$ on $(\nodevec, \reward{})$ such that $\reward{}$ is a leaf and two disjoint sets $\condvec \subseteq \nodevec$ and $\intervenevec \subseteq \nodevec$ such that $\graphparents{\graphset}{\intervenevec} \subseteq \nodevec\setminus\condvec$.
Let $\actionspace$ be the set of all possible $\doaction$ interventions on $\intervenevec$, and for each $\actionval\in\actionspace$ let $\intervenedist{\actionval}$ denote the interventional distribution (\cref{def:causal-intervention}).
This structure suggests a graphical analogue of the \propertyname{} property.

\begin{definition}
For any DAG $\graphset$ and 
$\actionspace'\subseteq\actionspace$, \emph{$(\graphset, \actionspace')$ is \propertyname{}} if and only if for all $\basedist\in\graphdists$ that are Markov relative to $\graphset$, $\{\intervenedist{\actionval}(\condvec, \response{}\,): \actionval \in \actionspace'\}$ is \propertyname{}.
\end{definition}

We now connect the \propertyname{} property to $d$-separation (\cref{def:d-sep}) and the front-door criterion (\cref{def:front-door}).
All proofs are deferred to \cref{sec:causal-proofs}, along with standard notation and definitions from the causal literature.

\begin{theorem}\label{fact:benign-equals-dsep}
$\condvec$ $d$-separates $\response{}$ from $\intervenevec$ on $\graphset$ if and only if $(\graphset, \actionspace)$ is \propertyname{}.
\end{theorem}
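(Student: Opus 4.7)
The plan is to treat each direction of the biconditional separately.

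For the forward direction ($\Rightarrow$), fix any $\basedist \in \graphdists$ Markov relative to $\graphset$, and take the anchor $\basedist^*$ to be the marginal of $\basedist$ on $(\condvec, \response{}\,)$. The absolute continuity clause $\intervenedist{\actionval}(\condvec) \ll \basedist^*(\condvec)$ is automatic since $\graphdists$ requires strict positivity of marginals on $\nodevec \supseteq \condvec$. The substance lies in establishing $\intervenedist{\actionval}(\response{} \mid \condvec) = \basedist(\response{} \mid \condvec)$ $\basedist^*$-a.s.\ for each $\actionval \in \actionspace$. I would obtain this by first observing that $d$-separation is preserved under edge deletion---removing edges cannot introduce new paths or modify collider status along existing ones---so $\condvec$ also $d$-separates $\response{}$ from $\intervenevec$ in the mutilated graph $\graphset_{\overline{\intervenevec}}$. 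Do-calculus Rule 3 (the addition/deletion of interventions rule), combined with the hypothesis $\graphparents{\graphset}{\intervenevec} \cap \condvec = \emptyset$, then delivers the needed equality.

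For the reverse direction ($\Leftarrow$), I would argue the contrapositive. Suppose $\condvec$ fails to $d$-separate $\response{}$ from $\intervenevec$ in $\graphset$. Then there is some $A_0 \in \intervenevec$ and a path between $A_0$ and $\response{}$ that is unblocked given $\condvec$. Using the classical result that generic parameterizations of a DAG yield distributions faithful to the graph, I would construct a $\basedist \in \graphdists$ in which this unblocked path transmits nontrivial dependence into the family of interventional distributions. Specifically, one can arrange that $\intervenedist{\actionval_1}(\response{} \mid \condvec) \neq \intervenedist{\actionval_2}(\response{} \mid \condvec)$ on a set of positive $\condvec$-measure whenever $\actionval_1$ and $\actionval_2$ differ at $A_0$. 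Since no single $\basedist^* \in \probspace(\postcontextspace \times \responsespace)$ can match both, $(\graphset, \actionspace)$ fails to be \propertyname{}.

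I expect the main obstacle to be the forward direction. Translating a purely graphical $d$-separation into a pointwise equality between an observational conditional and an interventional one is delicate: do-calculus Rule 3 must be invoked under precisely the right mutilation, and the assumption $\graphparents{\graphset}{\intervenevec} \cap \condvec = \emptyset$ is what guarantees that $\condvec$ remains a valid $d$-separator after the relevant edges are cut (and, in particular, that $\condvec$ is not itself perturbed by the intervention in a way that breaks the conditional equality). A secondary subtlety in the reverse direction is verifying that the $d$-connecting path survives the mutilation cutting incoming edges to $\intervenevec$---one must isolate a path whose effect is not merely back-door confounding (which the intervention would destroy), so that the interventional conditionals genuinely disagree.
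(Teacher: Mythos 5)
Your forward direction is correct and takes a genuinely different route from the paper: where the paper derives the adjustment (``causal effect'') formula over $\graphparents{\graphset}{\intervenevec}$ and then uses an auxiliary lemma that $\condvec$ also $d$-separates $\response{}$ from $(\intervenevec,\graphparents{\graphset}{\intervenevec})$, you invoke soundness of do-calculus Rule~3 after noting that $d$-separation is monotone under edge deletion. Both work; the paper's computation is elementary and self-contained, while yours imports Rule~3 (whose condition is $d$-separation in $\graphset_{\overline{\intervenevec(\condvec)}}$, a graph intermediate between $\graphset_{\bar\intervenevec}$ and $\graphset$, so your monotonicity observation does cover it). Your stated role for the hypothesis $\graphparents{\graphset}{\intervenevec}\cap\condvec=\emptyset$ is slightly off---in the Rule-3 route it is not what keeps $\condvec$ a valid separator after mutilation---but this is harmless.

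The reverse direction has a genuine gap. Your plan is to exhibit two interventions $\actionval_1,\actionval_2$ differing at some $A_0$ with $\intervenedist{\actionval_1}(\response{}\setdelim\condvec)\neq\intervenedist{\actionval_2}(\response{}\setdelim\condvec)$, and you flag as a ``secondary subtlety'' that the $d$-connecting path must not be purely back-door. That subtlety is in fact fatal to the plan as stated: when $d$-separation fails in $\graphset$ only via back-door paths into $\intervenevec$ (e.g.\ graph (c) of \cref{fig:causal}: $A\to Z\to Y$ with $U\to A$ and $U\to Y$), every non-null intervention severs those paths, and all non-null interventional conditionals $\intervenedist{\actionval}(\response{}\setdelim\condvec)$ coincide---this is precisely the content of \cref{fact:benign-equals-dsep-null}, since $\condvec$ does $d$-separate on $\graphset_{\bar\intervenevec}$ there. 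So there is no non-back-door unblocked path to ``isolate,'' yet $(\graphset,\actionspace)$ is still not \propertyname{}. The witness must instead involve the \nullaction{} \genactionname{}, which is a member of $\actionspace$ and whose induced conditional is the observational $\basedist(\response{}\setdelim\condvec)$: the paper's converse produces a Markov-relative $\basedist$ violating $\response{}\perp\intervenevec\setdelim\condvec$ and shows, via the same adjustment formula, that $\intervenedist{\actionval}(\response{}\in\borelset\setdelim\condvec=\condvecval)\neq\basedist(\response{}\in\borelset\setdelim\condvec=\condvecval)$ for some non-null $\actionval$, so no single anchor distribution can serve both the null and non-null actions. Without this comparison against the null action, your argument establishes at best the converse of the $\actionspacenull$ variant, not of the stated theorem.
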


This equivalence is a strict specialization of the \propertyname{} property to the causal setting. In particular, to define \propertyname{}, we need not require all possible \genactionnames{} be allowed.
 Define $\actionspacenull$ to be $\actionspace$ with the \nullaction{} (observational) \genactionname{} removed, and let $\graphset_{\bar\intervenevec}$ denote $\graphset$ with all edges directed into $\intervenevec$ removed.

\begin{theorem}\label{fact:benign-equals-dsep-null}
$\condvec$ $d$-separates $\response{}$ from $\intervenevec$ on $\graphset_{\bar\intervenevec}$ if and only if 
$(\graphset, \actionspacenull)$ is \propertyname{}.
\end{theorem}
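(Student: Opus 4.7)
The plan is to reduce the claim to \cref{fact:benign-equals-dsep} applied to the mutilated graph $\graphset_{\bar\intervenevec}$, leveraging the fact that $\doaction$-interventions on $\intervenevec$ are insensitive to edges pointing into $\intervenevec$. First I would observe that for any non-null $\actionval \in \actionspacenull$, the truncated factorization formula shows that $\intervenedist{\actionval}(\condvec, \response{}\,)$ depends only on the conditionals $\basedist(V \mid \parents^{\graphset}(V))$ for $V \in \nodevec \setminus \intervenevec$ together with $\basedist(\response{} \mid \parents^{\graphset}(\response{}\,))$. Since $\parents^{\graphset}(V) = \parents^{\graphset_{\bar\intervenevec}}(V)$ for all such $V$, the induced family $\{\intervenedist{\actionval}(\condvec,\response{}\,): \actionval\in\actionspacenull\}$ arising from a Markov $\basedist$ on $\graphset$ coincides with the analogous family computed on $\graphset_{\bar\intervenevec}$ from any Markov $\tilde\basedist$ sharing the same non-$\intervenevec$ conditionals. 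Conversely, every Markov distribution on $\graphset_{\bar\intervenevec}$ extends to a Markov distribution on $\graphset$ by appending any positive conditional for $\intervenevec$ given its $\graphset$-parents. Hence $(\graphset, \actionspacenull)$ is \propertyname{} if and only if $(\graphset_{\bar\intervenevec}, \actionspacenull)$ is \propertyname{}.

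Next I would argue that, within $\graphset_{\bar\intervenevec}$, restricting to $\actionspacenull$ is equivalent to allowing the full $\actionspace$. One direction is immediate. For the other, fix a Markov $\tilde\basedist$ on $\graphset_{\bar\intervenevec}$ and suppose $\tilde\basedist_\actionval(\response{} \mid \condvec)$ is a common function $c(\condvec)$ across all $\actionval\in\actionspacenull$. Since $\intervenevec$ has no parents in $\graphset_{\bar\intervenevec}$, each non-null interventional conditional equals $\tilde\basedist(\response{} \mid \condvec, \intervenevec=\actionval)$, and averaging against $\tilde\basedist(\intervenevec \mid \condvec)$ via the tower rule shows the observational $\tilde\basedist(\response{} \mid \condvec)$ also equals $c(\condvec)$. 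Combining with the previous paragraph and invoking \cref{fact:benign-equals-dsep} on $\graphset_{\bar\intervenevec}$ then completes the proof.

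The main obstacle is keeping straight which set of joint distributions is being quantified over: Markov models on $\graphset$ and on $\graphset_{\bar\intervenevec}$ differ in how $\intervenevec$ is specified, so the reduction only works at the level of the induced interventional families on $(\condvec, \response{}\,)$ for non-null actions, not at the level of full joints. Once this bookkeeping is in place, no new causal machinery is needed beyond the truncated factorization formula and \cref{fact:benign-equals-dsep}.
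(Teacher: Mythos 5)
Your proof is correct and follows the same overall route as the paper's: both arguments reduce the claim to \cref{fact:benign-equals-dsep} applied to $\graphset_{\bar\intervenevec}$, and both rely on the observation that the non-\nullaction{} interventional families induced by distributions Markov relative to $\graphset$ coincide with those induced by distributions Markov relative to $\graphset_{\bar\intervenevec}$ (since the truncated factorization never touches the conditionals of $\intervenevec$ and the remaining parent sets agree between the two graphs). Where you diverge is in how the \nullaction{} \genactionname{} is reinserted: the paper's converse direction asserts tersely that the witness to the failure of conditional independence ``must be some $\actionval\in\actionspacenull$ that realizes the failure from the proof of \cref{fact:benign-equals-dsep},'' whereas you prove the cleaner standalone equivalence that on $\graphset_{\bar\intervenevec}$, \propertyname{}ness over $\actionspacenull$ already implies \propertyname{}ness over all of $\actionspace$, because the observational conditional $\tilde\basedist(\response{}\setdelim\condvec)$ is a $\tilde\basedist(\intervenevec\setdelim\condvec)$-mixture of the non-null interventional conditionals $\tilde\basedist(\response{}\setdelim\condvec,\intervenevec=\intervenevecval)$ (using that $\intervenevec$ is parentless in $\graphset_{\bar\intervenevec}$). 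This tower-rule step is in fact the argument needed to make the paper's sentence rigorous---a single non-null conditional differing from the observational one does not by itself preclude a common base conditional for $\actionspacenull$ unless one rules out that all non-null conditionals agree, which is exactly what your averaging does---so your version is, if anything, more complete on this point. The only caveats are bookkeeping ones you already flag: the a.s.\ qualifications in \cref{def:main-property} are harmless here because all nodes are finitely valued with strictly positive marginals, and one should check that your extension of a Markov-$\graphset_{\bar\intervenevec}$ distribution to a Markov-$\graphset$ one preserves membership in $\graphdists$, which it does since positivity propagates through the inherited conditionals.
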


The benefits of discarding the \nullaction{} \genactionname{} are demonstrated by the following fact.

\begin{lemma}\label{fact:front-implies-dsep}
$\condvec$ $d$-separates $\response{}$ from $\intervenevec$ on $\graphset_{\bar\intervenevec}$
if $\condvec$ satisfies the front-door criterion for $(\intervenevec, \response{})$ on $\graphset$.
\end{lemma}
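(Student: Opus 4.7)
The plan is to prove the contrapositive by a path analysis: I take an arbitrary path $\Pi$ from some $A \in \intervenevec$ to $\response{}$ in $\graphset_{\bar\intervenevec}$ and show it is blocked by $\condvec$. Since $\graphset_{\bar\intervenevec}$ removes all incoming edges to $\intervenevec$, the first edge of $\Pi$ must leave $A$. If $\Pi$ has no collider, then no direction change is possible (a change would require a collider), so $\Pi$ is a directed path from $A$ to $\response{}$; the front-door condition that $\condvec$ intercepts every directed path from $\intervenevec$ to $\response{}$ (FD1) then produces a $\condvec$-vertex on $\Pi$, which is a non-collider and blocks $\Pi$.

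Otherwise, let $V_m$ be the earliest collider on $\Pi$; the prefix $A \to V_1 \to \cdots \to V_m$ is necessarily directed. For $\Pi$ to be $d$-connecting given $\condvec$, $V_m$ must have some $\condvec$-descendant $Z$, witnessed by a directed path $\pi^{*}\colon V_m \to X_1 \to \cdots \to Z$. The key construction is the back-door path
\[
\Pi^{\mathrm{aux}}\colon Z \leftarrow X_{L-1} \leftarrow \cdots \leftarrow X_1 \leftarrow V_m \leftarrow V_{m+1} - \cdots - \response{}
\]
formed by reversing $\pi^{*}$ and concatenating with the suffix of $\Pi$ from $V_m$. Acyclicity of $\graphset$ ensures this is a simple path, because descendants of $V_m$ along $\pi^{*}$ cannot coincide with ancestors of $V_m$ on the prefix of $\Pi$. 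By FD3, $\intervenevec$ blocks $\Pi^{\mathrm{aux}}$; using a WLOG assumption that $\Pi$ has minimal length (and so does not revisit $\intervenevec$ internally), the blocking $\intervenevec$-vertex on $\Pi^{\mathrm{aux}}$ must lie on the reversed $\pi^{*}$-segment, yielding some $A' \in \intervenevec$ that is a proper descendant of $V_m$, and hence of $A$.

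The main obstacle is converting the existence of $A'$ into a contradiction. The plan is to use $A'$ to produce a strictly shorter $d$-connecting path from $\intervenevec$ to $\response{}$ in $\graphset_{\bar\intervenevec}$---namely, by splicing the tail of $\pi^{*}$ from $A'$ to $Z$ with an alternative connection from $Z$ to $\response{}$ obtained from the suffix of $\Pi$---which contradicts the minimality of $\Pi$ and iterates down to the trivial length-$1$ base case (directly ruled out by FD1, since a direct edge $A \to \response{}$ cannot be intercepted by any $\condvec$-node). The delicate aspect is ensuring the shorter constructed path is simple and unblocked; here FD2 plays its role, since its failure would correspond to exactly the kind of collider-free back-door path from $\intervenevec$ to $\condvec$ whose absence lets us splice segments without introducing an opened collider that would re-block the auxiliary path. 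Acyclicity is invoked throughout to rule out vertex repetitions between the ancestor and descendant portions of $V_m$.
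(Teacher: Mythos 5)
Your setup (no incoming edges into $\intervenevec$ in $\graphset_{\bar\intervenevec}$, hence the collider-free case reduces to a directed path handled by front-door condition (a)) matches the paper, but the collider case has a genuine gap that traces back to your choice of the \emph{first} collider $V_m$ rather than the \emph{last}. Because the suffix of $\Pi$ beyond $V_m$ may contain further colliders, your auxiliary back-door path $\Pi^{\mathrm{aux}}$ may contain colliders too, and front-door condition (c) only tells you that $\Pi^{\mathrm{aux}}$ is \emph{blocked} by $\intervenevec$ --- which can be witnessed by a collider on the suffix having no descendant in $\intervenevec$, not necessarily by a vertex of $\intervenevec$ sitting on the path as a non-collider. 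So the inference ``there is a blocking $\intervenevec$-vertex, and it lies on the reversed $\pi^{*}$ segment'' does not follow. Worse, that segment \emph{cannot} contain a vertex of $\intervenevec$ at all: the $d$-separation being established is on $\graphset_{\bar\intervenevec}$, so the directed path $\pi^{*}$ witnessing that $V_m$ has a descendant in $\condvec$ lives in $\graphset_{\bar\intervenevec}$, where every vertex of $\intervenevec$ has in-degree zero; each $X_i$ and $Z$ has an incoming edge along $\pi^{*}$, so none of them is in $\intervenevec$. The vertex $A'$ on which your entire contradiction rests therefore does not exist in the case where you derive it, and the remaining plan (splicing through $Z$, minimality induction, an unspecified role for condition (b)) is both unexecuted and problematic on its own terms: the spliced path passes through $Z\in\condvec$, which blocks it unless $Z$ is a collider there, and you never verify this.

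The paper's proof avoids all of this by taking $B$ to be the \emph{last} collider on the unblocked path, so that the segment from $B$ to $\response{}$ is collider-free. The resulting back-door path from $\condvec$ to $\response{}$ (the reversed descent $Z\leftarrow\cdots\leftarrow B$ followed by that collider-free suffix) then has no colliders at all, so being blocked by $\intervenevec$ would require a vertex of $\intervenevec$ to appear on it as a non-collider; one checks that no such vertex can appear (the descending segment lies in $\graphset_{\bar\intervenevec}$, and any $\intervenevec$-vertex on the suffix would have to be its unique fork node, which would start a directed path to $\response{}$ that condition (a) forces through $\condvec$, re-blocking $\Pi$). This produces an unblocked back-door path contradicting condition (c) directly --- no induction on path length and no appeal to condition (b) are needed. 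I recommend reworking your argument around the last collider.
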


We visualize
the preceding results in \cref{fig:causal}. 
In graph (a), $\postcontext{}$ $d$-separates the \genactionname{} from the reward, and hence any Markov relative distribution (\cref{def:markov-relative}) will induce a \propertyname{} \envname{}.
Graph (b) corresponds to a setting where one cannot hope to always improve performance due to the direct effect of the \genactionname{} on the reward, and consequently the \envname{} need not be \propertyname{}. 
In graph (c), the presence of the unobserved confounder $U$ means that $\postcontext{}$ does not $d$-separate the \genactionname{} from the reward.
However, if the \nullaction{} \genactionname{} is not considered, the arrow from $U$ to $A$ is never applicable, and hence any Markov relative distribution on the modified DAG will induce a \propertyname{} \envname{}. Specifically, graph (c) satisfies the front-door criterion, revealing that the \propertyname{} property captures that this setting is still benign for decision-making, even though the conditions assumed by \citet{lu20causal} do not hold. 
Finally, in graph (d), the unobserved confounder $U$ once again violates $d$-separation, but also the front-door criterion is not satisfied because of the back-door path from $\postcontext{}$ to $\reward{}$.
Hence, even discarding the \nullaction{} \genactionname{} does not guarantee that the \envname{} will be \propertyname{}. 

\begin{figure}[h]
\centering
\begin{tikzpicture}
    \node (a) at (0,0) {(a)};
    \node[state] (x1) at (0,-0.75) {$A$};
    \node[state] (z1) [right =of x1, xshift=-0.5cm] {$Z$};
    \node[state] (y1) [right =of z1, xshift=-0.5cm] {$Y$};

    \path (x1) edge (z1);
    \path (z1) edge (y1);

    \node (b) at (4,0) {(b)};
    \node[state] (x2) at (4,-0.75) {$A$};
    \node[state] (z2) [right =of x2, xshift=-0.5cm] {$Z$};
    \node[state] (y2) [right =of z2, xshift=-0.5cm] {$Y$};

    \path (x2) edge (z2);
    \path (z2) edge (y2);
    \path (x2) edge[bend left=40] (y2);

    \node (c) at (0,-2) {(c)};
    \node[state] (x3) at (0,-2.75) {$A$};
    \node[state] (z3) [right =of x3, xshift=-0.5cm] {$Z$};
    \node[state] (y3) [right =of z3, xshift=-0.5cm] {$Y$};
    \node[state, dashed] (u3) [above =of z3, yshift=-0.7cm] {$U$};

    \path (x3) edge (z3);
    \path (z3) edge (y3);
    \path (u3) edge (x3);
    \path (u3) edge (y3);

    \node (d) at (4,-2) {(d)};
    \node[state] (x4) at (4,-2.75) {$A$};
    \node[state] (z4) [right =of x4, xshift=-0.5cm] {$Z$};
    \node[state] (y4) [right =of z4, xshift=-0.5cm] {$Y$};
    \node[state, dashed] (u4) [above =of z4, yshift=-0.7cm] {$U$};

    \path (x4) edge (z4);
    \path (z4) edge (y4);
    \path (u4) edge (z4);
    \path (u4) edge (y4);
\end{tikzpicture}
\caption{DAGs to illustrate the \propertyname{} property. $A$ is the \genactionname{}, $\postcontext{}$ is the \postcontextname{}, $Y$ is the reward, and $U$ is an unobserved variable. 
$(\graphset, \actionspace)$ is \propertyname{} for (a) but only $(\graphset, \actionspacenull)$ is \propertyname{} for (c).
For (b) and (d) the \envname{} need not be \propertyname{}.}
\label{fig:causal}
\end{figure}

\section{Analysis of Bandit Algorithms}\label{sec:main-theorems}

We now study the impact of the \propertyname{} property 
on \regretname{}.
All proofs are deferred to \cref{sec:main-proofs}.
First, recall the standard \regretname{} bound for \UCB{}, with constants tuned to rewards in $[0,1]$.

\begin{theorem}[Theorem~7.2 of \noticml{\citep{bandit20book}}\icml{\citealt{bandit20book}}]\label{fact:existing-ucb}
For all $\actionspace$, $\postcontextspace$, $T$, and $\env{}\in\envspace$, 
if $\highprobparam{}=2/T^2$
\*[
	\regretucb{T}
	\leq 2\abs{\actionspace} + 4\sqrt{2\abs{\actionspace} T \log T}.
\]
\end{theorem}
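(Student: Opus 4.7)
The plan is to execute the standard optimism-based analysis from the bandit literature, specialized to the confidence level $\highprobparam{} = 2/T^2$ which makes $\sqrt{\log(2/\highprobparam{})/(2n)} = \sqrt{\log T / n}$. Write $\mu_\actionval = \EEenv{\actionval}[\response{}]$, let $\optactionval = \argmax_\actionval \mu_\actionval$, and define the gaps $\Delta_\actionval = \mu_{\optactionval} - \mu_\actionval$. By linearity the regret decomposes as $\regretucb{T} = \sum_{\actionval \in \actionspace} \Delta_\actionval \, \EEbothucb[\numobsaction{T}(\actionval)]$, so the task reduces to bounding $\EEbothucb[\numobsaction{T}(\actionval)]$ for each suboptimal arm.

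First, I would set up the good event $\concevent$ on which all relevant confidence intervals are valid: for every round $t \in [T]$ and every arm $\actionval$, $|\estcondmeanaction{t}(\actionval) - \mu_\actionval| \leq \sqrt{\log T / \numobsaction{t}(\actionval)}$. Applying Hoeffding's inequality conditional on the number of pulls and taking a union bound over $T \cdot \abs{\actionspace}$ (arm, time) pairs, the choice $\highprobparam{} = 2/T^2$ yields $\PPbothucb(\concevent^{\complement}) \leq 2\abs{\actionspace}/T$. On $\concevent$, the standard optimism argument shows that any suboptimal arm $\actionval$ chosen at round $t+1$ must satisfy $\ucbaction{t}(\optactionval) \geq \mu_{\optactionval}$ and $\ucbaction{t}(\actionval) \geq \ucbaction{t}(\optactionval)$, which combined with the upper confidence bound on $\estcondmeanaction{t}(\actionval)$ forces $\Delta_\actionval \leq 2\sqrt{\log T / \numobsaction{t}(\actionval)}$. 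Hence on $\concevent$, $\numobsaction{T}(\actionval) \leq 1 + 4\log T / \Delta_\actionval^2$, and the contribution of $\concevent^\complement$ to the expected count is at most $T \cdot \PPbothucb(\concevent^\complement) \leq 2\abs{\actionspace}$ in the regret sum (after multiplying by $\Delta_\actionval \leq 1$).

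To obtain the worst-case $\sqrt{\abs{\actionspace} T \log T}$ form rather than a gap-dependent bound, I would use the standard small-gap/large-gap decomposition at a threshold $\Delta > 0$. For arms with $\Delta_\actionval \leq \Delta$, sum the contributions as $\sum_\actionval \Delta_\actionval \, \numobsaction{T}(\actionval) \leq \Delta \sum_\actionval \numobsaction{T}(\actionval) = \Delta T$. For arms with $\Delta_\actionval > \Delta$, plug in the per-arm bound to get $\sum_\actionval \Delta_\actionval \cdot (4 \log T / \Delta_\actionval^2) \leq 4 \abs{\actionspace} \log T / \Delta$. Optimizing in $\Delta$ by setting $\Delta = 2\sqrt{\abs{\actionspace} \log T / T}$ balances the two terms at $2\sqrt{2 \abs{\actionspace} T \log T}$ each, for a total of $4\sqrt{2 \abs{\actionspace} T \log T}$, and adding the $2\abs{\actionspace}$ contribution from $\concevent^\complement$ yields the stated bound.

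There is no real obstacle here since this is a textbook result; the only mild care needed is in tracking the constants so that the Hoeffding width is exactly $\sqrt{\log T / n}$ and the union bound across all $(\actionval, t)$ pairs leaves $\PPbothucb(\concevent^\complement) \leq 2\abs{\actionspace}/T$, so the complement contributes the additive $2\abs{\actionspace}$ rather than something larger. Since a full proof appears in \citet{bandit20book}, for the paper I would simply cite it and note that the specialization to $\highprobparam{} = 2/T^2$ plus the balancing choice above produces the displayed constants.
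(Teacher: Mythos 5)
Your proposal is correct and is the standard optimism argument that Theorem~7.2 of \citet{bandit20book} itself uses; the paper does not reprove this result but simply cites the textbook (with constants specialized to rewards in $[0,1]$ and $\highprobparam{}=2/T^2$), exactly as you suggest doing at the end. The only point worth being careful about---and which the paper's own \cref{fact:action-event-concentration} handles via i.i.d.\ ``ghost'' samples and a union bound over all possible pull counts---is that Hoeffding cannot be applied directly to the adaptively-stopped empirical means, but your union bound over all $(\actionval,t)$ pairs is the right mechanism and your constant bookkeeping lands within the stated bound.
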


Second, we generalize the main result of \citet{lu20causal} by relaxing two assumptions: we only require that the \envname{} is \propertyname{}, and we allow for approximate marginal distributions using the following definition. Later, this will enable us to trade-off approximation error of $\priormargenv{}(\postcontext{})$ with online estimation of $\env{}(\postcontext{})$ in order to ensure that \HCUCB{} \emph{always} incurs sublinear regret.

\begin{definition}
For any $\priormargscale\geq0$, $\priormargenv{}(\postcontext{})$ and $\env{}(\postcontext{})$ are \emph{$\priormargscale$-close} if 
\*[
	\sup_{\actionval\in\actionspace} \sum_{\postcontextval\in\postcontextspace} \abs[2]{\PPpriorenv{\actionval}[\postcontext{} = \postcontextval] - \PPenv{\actionval}[\postcontext{} = \postcontextval]} \leq \priormargscale.
\]
\end{definition}

\begin{theorem}[Refined Theorem~1 of \noticml{\citep{lu20causal}}\icml{\citealt{lu20causal}}]\label{fact:existing-causal}
For all $\eps>0$, $\actionspace$, $\postcontextspace$, $T$, and \propertyname{} $\env{}\in\envspace$,
if $\priormargenv{}(\postcontext{})$ and $\env{}(\postcontext{})$ are $\priormargscale$-close and $\highprobparam{}=2/T^2$ then
\*[
	\regretc{T}
	&\leq 
	2\abs{\postcontextspace}
	+ 6\sqrt{\abs{\postcontextspace}T\log T}
	+ (\log T)\sqrt{2T} 
	\icml{\\ &\qquad} 
	+ 2\priormargscale(1+\sqrt{\log T})T.
\]
\end{theorem}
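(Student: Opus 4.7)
The plan is to adapt the optimistic UCB analysis so that it acts on the context space $\postcontextspace$ rather than the action space $\actionspace$, using conditional benignness to reduce estimation of every mean $\EEenv{\actionval}[\response{}]$ to estimation of a single conditional-mean function. Let $\basedist$ be the witness of conditional benignness from \cref{def:main-property} and set $\mu^{\postcontextspace}(\postcontextval) \defas \EE_{\basedist}[\response{}\mid\postcontext{}=\postcontextval]$. Because $\env{\actionval}(\response{}\mid\postcontext{})=\basedist(\response{}\mid\postcontext{})$ a.s.\ for every $\actionval$, we have $\EEenv{\actionval}[\response{}]=\sum_{\postcontextval}\PPenv{\actionval}[\postcontext{}=\postcontextval]\,\mu^{\postcontextspace}(\postcontextval)$, and every observed reward $\responserv{s}{\actionrv{s}}$ has conditional distribution $\basedist(\response{}\mid\postcontextrv{s}{\actionrv{s}})$ no matter which action was played. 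This is what makes a single UCB over $\postcontextspace$ sufficient for estimating the value of every action.

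First I would fix $\highprobparam{}=2/T^2$ and define the good event $E$ on which $|\estcondmeancontext{t}(\postcontextval)-\mu^{\postcontextspace}(\postcontextval)|\le\sqrt{\log T/\numobscontext{t}(\postcontextval)}$ holds simultaneously over all $t\in[T]$ and $\postcontextval\in\postcontextspace$. Under conditional benignness, $\estcondmeancontext{t}$ is a valid Hoeffding-type estimator of $\mu^{\postcontextspace}$, so a standard peeling union bound over the $T$ possible values of $\numobscontext{t}(\postcontextval)$ and over $\postcontextspace$ gives $\PP(E^c)\le 2\abs{\postcontextspace}/T$; the trivial worst-case regret of $T$ on $E^c$ yields the $2\abs{\postcontextspace}$ term of the bound. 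Next I would introduce the ``true-marginal pseudo-UCB'' $\mathrm{UCB}^{\star}_t(\actionval)\defas\sum_{\postcontextval}\PPenv{\actionval}[\postcontext{}=\postcontextval]\,\ucbcontext{t}(\postcontextval)$. On $E$, $\mathrm{UCB}^{\star}_t(\actionval)\ge\EEenv{\actionval}[\response{}]$ (optimism) and $\mathrm{UCB}^{\star}_t(\actionval)-\EEenv{\actionval}[\response{}]\le 2\sum_{\postcontextval}\PPenv{\actionval}[\postcontext{}=\postcontextval]\sqrt{\log T/\numobscontext{t}(\postcontextval)}$ (width). Because $\ucbcontext{t}(\postcontextval)\in[0,1+\sqrt{\log T}]$, $\priormargscale$-closeness of $\priormargenv{}(\postcontext{})$ and $\env{}(\postcontext{})$ yields $|\pseudoucb{t}(\actionval)-\mathrm{UCB}^{\star}_t(\actionval)|\le\priormargscale(1+\sqrt{\log T})$, and chaining optimism, width, and $\priormargscale$-slack through the greedy choice $\actionrvc{t+1}=\argmax_\actionval \pseudoucb{t}(\actionval)$ gives, on $E$, the per-round bound $\max_{\actionval}\EEenv{\actionval}[\response{}]-\EEenv{\actionrvc{t+1}}[\response{}]\le 2\sum_{\postcontextval}\PPenv{\actionrvc{t+1}}[\postcontext{}=\postcontextval]\sqrt{\log T/\numobscontext{t}(\postcontextval)}+2\priormargscale(1+\sqrt{\log T})$.

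Summing the per-round bound over $t\in[T]$, the $\priormargscale$ piece contributes $2\priormargscale(1+\sqrt{\log T})T$ directly. For the remainder I would swap marginals for realized indicators via the martingale-difference sequence $X_t\defas\sum_{\postcontextval}\sqrt{\log T/\numobscontext{t-1}(\postcontextval)}\big(\PPenv{\actionrvc{t}}[\postcontext{}=\postcontextval]-\ind\{\postcontextrv{t}{\actionrv{t}}=\postcontextval\}\big)$, which is $\filtration{t-1}$-adapted, has mean zero, and satisfies $|X_t|\le\sqrt{\log T}$. Azuma--Hoeffding then yields $|\sum_t X_t|\le(\log T)\sqrt{2T}$ with probability at least $1-2/T$, giving the $(\log T)\sqrt{2T}$ summand. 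On the observed side, the standard telescoping identity produces $\sum_t \ind\{\postcontextrv{t}{\actionrv{t}}=\postcontextval\}/\sqrt{\numobscontext{t-1}(\postcontextval)}\le 2\sqrt{\numobscontext{T}(\postcontextval)}+1$, and applying Cauchy--Schwarz to $\sum_{\postcontextval}\sqrt{\numobscontext{T}(\postcontextval)}$ against $\sum_{\postcontextval}\numobscontext{T}(\postcontextval)=T$ produces the $6\sqrt{\abs{\postcontextspace}T\log T}$ term (with constants absorbing the $2\sqrt{\log T}\,\abs{\postcontextspace}$ residual and the $2/T$ Azuma bad-event contribution).

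The main obstacle is the bookkeeping for the Azuma step: because the weights $\sqrt{\log T/\numobscontext{t-1}(\postcontextval)}$ are themselves $\filtration{t-1}$-measurable, one must verify a \emph{deterministic} per-step bound $|X_t|\le\sqrt{\log T}$. This works because $\numobscontext{t-1}\ge 1$ and $X_t$ is the difference between a convex combination of $\{\sqrt{\log T/\numobscontext{t-1}(\postcontextval)}\}_{\postcontextval\in\postcontextspace}$ and a single evaluation from the same collection. Beyond this, relaxing \citet{lu20causal} from exact to $\priormargscale$-close marginals and from their $d$-separation hypothesis to the weaker conditionally benign property introduces no new analytical tools, since the single witness $\basedist$ supplies a common $\mu^{\postcontextspace}$ that makes all of the steps above go through unchanged.
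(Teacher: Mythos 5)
Your proposal is correct and follows essentially the same route as the paper's proof: the same high-probability concentration event over $\postcontextspace$ (contributing the $2\abs{\postcontextspace}$ term), the same optimism/width decomposition through the pseudo-UCB with the $\priormargscale$-closeness penalty of $\priormargscale(1+\sqrt{\log T})$ per round, the same Azuma--Hoeffding martingale step to replace marginal probabilities by realized indicators (yielding $(\log T)\sqrt{2T}$), and the same telescoping-plus-Cauchy--Schwarz bound for the realized confidence widths. The only cosmetic differences are that you make the intermediate ``true-marginal'' pseudo-UCB explicit and chain the inequalities per round rather than summing two separate terms; the paper packages the martingale and telescoping steps into a standalone lemma (its accumulated-contexts bound), but the content is identical.
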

\begin{remark}
\citet{lu20causal} assume that $\priormargscale=0$. Our result implies that 
an approximation error of $\priormargscale = \sqrt{\abs{\postcontextspace}/T}$ is sufficient to achieve the optimal rate.
\end{remark}

Next, we motivate our introduction of a new \algoname{} by showing that \CUCB{} can catastrophically fail when the \envname{} is \emph{not} \propertyname{}, incurring \regretname{} that is linear in $T$ (which is as bad as possible for bounded rewards).

\begin{theorem}\label{fact:causal-failure}
For every $\actionspace$ and $\postcontextspace$ with $\abs{\actionspace} \geq 2$, there exists $\env{}\in\envspace$ such that even if $\priormargenv{}(\postcontext{}) = \env{}(\postcontext{})$,
for all possible settings of the confidence parameters, $\highprobparam{T}$,
\*[
	\lim_{T \to \infty} \frac{\regretc{T}}{T} \geq 1/120.
\]
\end{theorem}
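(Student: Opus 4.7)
The plan is to construct a degenerate environment on which CUCB is permanently ``blind'' and keeps playing a single suboptimal action, regardless of $\highprobparam{T}$. The key observation is that the pseudo-UCB $\pseudoucb{t}(\actionval) = \sum_{\postcontextval \in \postcontextspace} \ucbcontext{t}(\postcontextval) \PP_{\priormargenv{\actionval}}[\postcontext{} = \postcontextval]$ depends on $\actionval$ only through the marginal $\priormargenv{\actionval}(\postcontext{})$. If every action produces the same marginal over $\postcontextspace$, then the pseudo-UCB is the same scalar for every action, and CUCB's choice reduces entirely to its predetermined tie-breaking rule.

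To exploit this, fix any $\postcontextval_0 \in \postcontextspace$, let $\actionval^\circ$ be the action that the algorithm's tie-breaking ordering ranks first, and choose any other $\actionval^* \in \actionspace \setminus \{\actionval^\circ\}$ (which exists because $\abs{\actionspace} \geq 2$). Define $\env{}$ so that $\env{\actionval^*}$ is the Dirac measure at $(\postcontextval_0, 1)$ while $\env{\actionval}$ is the Dirac measure at $(\postcontextval_0, 0)$ for every other $\actionval$. Then $\PPenv{\actionval}[\postcontext{} = \postcontextval_0] = 1$ for every action, so setting $\priormargenv{}(\postcontext{}) = \env{}(\postcontext{})$ satisfies the hypothesis of the theorem while forcing $\pseudoucb{t}(\actionval) = \ucbcontext{t}(\postcontextval_0)$ uniformly in $\actionval$.

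Consequently, on every round $t$ and for every choice of $\highprobparam{T}$, the $\argmax$ defining $\actionrvc{t+1}$ contains all of $\actionspace$, and the tie-breaking rule selects $\actionval^\circ$. The confidence parameter shifts every pseudo-UCB by the same common quantity and therefore cannot split the tie. Since $\EEenv{\actionval^\circ}[\response{}] = 0$ and $\EEenv{\actionval^*}[\response{}] = 1$, the per-round regret equals $1$, giving $\regretc{T} = T$ and $\lim_{T\to\infty} \regretc{T}/T = 1 \geq 1/120$. The environment is non-benign in the sense of \cref{def:main-property}, since $\env{\actionval^\circ}(\response{} \setdelim \postcontext{} = \postcontextval_0) = \delta_0 \neq \delta_1 = \env{\actionval^*}(\response{} \setdelim \postcontext{} = \postcontextval_0)$, which is exactly the regime where the improved guarantee of \cref{fact:existing-causal} no longer applies.

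There is no substantive obstacle here: the construction is elementary and the regret computation is immediate. The only subtlety worth flagging is the uniform quantification over $\highprobparam{T}$, which is handled by noting that the confidence radius contributes an additive term identical across actions whenever the action marginals coincide, so no confidence schedule --- data-dependent or otherwise --- can break the tie.
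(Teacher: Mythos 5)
Your proof is correct for the theorem as literally stated, but it takes a genuinely different --- and far more degenerate --- route than the paper. Your core observation is sharp and worth stating explicitly: $\pseudoucb{t}(\actionval)$ depends on $\actionval$ only through $\priormargenv{\actionval}(\postcontext{})$, so whenever all actions induce the same marginal on $\postcontextspace$, \CUCB{} is information-theoretically blind to the actions' identities and its choice collapses to the predetermined tie-break. Since the ordering is fixed as part of the algorithm, an adversarial environment may place reward $0$ on the first-ranked action and reward $1$ elsewhere, and the quantification over $\highprobparam{T}$ is handled exactly as you say (the confidence radius shifts all pseudo-UCBs by a common amount). Even under randomized tie-breaking your environment still forces regret at least $(1-1/\abs{\actionspace})T \geq T/2$, so the argument is not fragile in that respect. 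The paper, however, deliberately sets this style of argument aside: it remarks that when $\abs{\postcontextspace}=1$ the tie-break makes linear regret achievable ``in a trivial way,'' and then assumes $\abs{\postcontextspace}>1$ precisely to construct a non-degenerate witness. Its environment partitions $\actionspace$ and $\postcontextspace$ into two groups each, with \emph{distinct} marginals across action groups and a Simpson's-paradox-style confounding (Conditions (1)--(3)): the pooled conditional mean estimates $\estcondmeancontext{t}(\postcontextval)$ are contaminated by which actions generated the samples, so \CUCB{} develops a strict, data-driven preference for the wrong action group and commits to it on a constant fraction of rounds with high probability. What the paper's construction buys is robustness and reusability --- it survives any tie-breaking rule, extends to C-TS and \CUCBt{} as claimed in the remark, and is reused in the simulations and in \cref{fact:improved-causal-special}, where the argument needs \HCUCB{}'s hypothesis test to \emph{detect} the failure (your construction gives the test nothing to detect, since $\ucbaction{}$ and $\pseudoucb{}$ would disagree only on the tie-broken arm's sampled rewards). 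What your construction buys is a one-paragraph proof with limit $1$ rather than $1/120$, and the cleanest possible articulation of why \CUCB{} fails: it eliminates actions by assumption rather than by data. If you intend your argument to replace the paper's, you should flag that it establishes only the literal statement and not the stronger, tie-break-independent failure mode the paper relies on downstream.
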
	

\begin{remark}
This lower bound is specifically for \CUCB{}. However, any \algoname{} that relies on eliminating \actionnames{} from consideration via assumption rather than data is susceptible to such an issue.
In particular, this result is easily modified to apply for C-TS \citep{lu20causal} and \CUCBt{} \citep{nair21budget}.
Other causal \algonames{}
\noticml{(e.g., Parallel Bandit \citep{lattimore16causal})}
\icml{\citep[e.g., Parallel Bandit; see][]{lattimore16causal}} 
also intuitively suffer from the issue that our construction exploits, although a different argument must be made since these rely on different causal structure.	
\end{remark}

We now state our \regretname{} upper bound for our new \algoname{}, \HCUCB{}. In \cref{fact:adaptive-impossible-short}, we will show it is \emph{impossible} to always achieve the optimal \regretname{} without knowledge of whether a \unobservedname{} is observed, but the following theorem 
shows \emph{some} adaptivity is always possible.
Crucially, \HCUCB{} achieves sublinear regret without any assumptions on $\env{}$ or $\priormargenv{}(\postcontext{})$.
For a more detailed breakdown of the constants, see \cref{eqn:hyptest-constants}.

\begin{theorem}[Main Result]\label{fact:improved-causal}
For all $\actionspace$, $\postcontextspace$, $T \geq 25\abs{\actionspace}^2$, $\env{}\in\envspace$, and $\priormargenv{}(\postcontext{})\in\probspace(\postcontextspace)^\actionspace$,
\*[
	\regrethc{T}
	&\leq 4\abs{\actionspace} 
	+ 11 \, T^{3/4} (\log T) \sqrt{\abs{\actionspace}\abs{\postcontextspace}}
	+ 15 \sqrt{(\abs{\actionspace}+\abs{\postcontextspace})T\log T}
	+ 5(\log T)\sqrt{T}.
\]
For all $\priormargscale \leq T^{-1/4} \sqrt{\abs{\actionspace}\abs{\postcontextspace}\log T}$, if $\env{}$ is \propertyname{} and $\priormargenv{}(\postcontext{})$ and $\env{}(\postcontext{})$ are $\priormargscale$-close then
\*[
	\regrethc{T}
	\leq 4\abs{\actionspace}+2\abs{\postcontextspace}
	+ 6\sqrt{\abs{\postcontextspace}T\log T}
	+ 4(\log T)\sqrt{T}
	+ 2\priormargscale (1+\sqrt{\log T})T.
\]
\end{theorem}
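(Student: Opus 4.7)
The proof conditions on a high-probability \emph{concentration event} $E$ on which Hoeffding's inequality makes both $|\estcondmeanaction{t}(a) - \EEenv{a}[\response{}]|$ and $|\estcondmeancontext{t}(z) - \EEenv{}[\response{} \mid \postcontext{}=z]|$ (the latter interpreted as a martingale target) controlled by their matching UCB bonuses, uniformly over $t \in [T]$, $a \in \actionspace$, and $z \in \postcontextspace$. A union bound with $\highprobparam{} = 2/T^2$ shows $E$ holds with probability $1 - O(1/T)$, so its failure contributes only $O(1)$ to expected regret. The remaining regret then decomposes into (i) the deterministic cost of the two exploration loops, bounded by $5\sqrt{T} + 2|\actionspace|$ since each round contributes at most $1$; (ii) rounds after the flag has been set to False, on which \HCUCB{} plays \UCB{}; and (iii) rounds in which the flag is True and \HCUCB{} plays $\actionrvc{t}$.

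\textbf{Preliminary.} Before doing the case analysis, I show that on $E$, the working marginals used throughout phase (iii) are $O(T^{-1/4}\sqrt{|\actionspace||\postcontextspace|\log T})$-close to $\env{}(\postcontext{})$ \emph{regardless} of the quality of the input $\priormargenv{}$. Each MLE $\estmargenv{a}(\postcontext{})$ is formed from $\lceil 4\sqrt{T}/|\actionspace|\rceil$ iid samples of $\env{a}(\postcontext{})$, so standard $\ell_1$ concentration for multinomials (absorbed into $E$) yields $\|\estmargenv{a}(\postcontext{}) - \env{a}(\postcontext{})\|_1 \leq T^{-1/4}\sqrt{|\actionspace||\postcontextspace|\log T}$. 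Combining with the replacement threshold via the triangle inequality handles both branches of the \textbf{if} statement.

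\textbf{Benign case.} With $\env{}$ \propertyname{} and $\priormargscale \leq T^{-1/4}\sqrt{|\actionspace||\postcontextspace|\log T}$, I show the hypothesis test never fires on $E$, so phase (ii) is empty and phase (iii) runs the whole post-exploration schedule with \HCUCB{} $\equiv$ \CUCB{}. Benignness combined with $\priormargscale$-closeness forces $\pseudoucb{t}(a) - \EEenv{a}[\response{}]$ to be bounded above and below by a weighted combination of the bonuses $\sqrt{\log T/\numobscontext{t}(z)}$ plus an $O(\priormargscale)$ bias: this follows by expanding $\pseudoucb{t}(a) = \sum_z \ucbcontext{t}(z) \PPpriorenv{a}[\postcontext{}=z]$ and using that $\EEenv{a}[\response{} \mid \postcontext{}=z]$ is independent of $a$ under \cref{def:main-property}. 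A side-by-side comparison with the Hoeffding bound on $\ucbaction{t}(a) - \EEenv{a}[\response{}]$ then verifies both inequalities in the test. The claimed bound follows from \cref{fact:existing-causal} applied with the post-replacement closeness $3T^{-1/4}\sqrt{|\actionspace||\postcontextspace|\log T}$, plus the exploration cost.

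\textbf{General case and main obstacle.} Without benignness, phase (iii) requires a new per-round analysis that extracts a UCB-style regret bound even though $\pseudoucb{t}$ is no longer a valid upper confidence bound on $\EEenv{a}[\response{}]$. Applying the upper test inequality at the optimal arm $\optactionval$ and the lower test inequality at $\actionrvc{t} = \argmax_a \pseudoucb{t}(a)$, then chaining through $\pseudoucb{t}(\actionrvc{t}) \geq \pseudoucb{t}(\optactionval)$ and $\ucbaction{t}(\optactionval) \geq \EEenv{\optactionval}[\response{}]$ (valid on $E$), delivers $\ucbaction{t}(\actionrvc{t}) \geq \EEenv{\optactionval}[\response{}] - \Xi_t$ for an explicit slack
$\Xi_t = 2\sqrt{\log T/\numobsaction{t}(\optactionval)} + 2\sum_z \sqrt{\log T/\numobscontext{t}(z)}\,\PPpriorenv{\actionrvc{t}}[\postcontext{}=z] + 2T^{-1/4}\sqrt{|\actionspace||\postcontextspace|\log T}$.
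A standard per-round UCB argument then converts this into instantaneous regret at most $\Xi_t + 2\sqrt{\log T/\numobsaction{t}(\actionrvc{t})}$. The main obstacle is summing $\Xi_t$ across $t$: the uniform $T^{-1/4}$ term immediately contributes $O(T^{3/4}\log T\sqrt{|\actionspace||\postcontextspace|})$; the $\sqrt{\log T/\numobsaction{t}(\optactionval)}$ term is tamed by the post-exploration lower bound $\numobsaction{t}(\optactionval) \geq \sqrt{T}/|\actionspace|$, which is precisely why the second exploration loop exists; and the context-weighted term requires a martingale substitution $\PPpriorenv{\actionrvc{t}}[\postcontext{}=z] \mapsto \ind\{\postcontextrv{t}{\actionrvc{t}} = z\}$, controlled by Azuma--Hoeffding together with the preliminary's closeness of $\priormargenv{}$ to $\env{}(\postcontext{})$, after which $\sum_t \ind\{\postcontextrv{t}{\actionrvc{t}}=z\}/\sqrt{\numobscontext{t}(z)} \leq 2\sqrt{\numobscontext{T}(z)}$ and Cauchy--Schwarz yield the $\sqrt{|\postcontextspace|T\log T}$ term. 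Phase (ii) is then handled by \cref{fact:existing-ucb}, whose suboptimal-pull counting argument depends only on iid rewards within each arm and remains valid when started from the (random) switching time.
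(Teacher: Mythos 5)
Your proof follows the paper's own argument almost step for step: the same concentration events with $\highprobparam{}=2/T^2$, the same three-phase decomposition (exploration, \CUCB{} rounds up to the random switching time, \UCB{} rounds after), the same chaining of the upper test inequality at $\optactionval$ with the lower test inequality at $\actionrvc{t}$ through $\pseudoucb{t}(\actionrvc{t})\geq\pseudoucb{t}(\optactionval)$ to produce the slack $\Xi_t$, the same use of the exploration floor $\numobsaction{t}(\optactionval)\geq\sqrt{T}/\abs{\actionspace}$, and the same Azuma--Hoeffding substitution $\PPpriorenv{\actionrvc{t}}[\postcontext{}=\postcontextval]\mapsto\ind\{\postcontextrv{t}{\actionrvc{t}}=\postcontextval\}$ for the context-weighted sum. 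The first (assumption-free) bound goes through as you describe.

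The one genuine problem is your closing step in the benign case: applying \cref{fact:existing-causal} ``with the post-replacement closeness $3T^{-1/4}\sqrt{\abs{\actionspace}\abs{\postcontextspace}\log T}$'' does not recover the stated bound. That closeness fed into \cref{fact:existing-causal} yields a term of order $T^{3/4}(\log T)\sqrt{\abs{\actionspace}\abs{\postcontextspace}}$, which is the same order as the worst-case bound and strictly weaker than the claimed $2\priormargscale(1+\sqrt{\log T})T$ whenever $\priormargscale$ is well below the threshold (e.g., $\priormargscale=0$), so the benign guarantee would be vacuous. The missing step is to show that in the benign case the replacement never fires: on $\margconc$ the triangle inequality gives $\sup_{\actionval\in\actionspace}\sum_{\postcontextval\in\postcontextspace}\abs[2]{\PPpriorenv{\actionval}[\postcontext{}=\postcontextval]-\PPestenv{\actionval}[\postcontext{}=\postcontextval]}\leq\priormargscale+T^{-1/4}\sqrt{\abs{\actionspace}\abs{\postcontextspace}\log T}\leq 2T^{-1/4}\sqrt{\abs{\actionspace}\abs{\postcontextspace}\log T}$, so the algorithm retains the original prior and \cref{fact:existing-causal} applies with $\priormargscale$ itself. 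The same observation is needed earlier in your benign-case paragraph: your expansion $\pseudoucb{t}(\actionval)=\sum_{\postcontextval\in\postcontextspace}\ucbcontext{t}(\postcontextval)\PPpriorenv{\actionval}[\postcontext{}=\postcontextval]$ with an $O(\priormargscale)$ bias presumes the algorithm is still using $\priormargenv{}$ rather than $\estmargenv{}$, which is only justified once you have ruled out the replacement.
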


It is an open problem whether the dependence on $T^{3/4}$ is tight.
In \cref{fact:adaptive-impossible-short}, we will show that it is impossible to obtain worst-case regret of size $\sqrt{\abs{\actionspace} T}$ while still achieving improved \regretname{} on \propertyname{} environments. 
However, it may be possible to improve the dependence on $T$, and the role of logarithmic factors in how much improvement is possible remains to be understood.
Towards improving this result, 
we now show that there exists an \envname{} that forces \CUCB{} to incur linear \regretname{} yet \HCUCB{} will switch to following \UCB{} (and hence incurs $\sqrt{\abs{\actionspace} T\log T}$ regret at worst). That is, \HCUCB{} recovers the improved performance of \CUCB{} when the \propertyname{} property holds, is never worse than \CUCB{}, and optimally outperforms \CUCB{} in some settings.

\begin{theorem}\label{fact:improved-causal-special}
There exists a constant $C$ such that for any $\actionspace$ and $\postcontextspace$ with $\abs{\actionspace} \geq 2$, there exists $\env{}\in\envspace$ so that for any $\highprobparam{T}$ used for \CUCB{} with $\priormargenv{}(\postcontext{})=\env{}(\postcontext{})$,
\*[
	\lim_{T \to \infty} \frac{\regretc{T}}{T} \geq 1/C,
\]
and if $\priormargenv{}(\postcontext{})=\env{}(\postcontext{})$ is used for \HCUCB{} then
\*[
	\lim_{T \to \infty}\frac{\regrethc{T}}{\abs{\actionspace} + \abs{\postcontextspace} + \sqrt{\abs{\actionspace} T \log T} + (\log T)\sqrt{T}} \leq C.
\]
\end{theorem}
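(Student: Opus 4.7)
The plan is to use a minimal refinement of the construction in the proof of \cref{fact:causal-failure} for both claims: \CUCB{} fails linearly on the instance, while on the same instance \HCUCB{}'s hypothesis test fires immediately after the warm-up (with high probability) and switches to \UCB{} for the remainder of the horizon. Fix any $z_0 \in \postcontextspace$, let $a_\circ$ be the first arm in \CUCB{}'s tie-breaking order, and let $a^* \in \actionspace \setminus \{a_\circ\}$. Define $\env{}$ by $\PPenv{a}[\postcontext{}=z_0]=1$ for every $a$, $\PPenv{a^*}[\response{}=1]=1$, and $\PPenv{a}[\response{}=0]=1$ for all $a \neq a^*$. Since $\env{a}(\postcontext{})$ is identical across arms, $\pseudoucb{t}(a)=\ucbcontext{t}(z_0)$ does not depend on $a$, so \CUCB{} always plays $a_\circ$ and incurs per-round regret $1$ regardless of $\delta_T$. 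Thus $\lim_{T \to \infty} \regretc{T}/T \geq 1$, giving the lower bound for any $C \geq 1$.

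For the upper bound I would decompose $\regrethc{T}$ into four contributions. (i) The forced-exploration phases play each arm at most $\lceil 5\sqrt{T}/|\actionspace|\rceil$ times, contributing at most $5\sqrt{T} + |\actionspace|$. (ii) Since $\priormargenv{} = \env{}$ exactly, a standard multinomial $L^1$ concentration bound shows that with probability at least $1-O(T^{-2})$ the MLE check is satisfied and $\priormargenv{}$ is not replaced. (iii) By Hoeffding applied to each arm and to each \postcontextname{} value, after the warm-up, with probability at least $1-O(T^{-1})$ we have $\ucbaction{t}(a^*) = 1 + O(T^{-1/4}\sqrt{|\actionspace|\log T})$ and, since every round's reward is pooled into $\ucbcontext{t}(z_0)$, $\ucbcontext{t}(z_0) = 1/|\actionspace| + O(T^{-1/4}\sqrt{|\actionspace|\log T})$. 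Hence $\ucbdiff{t}(a^*) = 1 - 1/|\actionspace| + o(1) \geq 1/2$ while the upper threshold in the test for $a^*$ is $O(T^{-1/4}\sqrt{|\actionspace||\postcontextspace|\log T}) = o(1)$, so for $T$ larger than a universal constant the test fires on its first post-warm-up round, contributing $O(1)$ regret beyond the warm-up. (iv) After the switch, \HCUCB{} plays \UCB{} on the full accumulated history, and since extra observations only tighten \UCB{}'s indices, \cref{fact:existing-ucb} bounds this phase by $2|\actionspace| + 4\sqrt{2|\actionspace|T\log T}$. Summing (i)--(iv) and absorbing low-probability failure events (each costing at most $T$) into the $(\log T)\sqrt{T}$ slack yields the claimed ratio bound with a universal $C$.

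The delicate step is (iii): quantifying that the hypothesis test fires essentially as soon as the warm-up ends. This hinges on the bias of $\ucbdiff{t}(a^*)$ being $\Omega(1)$ while the test thresholds are $o(1)$, but executing the concentration argument requires jointly handling the random counts $\numobsaction{t}$ and $\numobscontext{t}$, both of which grow like $\sqrt{T}/|\actionspace|$ after warm-up. The other three contributions are routine applications of Hoeffding and the \UCB{} regret bound.
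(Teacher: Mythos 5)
Your proposal is correct, but it proves the theorem with a different environment and a correspondingly different mechanism than the paper. The paper reuses the non-degenerate construction from the proof of \cref{fact:causal-failure}: arms are split into two groups with distinct marginals $\margmean(0)<\margmean(1)$ and genuinely different conditional means, so that at the end of the warm-up the pooled estimates $\estcondmeancontext{t}(\postcontextval)$ are still roughly unbiased and the hypothesis test passes; only after \CUCB{} has concentrated on the suboptimal group for long enough that $\numobsboth{t}(0,\postcontextval)$ is dwarfed by $\numobscontext{t}(\postcontextval)$ does $\estcondmeancontext{t}(\postcontextval)$ drift toward the wrong conditional means, and the paper shows the test consequently fires by round $(\log T)\sqrt{T}$ --- which is exactly why that term appears in the theorem's denominator. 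Your degenerate environment (identical point-mass marginals, deterministic rewards) short-circuits all of this: $\pseudoucb{t}$ is constant across arms, so \CUCB{} is frozen on its tie-break arm (and even randomized tie-breaking would give per-round regret $1-1/\abs{\actionspace}\geq 1/2$), and the test's violation for the good arm is already $1-1/\abs{\actionspace}-o(1)$ at the first post-warm-up round, so the switch is immediate and no drift analysis or concentration is needed (your invocations of Hoeffding and the multinomial bound are vacuous here, since the rewards and \postcontextnames{} are deterministic). Both arguments establish the existence statement; yours is shorter and yields a slightly stronger bound on its instance (the $(\log T)\sqrt{T}$ term is not even needed), while the paper's exhibits the failure and recovery on a family of non-degenerate instances and actually exercises the drift mechanism that motivates the $(\log T)\sqrt{T}$ slack in the statement. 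One small correction: expected regret from failure events of probability $O(T^{-1})$ is $O(1)$, so it is absorbed into the additive constant rather than into the $(\log T)\sqrt{T}$ term.
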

\begin{remark}
\cref{fact:improved-causal-special} could be stated with $\priormargenv{}(\postcontext{})$ only $\eps$-close to $\env{}(\postcontext{})$, but for simplicity we have supposed $\eps=0$ to highlight the role of the \propertyname{} property.
\end{remark}

\section{Simulations}\label{sec:simulations}

We now study the empirical performance of these algorithms in two key settings, corresponding to a \propertyname{} \envname{} and the lower bound \envname{} from \cref{fact:causal-failure}. 
We compare our algorithm \HCUCB{} with \UCB{} \citep{auer02stochastic}, \CUCB{} \citep{lu20causal}, \CUCBt{} \citep{nair21budget}, and \Corral{} (for which we use the learning rate and algorithm prescribed by \citep{agarwal17corral} with the epoch-based, scale-sensitive \UCB{} prescribed by \citep{arora21corral}); for all algorithms, we use the parameters that are optimal as prescribed by existing theory.
To focus solely on the impact of the \propertyname{} property, we set $\priormargenv{}(\postcontext{}) = \env{}(\postcontext{})$.
The results of this section are a representative simulation demonstrating empirically that
(a) for worst-case \envnames{}, both \CUCB{} and \CUCBt{} incur linear \regretname{}, while \HCUCB{} successfully switches to incur sublinear \regretname{} to compete with \Corral{} and \UCB{}, and (b) for \propertyname{} \envnames{}, \HCUCB{} and \CUCB{} enjoy improved performance compared to \UCB{}, \Corral{}, and \CUCBt{}, all three of which have regret growing like $\sqrt{\abs{\actionspace} T}$.
Implementation details are available in \cref{sec:simulation-details} and
code
can be found at \href{https://github.com/blairbilodeau/adaptive-causal-bandits}{https://github.com/blairbilodeau/adaptive-causal-bandits}.

\subsection{\propertyNAME{} \envName{}}

First, we consider a \propertyname{} \envname{}. 
Taking the gap $\Delta = \sqrt{\abs{\actionspace}(\log T) / T}$, the fixed conditional distribution for $\postcontextspace = \{0,1\}$ is
$\reward{} \setdelim \postcontext{}
    \sim \bernoullidist(1/2 + (1-\postcontext{})\Delta)$.
Then, for a small $\eps$ (we take $\eps=0.0005$), we set $\PP_{\env{1}}[\postcontext{}=0]=1-\eps$ and $\PP_{\env{\actionval}}[\postcontext{}=0]=\eps$ for all other $\actionval\in\actionspace\setminus\{1\}$. Thus, $\optactionval=1$, and the \actionname{}s are separated by $\Delta$. 
In summary, each $\postcontextval\in\postcontextspace$ has positive probability of being observed, yet each \actionname{} nearly deterministically fixes $\postcontext{}$.

In \cref{fig:sims} (left panel) we observe three main effects: (a) \CUCB{} and \HCUCB{} perform similarly (their regret curves overlap), both achieving much smaller \regretname{} that remains unchanged by increasing $\abs{\actionspace}$, (b) \UCB{} grows at the worst-case rate of roughly $\sqrt{\abs{\actionspace} T \log T}$, not taking advantage of the \propertyname{} property, and (c) neither \Corral{} nor \CUCBt{} realize the benefits of the \propertyname{} property, since the \regretname{} increases with $\abs{\actionspace}$ and empirically they perform worse than \UCB{}. We note that the x-axis starts at $T=500$ to satisfy the minor requirement of $T > \abs{\actionspace}^2$.

\begin{figure}[h]
\centering
\neurips{\includegraphics[scale=0.32]{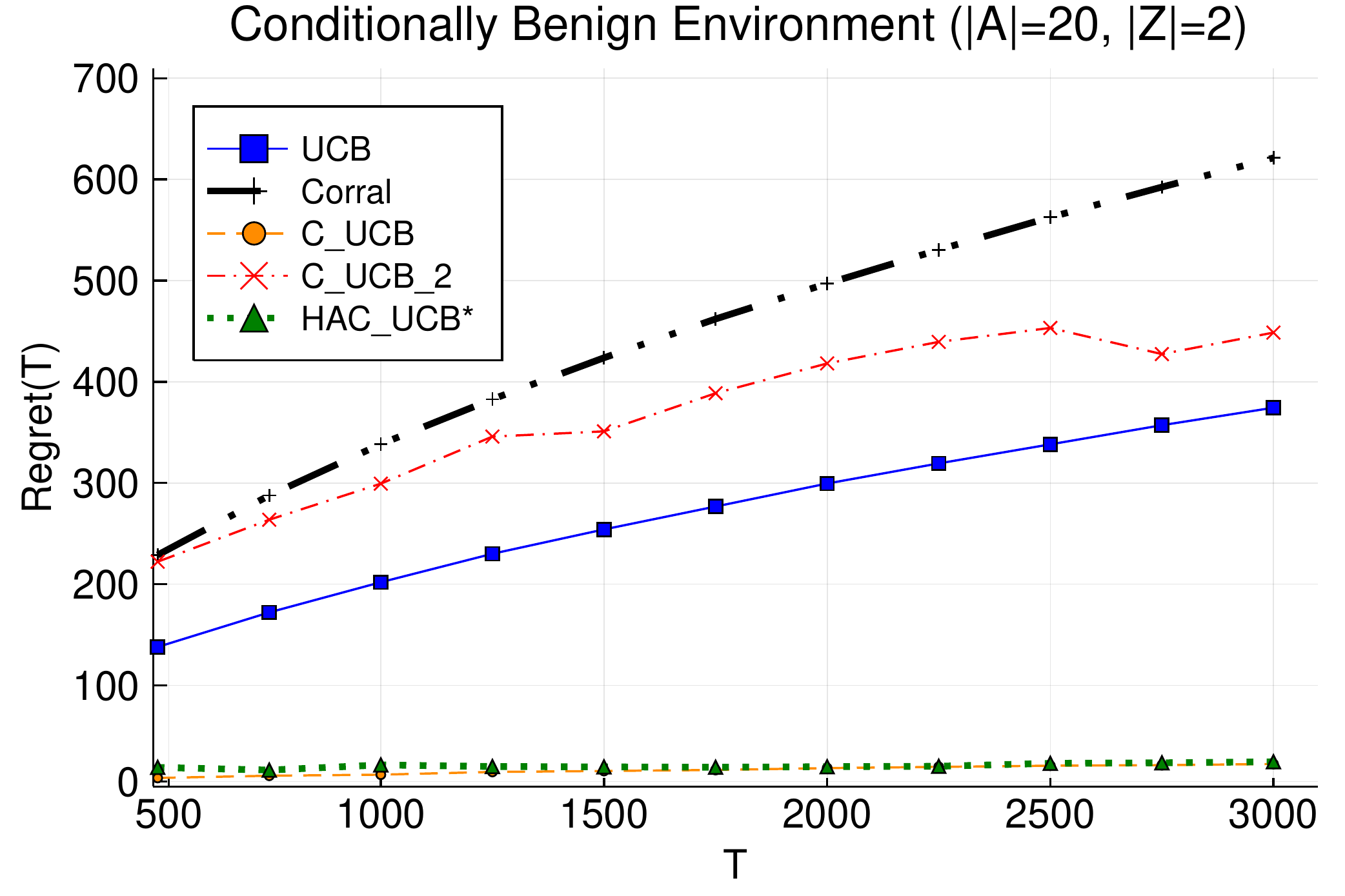}}
\neurips{\includegraphics[scale=0.32]{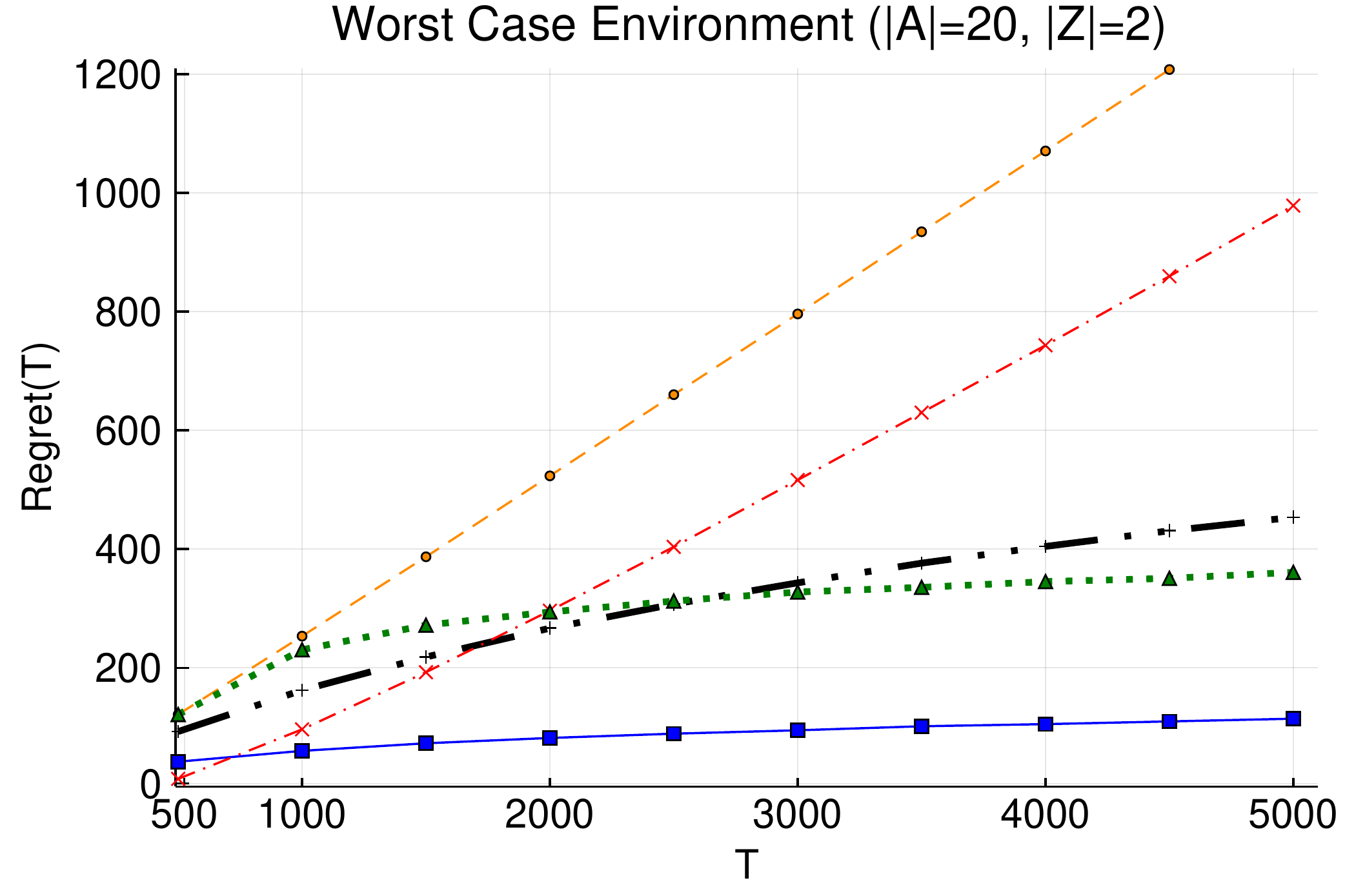}}
\preprint{\includegraphics[scale=0.35]{figures/cb.pdf}}
\preprint{\includegraphics[scale=0.35]{figures/worst.pdf}}
\caption{Regret when the \propertyname{} property holds (left) and when it fails (right).}
\label{fig:sims}
\end{figure}

\subsection{Worst Case \envName{}}

Second, we consider an \envname{} that is \emph{not} \propertyname{}. 
We use the same general \envname{} from our lower bound in \cref{fact:causal-failure}, where the causal algorithms learn a biased estimate of which $\postcontextval\in\{0,1\}$ has a higher conditional mean since $\postcontext{}$ is not a \unobservedname{}, and consequently they concentrate onto a bad \actionname{}. 

In \cref{fig:sims} (right panel), we again observe three main effects: (a) both \CUCB{} and \CUCBt{} incur linear \regretname{}, as prescribed by \cref{fact:causal-failure}, (b) \HCUCB{} achieves sublinear \regretname{}, although it is worse than \UCB{} (we show in \cref{fact:adaptive-impossible-short} that optimal adaptivity is impossible), and (c) \Corral{} does not suffer linear regret, but appears to still do worse than \HCUCB{} for large $T$. 

\section{Adaptivity for Causal Structures}\label{sec:adaptivity}

Thus far we have analyzed the \regretname{} of various algorithms in two cases: \envnames{} that either are or are not \propertyname{}. Minimax algorithms (\UCB{}) fail to achieve smaller regret for \propertyname{} \envnames{}, while causal algorithms (\CUCB{}) fail catastrophically in \envnames{} that are not \propertyname{}.
In this section, we formalize the notion of strict adaptivity (\emph{adaptive minimax optimality}) with respect to the \propertyname{} property, show that it is \emph{impossible} to be adaptively minimax optimal with respect to the \propertyname{} property, and discuss a relaxed notion of optimal adaptivity based on Pareto optimality.

\subsection{Generic Algorithms}

In order to describe adaptivity (and its impossibility) in the \stochgamename{}, we require a higher level of abstraction than a \policyname{}, which we achieve with \emph{\algonames{}}. 
It is possible to define algorithms and the corresponding notion of adaptivity in abstract generality.
However, we take the same perspective as \citet{bilodeau20semiadv} (who define adaptive minimax optimality with respect to relaxations of the \iid{} assumption), and sacrifice some generality by defining \algonames{} using the specific objects we study in this work.
Formally, an \algoname{} is any map from the problem-specific inputs to the space of compatible \policynames{}, denoted by
\*[
	\algo: (\actionspace, \postcontextspace, T, \priormargenv{}(\postcontext{}))
	\mapsto \algo(\actionspace, \postcontextspace, T, \priormargenv{}(\postcontext{})) \in \policyspace(\actionspace, \postcontextspace, T).
\]
We denote the set of all algorithms by $\pcalgospace$, and the subset of algorithms that are constant in $(\postcontextspace, \priormargenv{}(\postcontext{}))$ by $\mabalgospace$; this subset contains the classical bandit algorithms that are agnostic to knowledge of \postcontextnames{}, or more specifically, do not exploit causal structure.

\subsection{Adaptive Minimax Optimality}\label{sec:adaptive-optimality}

Let
$\propertyfunc: \envspace{} \mapsto \{0,1\}$ encode whether a given \envname{} satisfies a certain property of interest; in this work, it is always an indicator for whether $\env{}$ is \propertyname{}. 
Further, for every $\margdist\in\probspace(\postcontextspace)$, denote the set of all \envnames{} with marginal $\margdist$ by $\envmargspace(\margdist) = \{\env{}\in\envspace: \env{}(\postcontext{}) = \margdist\}$.
There are multiple ways one could define optimal adaptivity: we propose the following notion of strict adaptivity, 
which requires that the \statname{} to do as well as they possibly could have if they had access to $\propertyfunc(\env{})$ in advance, but without this knowledge.

\begin{definition}\label{def:adapt-optimal}
An \algoname{} $\algo\in\pcalgospace$ is \emph{adaptively minimax optimal} with respect to $\propertyfunc$ if and only if there exists $C>0$ such that for all $\actionspace$, $\postcontextspace$, $\margdist\in\probspace(\postcontextspace)^\actionspace$, and $T$,
\noticml{
\[\label{eqn:worst-adapt}
	\sup_{\env{}\in\envmargspace(\margdist)}
	\regretalgo{T}
	\leq 
	C \,
	\inf_{\policy{}\in\policyspace(\actionspace, \postcontextspace, T)} 
	\sup_{\env{}\in\envmargspace(\margdist)}
	\regretboth{T}
\]
}
\icml{
\[\label{eqn:worst-adapt}
	&\sup_{\env{}\in\envmargspace(\margdist)}
	\regretalgo{T}\\
	&\qquad \leq 
	C \,
	\inf_{\policy{}\in\policyspace(\actionspace, \postcontextspace, T)} 
	\sup_{\env{}\in\envmargspace(\margdist)}
	\regretboth{T}
\]
}
and
\noticml{
\[\label{eqn:benign-adapt}
	\sup_{\env{}\in\envmargspace(\margdist),\, \propertyfunc(\env{})=1}
	\regretalgo{T}
	\leq 
	C \,
	\inf_{\policy{}\in\policyspace(\actionspace, \postcontextspace, T)} 
	\sup_{\env{}\in\envmargspace(\margdist),\, \propertyfunc(\env{})=1}
	\regretboth{T}.
\]
}
\icml{
\[\label{eqn:benign-adapt}
	&\sup_{\env{}\in\envmargspace(\margdist),\, \propertyfunc(\env{})=1}
	\regretalgo{T} \\
	&\quad \leq 
	C \,
	\inf_{\policy{}\in\policyspace(\actionspace, \postcontextspace, T)} 
	\sup_{\env{}\in\envmargspace(\margdist),\, \propertyfunc(\env{})=1}
	\regretboth{T}.
\]
}
\end{definition}

\subsection{Impossibility of Strict Adaptivity}\label{sec:adaptive-impossible}

We now show it is \emph{impossible} for any \algoname{} to always realize the benefits of the \propertyname{} property while also recovering the worst-case rate of $\sqrt{\abs{\actionspace} T}$ 
\noticml{(e.g., Theorems~9.1 and 15.2 of \citep{bandit20book}),}
\icml{\citep[e.g., Theorems~9.1 and 15.2 of][]{bandit20book},}
even when the \algoname{} has access to the true marginals.
Our proof strategy is a modification of the finite-time lower bounds from Section~16.2 of \citet{bandit20book}. 
Notably,  the lower bounds of \citet{lu20causal} already imply that any algorithm that does not take advantage of causal structure cannot be adaptively minimax optimal. 
We prove a significantly stronger result: even algorithms that use $\postcontextspace$ and $\priormargenv{}(\postcontext{}) = \env{}(\postcontext{})$ cannot be adaptively minimax optimal!

\begin{theorem}\label{fact:adaptive-impossible-short}
Let $\algo{}\in\pcalgospace$ be such that there exists $C>0$ such that for all $\actionspace$, $\postcontextspace$, and $T$,
\*[
	\sup_{\env{}\in\envspace}
	\regretalgoadapt{T} \leq C\,\sqrt{\abs{\actionspace} T}.
\]
There exists a constant $C'$ such that for all $\actionspace$, $\postcontextspace$, and $T \geq \abs{\actionspace}$, there exists \propertyname{} $\env{}\in\envspace$ with
\*[
	\regretalgoadapt{T}
	\geq 
	C' \sqrt{\abs{\actionspace} T}.
\]
\end{theorem}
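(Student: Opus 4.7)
The strategy is to reduce the claim to a classical $\abs{\actionspace}$-armed bandit lower bound via a family of \propertyname{} \envnames{} whose interventional marginals $\env{}(\postcontext{})$ all coincide (so that the side-information $\priormargenv{}(\postcontext{})=\env{}(\postcontext{})$ provided to $\algo$ is useless for distinguishing them) and whose differences live entirely in the conditional $\tilde p(\response{}\setdelim\postcontext{})$ that must be learned from rewards. This parallels the Bretagnolle--Huber argument from Section~16.2 of Lattimore--Szepesvari, with the additional requirement that every environment in the family be \propertyname{}.

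Concretely, take $\postcontextspace=\actionspace$ and, for each $i\in\actionspace$, define $\env{i}$ so that choosing action $a$ produces $\postcontext{}=a$ almost surely and $\response{}\setdelim\postcontext{}=z\sim\bernoullidist(1/2+\Delta\ind\{z=i\})$, with $\Delta\in(0,1/2)$ selected below. Each $\env{i}$ is \propertyname{} via the base distribution whose conditional $\tilde p_i(\response{}\setdelim\postcontext{}=z)=\bernoullidist(1/2+\Delta\ind\{z=i\})$ depends on $z$ alone. Moreover $\env{i}(\postcontext{})=\{\delta_a:a\in\actionspace\}$ is the same collection of Dirac masses for every $i$, so the side-information $\priormargenv{}(\postcontext{})=\env{}(\postcontext{})$ is identical across the family and cannot separate $\env{1},\ldots,\env{\abs{\actionspace}}$. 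Introduce also the null reference $\env{0}$ with $\response{}\setdelim\postcontext{}\sim\bernoullidist(1/2)$.

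Under $\env{i}$, action $i$ is uniquely optimal with gap $\Delta$, so writing $R_i$ for $\regretalgoadapt{T}$ under $\env{i}$ gives $R_i=\Delta\cdot\EE_{\env{i},\algo}[T-\numobsaction{T}(i)]$. Since $\postcontext{}$ is a deterministic function of the action and only $\response{}\setdelim\postcontext{}=i$ differs between $\env{0}$ and $\env{i}$, the KL chain rule yields $\KL{\PP_{\env{0},\algo}}{\PP_{\env{i},\algo}}\leq c\,\Delta^2\,\EE_{\env{0},\algo}[\numobsaction{T}(i)]$ for a universal $c$. Applying Bretagnolle--Huber to the event $\{\numobsaction{T}(i)>T/2\}$ and Markov's inequality on both terms produces
\*[
	\frac{2\,\EE_{\env{0},\algo}[\numobsaction{T}(i)]}{T}+\frac{2R_i}{\Delta T}\;\geq\;\frac{1}{2}\exp\rbra{-c\,\Delta^2\,\EE_{\env{0},\algo}[\numobsaction{T}(i)]}.
\]
Summing over $i\in\actionspace$, using $\sum_i\EE_{\env{0},\algo}[\numobsaction{T}(i)]=T$, and applying Jensen's inequality to the convex map $x\mapsto e^{-x}$ gives
\*[
	2+\frac{2\abs{\actionspace}\max_i R_i}{\Delta T}\;\geq\;\frac{\abs{\actionspace}}{2}\exp\rbra{-c\Delta^2 T/\abs{\actionspace}}.
\]
Choosing $\Delta=c'\sqrt{\abs{\actionspace}/T}$ for a sufficiently small universal $c'$ (legal under $T\geq\abs{\actionspace}$, which forces $\Delta\leq 1/2$) keeps the exponential bounded below by a positive constant, and rearranging yields $\max_i R_i\geq C'\sqrt{\abs{\actionspace} T}$ for a universal $C'$.

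The main obstacle is the construction itself: because $\algo$ receives the true marginals $\env{}(\postcontext{})$, any family whose $\env{i}(\postcontext{})$ differ would leak the identity of the optimal action. The key trick is to make $\postcontext{}$ a functionally redundant copy of the chosen action, so that all interventional marginals collapse onto the identical collection of Dirac masses and every bit of distinguishing signal is pushed into the shared reward conditional that $\algo$ must learn from observed rewards exactly as in the classical $\abs{\actionspace}$-armed bandit problem. Note that the worst-case hypothesis $\sup_{\env{}}\regretalgoadapt{T}\leq C\sqrt{\abs{\actionspace}T}$ is never actually invoked in the derivation---the lower bound is information-theoretic and holds for every $\algo\in\pcalgospace$---but it is included in the statement to frame the conclusion as a genuine obstruction to simultaneously achieving worst-case optimality and adaptivity to the \propertyname{} property.
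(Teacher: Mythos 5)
There is a genuine gap, and it is structural rather than cosmetic. Your construction sets $\postcontextspace=\actionspace$, but the theorem quantifies over \emph{all} $\actionspace$, $\postcontextspace$, and $T\geq\abs{\actionspace}$: the witnessing environment must live in $\probspace(\postcontextspace\times\responsespace)^{\actionspace}$ for the \emph{given} $\postcontextspace$, and the only interesting regime is $\abs{\postcontextspace}\ll\abs{\actionspace}$ (otherwise $\sqrt{\abs{\postcontextspace}T}$ and $\sqrt{\abs{\actionspace}T}$ coincide and no impossibility of adaptation is being shown). In that regime your approach cannot work even in principle: any family consisting entirely of \propertyname{} environments with a common, known marginal $\margdist$ is defeated by \CUCB{}, which pools reward observations across actions to estimate the $\abs{\postcontextspace}$ shared conditionals $\baseconddist(\response{}\setdelim\postcontext{}=\postcontextval)$ and achieves $O(\sqrt{\abs{\postcontextspace}T\log T})$ regret uniformly over such a family (\cref{fact:existing-causal}). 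Hence no algorithm-independent, purely-\propertyname{} construction can force $\Omega(\sqrt{\abs{\actionspace}T})$ when $\abs{\postcontextspace}\log T\ll\abs{\actionspace}$. This also falsifies your closing remark that the worst-case hypothesis ``is never actually invoked'': the hypothesis is not window dressing but the engine of the result, since without it the conclusion is simply false (take $\algo=\CUCB{}$ with the true marginals).

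The paper's proof is shaped precisely by this obstruction. It uses a \emph{single} \propertyname{} base environment $\baseenv{}$ together with alternatives $\specenv{}{\Delta}{\refactionval}$ that share the same marginals but are \emph{not} \propertyname{} (arm $\refactionval$ gets a perturbed conditional $3/4+2\Delta(1+\eps)$ on $\postcontextspaceidx{0}$ while all other arms keep the base conditional). The Bretagnolle--Huber argument then bounds $\regretcustom{\baseenv{}}{\policymarg{}}{T}+\regretcustom{\specenv{}{\Delta}{\refactionval}}{\policymarg{}}{T}$ from below, and the worst-case hypothesis is applied to cap this sum by $2C\sqrt{\abs{\actionspace}T}$, which forces $\EE_{\baseenv{},\policymarg{}}[\numobsaction{T}(\refactionval)]\gtrsim\Delta^{-2}\log(\Delta\sqrt{T}/(8C\sqrt{\abs{\actionspace}}))$ for every $\refactionval\neq 1$; summing the resulting per-arm regret over the $\abs{\actionspace}-1$ suboptimal arms and tuning $\Delta\asymp C\sqrt{\abs{\actionspace}/T}$ yields $\Omega(\sqrt{\abs{\actionspace}T})$ on the benign environment $\baseenv{}$. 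In short: the cost is incurred on the benign instance because a worst-case-optimal algorithm must hedge against non-benign alternatives that the known marginals cannot rule out. Your redundant-copy trick ($\postcontext{}=\actionrv{}$) is a valid way to embed the classical lower bound when one is free to choose $\postcontextspace$, but it does not prove the stated theorem.
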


\subsection{Pareto-Adaptive Minimax Optimality}\label{sec:pareto-adaptive-optimality}

In light of this impossibility result, it is of interest to characterize relaxations of strict adaptivity that are achievable.
\citet{koulen13pareto} and \citet{lattimore15pareto} introduce the \emph{Pareto frontier} of \regretname{}, which when applied to the \propertyname{} property, is all tuples of \regretname{} guarantees
such that improving the \regretname{} in \propertyname{} \envnames{} would necessarily force the worst-case \regretname{} to increase.
We propose that it is desirable for an \algoname{} to do as well as possible in the worst-case, subject to always realizing smaller \regretname{} on \propertyname{} \envnames{}. 
Formally, let $\adapcalgospace$ be the subset of $\algo\in\pcalgospace$ that satisfy \cref{eqn:benign-adapt} for some constant $C$.

\begin{definition}
An \algoname{} $\adaalgo\in\adapcalgospace$ is \emph{Pareto-adaptively minimax optimal} with respect to $\propertyfunc$ if and only if there exists $C>0$ such that for all $\algo\in\adapcalgospace$, $\actionspace$, $\postcontextspace$, $\margdist\in\probspace(\postcontextspace)^\actionspace$, and $T$,
\noticml{
\*[
	\sup_{\env{}\in\envmargspace(\margdist)}
	\regretadaalgo{T}
	\leq 
	C \,
	\sup_{\env{}\in\envmargspace(\margdist)}
	\regretalgo{T}.
\]
}
\icml{
\*[
	&\sup_{\env{}\in\envmargspace(\margdist)}
	\regretadaalgo{T}\\
	&\qquad \leq 
	C \,
	\sup_{\env{}\in\envmargspace(\margdist)}
	\regretalgo{T}.
\]
}
\end{definition}	

It remains an open problem to prove whether \HCUCB{} is Pareto-adaptively minimax optimal, and more generally to identify the Pareto frontier for the causal bandit problem.

\section{Related Work}
\label{sec:literature}

\citet{kocaoglu17learning} and \citet{lindgren18learning} efficiently used interventions to learn a causal graph 
under standard causal assumptions \citep{pearl09causality2}.
\citet{hyttinen13causal} identified the minimal number of experiments needed to learn underlying causal structure of variables 
with only a linear structural assumption. 
\citet{lee19structural,lee20intervene,lee20identifiable} identified the minimal set of interventions that permit causal effect identification in the presence of known causal structure, 
while 
\citet{kumor21imitation} studied analogues of the \emph{back-door condition} for identifying causal effects with unobserved confounders.
\citet{bareinboim15causal} introduced causal bandits with a binary motivating example to demonstrate that empirically better performance can be achieved by exploiting a specific, known causal structure. 
\citet{lattimore16causal} and \citet{yabe18propogating} studied best-arm identification, where the \statname{} does not incur any penalty for exploration rounds. 
Given knowledge of the causal graph informing the interventions and response, they separately proved that exponential improvements in the dependence of the \regretname{} on the \actionname{} set are possible provided the underlying distribution on the causal graph is sufficiently ``balanced''. 

\citet{sen17latentcausal} obtained an instance-dependent \regretname{} bound under causal assumptions, but obtained the wrong dependence on the arm gap ($\Delta^{-2}$ rather than $\Delta^{-1}$), and consequently in the worst-case the dependence on $T$ may still dwarf the structural benefits.
\citet{sen17interventions} studied an alternative type of intervention, where rather than fixing a node only the conditional probabilities are changed. This notion is easily stated in our notation, since we allow for abstract families of distributions (indexed by abstract ``interventions'') to define a \envname{}. However, they focused on distribution-dependent guarantees under stronger causal assumptions, and hence our results are not directly comparable. 

All of the above \regretname{} bounds heavily require assumptions about the causal graph, and without such assumptions the presumed information learned from non-intervening rounds can catastrophically mislead the \statname{} in exactly the same way that \CUCB{} suffers in our \cref{fact:causal-failure}. Hence, it remains an interesting open problem to study adaptivity in each of these variations of the causal bandit setting, and our work provides a stepping stone to do so.

Prior to the present work, \citet{lu20causal} has already been extended in multiple directions. 
\citet{dekroon20separating} observed that \CUCB{} can be reduced to requiring only a separating set, but only prove the regret is no worse than that of \UCB{} if a separating set is observed.
The authors remark that a causal discovery algorithm could in principle be used to learn the separating set online, but observed in their experiments that they obtain biased estimates and hence there are no convergence guarantees.
\citet{lu21unknown} replaced knowledge of the causal graph with the assumption that the causal graph has a tree structure, and incorporated the cost of learning the tree into the full \regretname{} bound.
\citet{nair21budget} provided an instance-dependent \regretname{} bound for an alternative algorithm to \CUCB{}, which they call \CUCBt{}, in the presence of the full causal graph. 
While they demonstrated empirically that \CUCBt{} outperforms \CUCB{} for certain instances, we find that \CUCBt{} performs much worse when a \unobservedname{} is observed, and the provable linear lower bound (\cref{fact:causal-failure}) also applies to \CUCBt{} when there are no observed \unobservednames{}.

\section{Discussion}\label{sec:discussion}

We have demonstrated that the
improved regret possible when a \unobservedname{} is observed can also be realized in the multi-armed bandit problem by requiring only certain conditional independencies, which we have formalized using the \propertyname{} property.
We proved that it is impossible to optimally adapt to this property, but provided a new algorithm (\HCUCB{}) that simultaneously recovers the improved regret for \propertyname{} \envnames{} and significantly improves on prior work when the \propertyname{} property does not hold.
Crucially, our algorithm requires 
no more assumptions about the world than vanilla bandit algorithms.
We expect our results to spur future work on (a) improved adaptation to the \propertyname{} property, (b) relaxations of the \propertyname{} property for which optimal adaptation is possible, and (c) adaptation in more general partial feedback settings.

In practice, \HCUCB{} will be most useful in settings with a large action space and intermediate variables that may plausibly satisfy the \propertyname{} property. 
In passing, we mention that one such example is learning the causal effect of genome edits (interventions) on disease phenotypes.
Here, the \postcontextname{} could be gene expressions that are sometimes assumed to be a $d$-separator (e.g., \citep{ainsworth16genes}).
We leave the implementation of our algorithm in clinical settings and collaboration with practitioners for future work.

\section*{Acknowledgements}\label{sec:acknowledgements}
BB is supported by an NSERC Canada Graduate Scholarship and the Vector Institute.
DMR is supported in part by an NSERC Discovery Grant, Canada CIFAR AI Chair funding through the Vector Institute, and an Ontario Early Researcher Award. 
This material is based also upon work supported by the United States Air Force under Contract No.\ FA850-19-C-0511. Any opinions, findings and conclusions or recommendations expressed in this material are those of the author(s) and do not necessarily reflect the views of the United States Air Force.
Part of this work was done while BB and DMR were visiting the Simons Institute for the Theory of Computing.
We thank Zachary Lipton for suggesting to consider simultaneously adapting to estimation of the marginal distributions, and we thank Chandler Squires, Vasilis Syrgkanis, and Angela Zhou for helpful discussions.

\bibliographystyle{abbrvnat}
\bibliography{bib-files/adaptive-ucb}

\neurips{\input{checklist}}

\newpage
\icml{\onecolumn}
\appendix

\section{Proofs for Causal Equivalences}\label{sec:causal-proofs}

We begin with a restating of standard definitions for completeness.

\subsection{Standard Results from Causal Literature}

To be more explicit about the role of $\reward$ in the causal setting (namely, it may be continuous), we introduce some more notation.
Let $\nodevec = (\noderv{1},\dots,\noderv{\numnodes}, \response{})$ denote random variables each taking values in $\nodespace{i} = \{\nodeval{i}{1},\dots,\nodeval{i}{\numnodevals{i}}\}$ and $[0,1]$ respectively. A \emph{\causalgraphname{}} is any directed acyclic graph (DAG) $\graphset$ defined on the nodes $(\noderv{1},\dots,\noderv{\numnodes},\response{}\,)$ such that (a) $\response{}$ is a leaf node and (b) if there is a directed arrow from $\noderv{i}$ to $\noderv{i'}$, then $i < i'$. 
Let $\graphdists$ denote the set of all probability distributions $\basedist$ on $(\noderv{1},\dots,\noderv{\numnodes},\response{}\,)$ such that the marginal probabilities over $(\noderv{1},\dots,\noderv{\numnodes})$ are all strictly positive. 

\begin{definition}[Markovian parents, Definition 1.2.1 of \citep{pearl09causality2}]
\label{def:markov-parents}
For any $\basedist \in \graphdists$ and $i\in[\numnodes]$, the \emph{Markovian parents of $\noderv{i}$ under $\basedist{}$} is the minimum-cardinality subset $\nodevec' \subseteq (\noderv{1},\dots,\noderv{i-1})$ such that $\noderv{i} \perp (\noderv{1},\dots,\noderv{i-1}) \setminus \nodevec' \setdelim \nodevec'$ under $\basedist$.
We denote this by $\probparents{\basedist}{i}$.
Similarly, $\probparents{\basedist}{\response{}}$ is the minimum-cardinality subset $\nodevec' \subseteq \nodevec$ such that $\response{} \perp \nodevec \setminus \nodevec' \setdelim \nodevec'$ under $\basedist$.
\end{definition}

\begin{definition}[Graphical parents]
\label{def:graph-parents}
For any \causalgraphname{} $\graphset$, 
the \emph{graphical parents of $\noderv{i}$ under $\graphset$} is the unique subset $\nodevec'\subseteq\nodevec$ of variables that have a directed arrow into $\noderv{i}$. We denote this by $\graphparents{\graphset}{i}$.
Similarly, $\graphparents{\graphset}{\response{}}$ is the unique subset $\nodevec'\subseteq\nodevec$ of variables that have a directed arrow into $\response{}$.
\end{definition}

\begin{definition}[Markov relative, Theorem 1.2.7 of \citep{pearl09causality2}]
\label{def:markov-relative}
A distribution $\basedist\in\graphdists$ is \emph{Markov relative to a \causalgraphname{} $\graphset$} if any only if for all $\noderv{}\in\nodevec\cup\{\response{}\,\}$, $\probparents{\basedist{}}{\noderv{}} \subseteq \graphparents{\graphset}{\noderv{}}$.
\end{definition}

\begin{remark}[Equation 1.33 of \citep{pearl09causality2}]\label{fact:markov-factoring}
If $\basedist\in\graphdists$ is Markov relative to $\graphset$, then for all measurable $\borelset \subseteq \responsespace$ and $(j_{1},\dots,j_{\numnodes}) \in \prod_{i=1}^\numnodes [\numnodevals{i}]$,
\*[
	\basedist(\response{}\in\borelset, \noderv{1}=\nodeval{1}{j_1},\dots,\noderv{\numnodes}=\nodeval{\numnodes}{j_\numnodes})
	= \basedist(\response{}\in\borelset \setdelim \graphparents{\graphset}{\response{}}=\dummyvecval_{\response{}})
	\prod_{i=1}^\numnodes \basedist(\noderv{i}=\nodeval{i}{j_i} \setdelim \graphparents{\graphset}{i}=\dummyvecval_i),
\]
where the conditioning is understood to be on the event where the parents take the specific values defined by $\dummyvecval = (\nodeval{1}{j_1},\dots,\nodeval{\numnodes}{j_\numnodes})$.
\end{remark}

\begin{definition}[Causal intervention, Definition 1.3.1 of \citep{pearl09causality2}]\label{def:causal-intervention}
Let $\basedist\in\graphdists$ be Markov relative to $\graphset$. The \emph{interventional distribution} induced on $\nodevec$ by the \emph{intervention} $\actionval = \doaction(\noderv{i_1} = \nodeval{i_1}{j_{i_1}},\dots,\noderv{i_\ell} = \nodeval{i_\ell}{j_{i_\ell}})$ is
\*[
	&\hspace{-1em}\intervenedist{\actionval}(\response{}\in\borelset, \noderv{1}=\nodeval{1}{j_1},\dots,\noderv{\numnodes}=\nodeval{\numnodes}{j_\numnodes}) \\
	&= \ind\{\noderv{i_1} = \nodeval{i_1}{j_{i_1}},\dots,\noderv{i_\ell} = \nodeval{i_\ell}{j_{i_\ell}}\}
	\basedist(\response{}\in\borelset \setdelim \graphparents{\graphset}{\response{}} = \dummyvecval_{\response{}})
	\prod_{i\not\in\{i_1,\dots,i_\ell\}} \basedist(\noderv{i}=\nodeval{i}{j_i} \setdelim \graphparents{\graphset}{i} = \dummyvecval_i).
\] 
\end{definition}

\begin{definition}[$d$-Separated, Definition~2.4.1 of \citep{pearl16primer}]
\label{def:d-sep}
\emph{$\condvec$ $d$-separates $\response{}$ from $\intervenevec$ (on $\graphset$)} if and only if every path between $\intervenevec$ and $\response{}$ is \emph{blocked}; that is, every path contains either (a) $\bigcirc \rightarrow B \rightarrow \bigcirc $ or $\bigcirc \leftarrow B \rightarrow \bigcirc $ such that $B \in \condvec$, or (b) $\bigcirc \rightarrow B \leftarrow \bigcirc $ with no descendents of $B$ (including itself) in $\condvec$.
\end{definition}

\begin{definition}[Back-Door Path, Section~3.3.1 of \citep{pearl09causality2}]
A path from $\condvec$ to $\condvec'$ is a \emph{back-door path} if it begins with an arrow directed into $\condvec$.
\end{definition}

\begin{definition}[Front-Door Criterion, Definition~3.3.3 of \citep{pearl09causality2}]
\label{def:front-door}
\emph{$\condvec$ satisfies the front-door criterion relative to $(\intervenevec,\response{})$} on $\graphset$ if and only if (a) all directed paths from $\intervenevec$ to $\response{}$ pass through $\condvec$, (b) there is no unblocked back-door path from $\intervenevec$ to $\condvec$, and (c) all back-door paths from $\condvec$ to $\response{}$ are blocked by $\intervenevec$.
\end{definition}

\subsection{Proof of \cref{fact:benign-equals-dsep}}

We first state an intuitive result about $d$-separation that is used often in the causal literature, but we could not find stated or proved precisely as follows.
\begin{lemma}\label{fact:dsep-parents}
If $\condvec$ $d$-separates $\response{}$ from $\intervenevec$ and $\graphparents{\graphset}{\intervenevec} \subseteq \nodevec\setminus\condvec$, then $\condvec$ $d$-separates $\response{}$ from $(\intervenevec, \graphparents{\graphset}{\intervenevec})$.
\end{lemma}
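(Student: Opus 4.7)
The plan is to show that every path from a node in $\intervenevec \cup \graphparents{\graphset}{\intervenevec}$ to $\response{}$ is blocked by $\condvec$. Paths starting in $\intervenevec$ are blocked by hypothesis, so the work is in handling a path $P$ from some $W \in \graphparents{\graphset}{\intervenevec} \setminus \intervenevec$ to $\response{}$. By definition of $\graphparents{\graphset}{\intervenevec}$, pick $X \in \intervenevec$ with $W \to X$ an edge of $\graphset$.

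The strategy is to reduce to the assumed $d$-separation by constructing an auxiliary path $Q$ from $X$ to $\response{}$ in one of two ways:

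\textbf{Case 1: $X$ is not a vertex of $P$.} Prepend the edge $X \leftarrow W$ to $P$ to obtain a valid (no repeated vertices) path $Q$ from $X$ to $\response{}$. Since $X \in \intervenevec$, the hypothesis gives that $\condvec$ blocks $Q$ at some internal node $B$ of $Q$. The internal nodes of $Q$ are $W$ together with the internal nodes of $P$. On $Q$, the node $W$ carries an outgoing edge to $X$, so it is a non-collider; and by the assumption $\graphparents{\graphset}{\intervenevec} \subseteq \nodevec \setminus \condvec$ we have $W \notin \condvec$, so $W$ cannot be the blocker. Hence $B$ is an internal node of $P$, and its two incident edges on $Q$ and on $P$ are identical, so its collider status is the same on both paths; descendant sets depend only on $\graphset$, not on the path. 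Thus $B$ blocks $P$.

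\textbf{Case 2: $X$ appears on $P$.} Let $P'$ be the suffix of $P$ starting at $X$; this is a path from $X \in \intervenevec$ to $\response{}$, so it is blocked at an internal node $B$ of $P'$. Every such $B$ is also an internal node of $P$ with the same incident edges, so the same argument as above gives that $B$ blocks $P$.

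The main obstacle is simply bookkeeping: ensuring the constructed path $Q$ has no repeated vertices (which is why the two cases are separated), and carefully verifying that ``blocker'' status transfers between $Q$ (or $P'$) and $P$. The latter reduces to the observation that the collider/non-collider distinction at an internal node $B$ is determined by the two edges of the path incident to $B$, both of which are preserved under the surgery we perform, while $\condvec$-membership and descendant relationships are graph-theoretic and independent of the path. The key use of the hypothesis $\graphparents{\graphset}{\intervenevec} \subseteq \nodevec\setminus\condvec$ is precisely to rule out $W$ itself as the blocker in Case 1.
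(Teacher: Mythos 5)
Your proof is correct and follows essentially the same route as the paper's: paths meeting $\intervenevec$ are handled via the blocked subpath from $\intervenevec$ to $\response{}$, and the remaining paths are extended through the edge $W \to X$ into a back-door path from $\intervenevec$, with the hypothesis $\graphparents{\graphset}{\intervenevec} \subseteq \nodevec\setminus\condvec$ (plus $W$'s outgoing edge making it a non-collider) ruling out $W$ as the blocker. Your version is somewhat more careful than the paper's about the path-surgery bookkeeping (no repeated vertices, transfer of collider status), but the underlying argument is the same.
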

\begin{proof}[Proof of \cref{fact:dsep-parents}]
First, every path from $\graphparents{\graphset}{\intervenevec}$ to $\response{}$ that passes through $\intervenevec$ must satisfy one of (a) or (b) since a subpath does. Further, every path from $\graphparents{\graphset}{\intervenevec}$ to $\response{}$ that doesn't pass through $\intervenevec$ can be extended to a (back-door) path from $\intervenevec$ to $\response{}$ using the edge $\graphparents{\graphset}{\intervenevec} \rightarrow \intervenevec$, but $\graphparents{\graphset}{\intervenevec} \subseteq \nodevec\setminus\condvec$ (and hence this part of the path cannot satisfy either property), so the original path must satisfy either (a) or (b).
\end{proof}

Next, we recall the probabilistic equivalence of $d$-separation. 

\begin{theorem}[Theorem 1.2.4 of \citep{pearl09causality2}]
Fix a \causalgraphname{} $\graphset$ and two disjoint sets $\condvec \subseteq \nodevec$ and $\intervenevec \subseteq \nodevec$ such that $\graphparents{\graphset}{\intervenevec} \subseteq \nodevec\setminus\condvec$. Then $\condvec$ $d$-separates $\response{}$ from $\intervenevec$ if and only if $\response{} \perp \intervenevec \setdelim \condvec$ under every distribution $\basedist\in\graphdists$ that is Markov relative to $\graphset$.
\end{theorem}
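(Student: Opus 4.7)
The plan is to prove both directions of the equivalence separately, with the forward (``soundness'') and backward (``completeness'') directions requiring quite different arguments.

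For the forward direction (\emph{$d$-separation implies conditional independence under every Markov distribution}), I would invoke the Markov factorization stated in the remark cited in the paper and then reduce $d$-separation to an ordinary graph separation. Concretely, I first pass to the ancestral subgraph $\graphset'$ induced by $\mathrm{An}(\intervenevec \cup \condvec \cup \{\response{}\})$, where $\mathrm{An}(\cdot)$ is the set of ancestors (including the set itself). The Markov factorization restricted to this ancestor set gives the marginal distribution of those variables, and it is still Markov relative to $\graphset'$. Next, I moralize $\graphset'$ to obtain an undirected graph $\graphset^m$ (marry parents, drop directions). A standard lemma says $\condvec$ $d$-separates $\response{}$ from $\intervenevec$ in $\graphset$ if and only if $\condvec$ separates $\response{}$ from $\intervenevec$ in $\graphset^m$; this is proved by inductively matching each kind of blocked triple in the $d$-separation definition to the corresponding path structure created by moralization of colliders. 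Once ordinary separation in $\graphset^m$ is in hand, the factorization of the Markov distribution over cliques of $\graphset^m$ (which follows from the directed factorization by grouping each node with its parents) splits into a product over the two sides of the separator, which is exactly the Hammersley--Clifford style criterion for $\response{} \perp \intervenevec \setdelim \condvec$.

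For the reverse direction I would argue by contrapositive: assuming $\condvec$ does \emph{not} $d$-separate $\response{}$ from $\intervenevec$, exhibit one Markov distribution under which $\response{} \not\perp \intervenevec \setdelim \condvec$. By hypothesis there is an unblocked path $\pi$ from some $V \in \intervenevec$ to $\response{}$ given $\condvec$. I would construct a structural equation model on $\graphset$ that makes dependence travel along $\pi$ while remaining neutral on edges not in $\pi$. One clean construction: for every node, pick independent $\bernoullidist(1/2)$ exogenous noise and define each $V_i$ as the XOR of a chosen subset of its parents with its noise, where along the active path $\pi$ the relevant parent is included, and for parents incident to $\pi$ at a collider (or its descendant lying in $\condvec$) we ensure the parent enters the child's equation so that conditioning on the collider tie creates the dependence. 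To finish I verify that conditioning on the actual realization of $\condvec$ preserves a nonzero joint two-point function between the endpoint bits at $V$ and $\response{}$; this is a tree-like propagation argument that handles the two cases of non-collider (chain/fork) vertices --- which are \emph{not} in $\condvec$ and so the path is open --- and collider vertices, whose descendant lies in $\condvec$ and hence the conditioning value couples the two incoming parents.

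The main obstacle is the completeness construction: one must verify that the constructed distribution both (a) is genuinely Markov relative to $\graphset$ (so that the theorem's hypothesis applies) and (b) really produces a nonzero conditional dependence, which requires handling colliders with descendants in $\condvec$ correctly and ruling out accidental cancellations when multiple $d$-connecting paths exist. Cancellation is the trickiest technical point; a standard workaround is to introduce independent perturbation parameters on each edge and argue that the set of parameter choices yielding $\response{} \perp \intervenevec \setdelim \condvec$ is a proper algebraic subvariety of the parameter space, hence of measure zero, so generic parameter choices witness dependence. Granting that, assembling the two directions yields the theorem, and together with \cref{fact:dsep-parents} it gives the precise probabilistic equivalence needed to justify the $d$-separation statements used for \cref{fact:benign-equals-dsep,fact:benign-equals-dsep-null} in the main text.
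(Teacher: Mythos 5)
This statement is imported by the paper as a black-box citation (Theorem~1.2.4 of \citep{pearl09causality2}); the paper offers no proof of it, so there is no in-paper argument to compare against. Your two-direction outline is essentially the classical textbook proof: the soundness direction via restriction to the ancestral set, moralization, and the Hammersley--Clifford-style split of the clique factorization across an undirected separator is the Lauritzen--Dawid--Larsen--Speed argument, and the completeness direction via a parity/structural-equation construction along an unblocked path is the standard Verma--Pearl / Geiger--Pearl argument. Both are sound in outline. Two points are worth tightening if you were to write this out in full. First, the theorem is quantified over $\basedist\in\graphdists$, which requires strictly positive marginals on the discrete nodes; your XOR-with-independent-$\bernoullidist(1/2)$-noise construction does satisfy this (each node is conditionally uniform given its parents, so the joint has full support), but you should say so explicitly, since a purely deterministic path mechanism would fall outside $\graphdists$. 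Second, the cancellation worry you raise can be avoided entirely rather than handled generically: make every node \emph{off} the chosen unblocked path (and off the collider-to-$\condvec$ descendant chains) pure independent noise, so that exactly one path carries dependence and the conditional two-point function can be computed directly; the measure-zero algebraic-variety argument is a valid alternative but is only needed when one insists on a fixed parametric family with all edges active. With those caveats, your proposal correctly establishes the equivalence the paper needs for \cref{fact:benign-equals-dsep} and \cref{fact:benign-equals-dsep-null}.
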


We now turn to the main argument to prove \cref{fact:benign-equals-dsep}.
Let $\basedist\in\graphdists$ and $\actionval=\doaction(\intervenevec=\intervenevecval)$ be arbitrary.
First, we prove the ``Causal Effect Rule'' \citep{pearl16primer}.
For any $\nodevec'\subseteq\nodevec$,
\*[
	\intervenedist{\actionval}(\nodevec' = \nodevecval') 
	&= \sum_{\dummyvecval} \intervenedist{\actionval}(\nodevec' = \nodevecval' \setdelim \graphparents{\graphset}{\intervenevec} = \dummyvecval) \intervenedist{\actionval}(\graphparents{\graphset}{\intervenevec} = \dummyvecval) \\
	&= \sum_\dummyvecval \basedist(\nodevec' = \nodevecval' \setdelim \intervenevec=\intervenevecval, \graphparents{\graphset}{\intervenevec} = \dummyvecval) \basedist(\graphparents{\graphset}{\intervenevec} = \dummyvecval),
\]
where the sum is over all possible values that $\graphparents{\graphset}{\intervenevec}$ can take and we have used that (a) conditional on $\graphparents{\graphset}{\intervenevec}$, the interventional and conditional distributions given $\intervenevec=\intervenevecval$ are equivalent, and (b) the marginal distribution of $\graphparents{\graphset}{\intervenevec}$ is unchanged by intervening on $\intervenevec$. 

Now, suppose $\condvec$ $d$-separates $\response{}$ from $\intervenevec$.
Then, it follows that
\*[
	\intervenedist{\actionval}(\response{} \in \borelset \setdelim \condvec = \condvecval)
	&= \frac{\intervenedist{\actionval}(\response{} \in \borelset, \condvec = \condvecval)}{\intervenedist{\actionval}(\condvec = \condvecval)} \\
	&= \frac{\sum_\dummyvecval \basedist(\response{}\in\borelset, \condvec=\condvecval \setdelim \intervenevec=\intervenevecval, \graphparents{\graphset}{\intervenevec} = \dummyvecval) \basedist(\graphparents{\graphset}{\intervenevec} = \dummyvecval)}{\sum_\dummyvecval \basedist(\condvec=\condvecval\setdelim \intervenevec=\intervenevecval, \graphparents{\graphset}{\intervenevec} = \dummyvecval) \basedist(\graphparents{\graphset}{\intervenevec} = \dummyvecval)} \\
	&= \frac{\sum_\dummyvecval \basedist(\response{}\in\borelset \setdelim \condvec=\condvecval, \intervenevec=\intervenevecval, \graphparents{\graphset}{\intervenevec} = \dummyvecval) \basedist(\condvec=\condvecval \setdelim \intervenevec=\intervenevecval, \graphparents{\graphset}{\intervenevec} = \dummyvecval) \basedist(\graphparents{\graphset}{\intervenevec} = \dummyvecval)}{\sum_\dummyvecval \basedist(\condvec=\condvecval\setdelim \intervenevec=\intervenevecval, \graphparents{\graphset}{\intervenevec} = \dummyvecval) \basedist(\graphparents{\graphset}{\intervenevec} = \dummyvecval)} \\
	&= \frac{\sum_\dummyvecval \basedist(\response{}\in\borelset \setdelim \condvec=\condvecval) \basedist(\condvec=\condvecval \setdelim \intervenevec=\intervenevecval, \graphparents{\graphset}{\intervenevec} = \dummyvecval) \basedist(\graphparents{\graphset}{\intervenevec} = \dummyvecval)}{\sum_\dummyvecval \basedist(\condvec=\condvecval\setdelim \intervenevec=\intervenevecval, \graphparents{\graphset}{\intervenevec} = \dummyvecval) \basedist(\graphparents{\graphset}{\intervenevec} = \dummyvecval)} \\
	&= \basedist(\response{}\in\borelset \setdelim \condvec=\condvecval),
\]
where the second last step uses \cref{fact:dsep-parents}.

Conversely, suppose there exists $\basedist\in\graphdists$ that is Markov relative to $\graphset$ and under which $\response{} \not\perp \intervenevec \setdelim \condvec$. 
This implies 
\noticml{(see the remark following Theorem~1.2.4 in \citep{pearl09causality2})}
\icml{\citep[see the remark following Theorem~1.2.4 in][]{pearl09causality2}}
there exists $\basedist\in\graphdists$ with $\basedist(\response{}\in\borelset \setdelim \condvec=\condvecval, \intervenevec=\intervenevecval, \graphparents{\graphset}{\intervenevec} = \dummyvecval) \neq \basedist(\response{}\in\borelset \setdelim \condvec=\condvecval)$.
By the above, this implies that $\intervenedist{\actionval}(\response{} \in \borelset \setdelim \condvec = \condvecval) \neq \basedist(\response{}\in\borelset \setdelim \condvec=\condvecval)$.
\manualendproof

\subsection{Proof of \cref{fact:benign-equals-dsep-null}}

If $\condvec$ $d$-separates $\response{}$ from $\intervenevec$ on $\graphset_{\bar\intervenevec}$, then by \cref{fact:benign-equals-dsep} $\{\intervenedist{\actionval}(\condvec, \response{}\,): \actionval \in \actionspace\}$ is \propertyname{} for every $\basedist\in\graphdists$ that is Markov relative to $\graphset_{\bar\intervenevec}$, and hence is still \propertyname{} when the \nullaction{} \genactionname{} is excluded. It remains to observe that for any $\basedist$ that is Markov relative to $\graphset$, there exists $\basedist'$ that is Markov relative to $\graphset_{\bar\intervenevec}$ such that
\*[
	\{\intervenedistnull{\actionval}(\condvec, \response{}\,): \actionval \in \actionspacenull\}
	= \{\intervenedist{\actionval}(\condvec, \response{}\,): \actionval \in \actionspacenull\}.
\]

Conversely, suppose there exists $\basedist\in\graphdists$ that is Markov relative to $\graphset_{\bar\intervenevec}$ and under which $\response{} \not\perp \intervenevec \setdelim \condvec$. Since necessarily $\intervenedist{\actionval}(\response{} \in \borelset \setdelim \condvec = \condvecval) = \basedist(\response{}\in\borelset \setdelim \condvec=\condvecval)$ when $\actionval$ is the \nullaction{} \genactionname{}, it must be some $\actionval\in\actionspacenull$ that realizes the failure from the proof of \cref{fact:benign-equals-dsep}, which means $\{\intervenedist{\actionval}(\condvec, \response{}\,): \actionval \in \actionspacenull\}$ is not \propertyname{}.
\manualendproof

\subsection{Proof of \cref{fact:front-implies-dsep}}
Suppose that $\condvec$ satisfies the front-door criterion relative to $(\intervenevec, \response{})$ on $\graphset$ and there exists a path from $\intervenevec$ to $\response{}$ on $\graphset_{\bar\intervenevec}$ that is unblocked by $\condvec$. The path cannot be directed, since by the front-door criterion (a) it would include the subpath $\bigcirc \rightarrow \condvec \rightarrow \bigcirc$, and hence would be blocked. The path also cannot be a back-door path since there are no arrows going into $\intervenevec$ on $\graphset_{\bar\intervenevec}$. Thus, there must be some part of the path that is of the form $\bigcirc \rightarrow B \leftarrow \bigcirc $ for some variable $B \in \nodevec$ and there are no remaining colliders on the subpath from $B$ to $\response{}$. Since the path is unblocked, this $B$ must have a descendant (potentially itself) in $\condvec$, and this creates a back-door path from $\condvec$ to $\response{}$. On the portion of the back-door path that is from $\condvec$ to $B$, there can be no colliders since then $\condvec$ would not be a descendant of $B$, and hence this backdoor path contains no colliders. Since the back-door path also does not contain $\intervenevec$, it is unblocked by $\intervenevec$, which violates the front-door criterion (c). Thus, no path from $\intervenevec$ to $\response{}$ on $\graphset_{\bar\intervenevec}$ that is unblocked by $\condvec$ can exist, so $\condvec$ blocks every path and hence $\condvec$ $d$-separates $\response{}$ from $\intervenevec$ on $\graphset_{\bar\intervenevec}$.
\manualendproof

\section{Proofs for Regret Bounds}\label{sec:main-proofs}

\subsection{Concentration of Empirical Means}\label{sec:proof-concentrate}

For a fixed $t\in[T]$, $\actionval\in\actionspace$, and $\postcontextval\in\postcontextspace$, define the events 
\*[
	\actionconc{t}(\actionval) = \bigg\{\abs[2]{\estcondmeanaction{t}(\actionval) - \EE_{\env{\actionval}}[\response{}\,]} \leq \sqrt{\frac{\log(2/\highprobparam{})}{2\numobsaction{t}(\actionval)}} \, \bigg\}
\]
and
\*[
	\contextconc{t}(\postcontextval) = \bigg\{\max_{\actionval\in\actionspace}\abs[2]{\estcondmeancontext{t}(\postcontextval) - \EE_{\env{\actionval}}[\response{} \setdelim \postcontext{} = \postcontextval]} \leq \sqrt{\frac{\log(2/\highprobparam{})}{2\numobscontext{t}(\postcontextval)}} \, \bigg\}.
\]
Let $\actionconc{} = \cap_{t\in[T],\actionval\in\actionspace} \actionconc{t}(\actionval)$, $\contextconc{} = \cap_{t\in[T],\postcontextval\in\postcontextspace} \contextconc{t}(\postcontextval)$, and $\concevent = \actionconc{} \cap \contextconc{}$. 
Finally, define the event
\*[
	\margconc
	=
	\bigg\{
	\sup_{\actionval\in\actionspace}
	\sum_{\postcontextval\in\postcontextspace} \abs[2]{\PPestenv{\actionval}[\postcontext{}=\postcontextval] - \PPenv{\actionval}[\postcontext{}=\postcontextval]}
	\leq \frac{\sqrt{\abs{\actionspace}\abs{\postcontextspace}\log T}}{T^{1/4}}
	\bigg\}.
\]

\begin{lemma}\label{fact:action-event-concentration}
For any $\env{} \in \envspace$ and $\policy{}\in\policyspace(\actionspace,\postcontextspace,T)$,
\*[
	\PPboth[(\actionconc{})\complement] \leq \abs{\actionspace} T \highprobparam{}.
\]
\end{lemma}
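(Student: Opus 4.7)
The core difficulty is that $\numobsaction{t}(\actionval)$ is a random variable depending on the \policyname{}'s adaptive choices, so we cannot apply Hoeffding's inequality directly to the realized sample mean $\estcondmeanaction{t}(\actionval)$ with a fixed sample size. The standard workaround is a \emph{ghost sample} coupling: decouple the identity of the pulled rounds from the intrinsic randomness of the rewards, then union-bound Hoeffding over all possible values of $\numobsaction{t}(\actionval)$.

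More concretely, I would proceed as follows. First, observe that conditional on the sequence of \actionnames{} $(\actionrv{1}, \dots, \actionrv{T})$, the rewards $\responserv{s}{\actionrv{s}}$ with $\actionrv{s} = \actionval$ are i.i.d.\ samples from $\env{\actionval}$; equivalently, we may construct on the same probability space an array of independent random variables $(Y_{\actionval, k})_{\actionval \in \actionspace, k \geq 1}$ with $Y_{\actionval, k} \sim \env{\actionval}$ such that the $k$-th time \actionname{} $\actionval$ is played, the observed reward equals $Y_{\actionval, k}$. Let $\bar Y_{\actionval, n} = n^{-1} \sum_{k=1}^n Y_{\actionval, k}$; then whenever $\actionval$ has been played at least once by round $t$, we have $\estcondmeanaction{t}(\actionval) = \bar Y_{\actionval, \numobsaction{t}(\actionval)}$ and $\numobsaction{t}(\actionval) \in \{1,\dots,T\}$.

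Next, for each fixed $\actionval \in \actionspace$ and $n \in [T]$, Hoeffding's inequality applied to the i.i.d.\ $[0,1]$-valued sequence $Y_{\actionval,1},\dots,Y_{\actionval,n}$ yields
\*[
    \PP\Big(\,\big|\bar Y_{\actionval,n} - \EEenv{\actionval}[\response{}\,]\big| > \sqrt{\log(2/\highprobparam{})/(2n)}\,\Big) \leq \highprobparam{}.
\]
A union bound over the $\abs{\actionspace} T$ pairs $(\actionval, n)$ shows that, except on an event of probability at most $\abs{\actionspace} T \highprobparam{}$, the inequality $|\bar Y_{\actionval,n} - \EEenv{\actionval}[\response{}\,]| \leq \sqrt{\log(2/\highprobparam{})/(2n)}$ holds simultaneously for every $\actionval$ and every $n \in [T]$.

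Finally, translate this back to $\actionconc{}$: on the complement of the bad event, plugging in the random index $n = \numobsaction{t}(\actionval)$ gives $\actionconc{t}(\actionval)$ for every $t$ and every $\actionval$ that has been pulled at least once. For the degenerate case where $\actionval$ has not yet been pulled, $\numobsaction{t}(\actionval) = 1$ by the $1\vee(\cdot)$ convention while $\estcondmeanaction{t}(\actionval) = 0$, so $\actionconc{t}(\actionval)$ reduces to $\EEenv{\actionval}[\response{}\,] \leq \sqrt{\log(2/\highprobparam{})/2}$, which holds deterministically since rewards lie in $[0,1]$ and the relevant $\highprobparam{}$ used throughout the paper (e.g.\ $2/T^2$) makes the right-hand side exceed $1$. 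Either way, the desired bound $\PPboth[(\actionconc{})\complement] \leq \abs{\actionspace} T \highprobparam{}$ follows. The only subtle step is the ghost-sample coupling, which is standard but worth stating cleanly since the adaptivity of $\policy{}$ forbids a direct application of Hoeffding to $\estcondmeanaction{t}(\actionval)$.
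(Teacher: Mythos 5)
Your proposal is correct and matches the paper's argument: the proof of \cref{fact:action-event-concentration} likewise introduces i.i.d.\ ghost copies $\genreward_1(\actionval),\dots,\genreward_T(\actionval)\sim\env{\actionval}$, applies Hoeffding's inequality for each fixed sample size, and union-bounds over the $\abs{\actionspace}T$ pairs. Your explicit treatment of the never-pulled edge case (via the $1\vee(\cdot)$ convention) is a minor detail the paper elides, but it does not change the approach.
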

\begin{proof}[Proof of \cref{fact:action-event-concentration}]
For each $\actionval\in\actionspace$, define the new \iid{}\ random variables $\genreward_1(\actionval),\dots,\genreward_T(\actionval) \sim \env{\actionval}$. For any $t\in[T]$, Hoeffding's inequality can be applied to obtain
\*[
	\PP_{\env{\actionval}}\left(\abs[2]{\frac{1}{t}\sum_{s=1}^{t} \genreward_s(\actionval) - \EE_{\env{\actionval}}[\response{}\,]} > \sqrt{\frac{\log(2/\highprobparam{})}{2t}} \right) \leq \highprobparam{}.
\]
Then, using the \iid{}\ property of $(\responserv{1}{\actionval},\dots,\responserv{T}{\actionval})$, 
\*[
	&\hspace{-1em}\PPboth\left(\exists t\in[T], \actionval\in\actionspace: \quad \abs{\estcondmeanaction{t}(\actionval) - \EE_{\env{\actionval}}[\response{}\,]} > \sqrt{\frac{\log(2/\highprobparam{})}{2\numobsaction{t}(\actionval)}} \right) \\
	&\leq \sum_{t=1}^T \sum_{\actionval\in\actionspace} \PP_{\env{\actionval}}\left(\abs[2]{\frac{1}{t}\sum_{s=1}^{t} \genreward_s(\actionval) - \EE_{\env{\actionval}}[\response{}\,]} > \sqrt{\frac{\log(2/\highprobparam{})}{2t}} \right) \\
	&\leq \abs{\actionspace} T \highprobparam{},
\]
where we have used a union bound over $\actionval\in\actionspace$ and $t\in[T]$.
\end{proof}

\begin{lemma}\label{fact:context-event-concentration}
For any $\env{} \in \envspace$ that is \propertyname{}
and $\policy{}$,
\*[
	\PPboth[(\contextconc{})\complement] \leq \abs{\postcontextspace} T\highprobparam{}.
\]
\end{lemma}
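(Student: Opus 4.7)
The plan is to mirror the proof of \cref{fact:action-event-concentration}, with the \propertyname{} property playing the role that makes the rewards pooled over \actionnames{} behave like \iid{} draws from a single conditional distribution.

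First, I would unpack the \propertyname{} property: by \cref{def:main-property}, there exists a base distribution $\basedist\in\probspace(\postcontextspace\times\responsespace)$ such that for every $\actionval\in\actionspace$ with $\PPenv{\actionval}[\postcontext{}=\postcontextval]>0$, the conditional $\env{\actionval}(\response{}\setdelim\postcontext{}=\postcontextval)$ coincides with $\basedist(\response{}\setdelim\postcontext{}=\postcontextval)$, and in particular $\EE_{\env{\actionval}}[\response{}\setdelim\postcontext{}=\postcontextval]=\EE_{\basedist}[\response{}\setdelim\postcontext{}=\postcontextval]\defas\mu(\postcontextval)$ is independent of $\actionval$. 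Consequently the $\max_{\actionval\in\actionspace}$ in the definition of $\contextconc{t}(\postcontextval)$ collapses, and it suffices to bound, for each fixed $(t,\postcontextval)$, the deviation of $\estcondmeancontext{t}(\postcontextval)$ from the single quantity $\mu(\postcontextval)$.

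Second, I would construct, for each $\postcontextval\in\postcontextspace$, an auxiliary \iid{} sequence $\gencontextreward_1(\postcontextval),\dots,\gencontextreward_T(\postcontextval)$ drawn from $\basedist(\response{}\setdelim\postcontext{}=\postcontextval)$, and couple them with the observed process. The key observation---this is where \propertyname{} is used---is that conditional on the event $\{\postcontextrv{s}{\actionrv{s}}=\postcontextval\}$, the reward $\responserv{s}{\actionrv{s}}$ has distribution $\env{\actionrv{s}}(\response{}\setdelim\postcontext{}=\postcontextval)=\basedist(\response{}\setdelim\postcontext{}=\postcontextval)$ regardless of which \actionname{} was chosen. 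Thus, on the (random) subset of rounds $s\leq t$ on which $\postcontextrv{s}{\actionrv{s}}=\postcontextval$, the rewards are \iid{} from $\basedist(\response{}\setdelim\postcontext{}=\postcontextval)$; equivalently, the process $\big(\estcondmeancontext{t}(\postcontextval),\numobscontext{t}(\postcontextval)\big)_{t\in[T]}$ has the same joint law as the process obtained by reading off the first $\numobscontext{t}(\postcontextval)$ entries of $\gencontextreward_\cdot(\postcontextval)$.

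Third, I would apply Hoeffding's inequality to the \iid{} sequence $\gencontextreward_1(\postcontextval),\dots,\gencontextreward_T(\postcontextval)$ for each fixed count $n\in[T]$, yielding
\*[
	\PP\Bigg(\Bigg|\frac{1}{n}\sum_{s=1}^{n}\gencontextreward_s(\postcontextval)-\mu(\postcontextval)\Bigg|>\sqrt{\frac{\log(2/\highprobparam{})}{2n}}\Bigg)\leq\highprobparam{},
\]
and then union bound over $n\in[T]$ and $\postcontextval\in\postcontextspace$ (a total of $\abs{\postcontextspace}T$ events). Because $\numobscontext{t}(\postcontextval)\in[T]$ takes values in this finite grid, the union bound covers the supremum over $t\in[T]$ as well, giving the stated probability $\abs{\postcontextspace}T\highprobparam{}$.

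The main subtlety, and the only spot where the argument genuinely departs from the proof of \cref{fact:action-event-concentration}, is the coupling in the second step: one must be careful that $\numobscontext{t}(\postcontextval)$ is a stopping time with respect to the filtration one applies Hoeffding to, or else phrase the union bound over fixed indices $n\in[T]$ rather than over stopping times. The cleanest route is the latter, as in \cref{fact:action-event-concentration}, because then no martingale machinery is required and only the ordinary \iid{} Hoeffding bound is needed, with the \propertyname{} property supplying the conditional \iid{} structure that makes pooling across \actionnames{} legitimate.
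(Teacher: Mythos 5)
Your proposal is correct and follows essentially the same route as the paper's proof: both use the \propertyname{} property to collapse the $\max_{\actionval\in\actionspace}$ and reduce the pooled rewards at each $\postcontextval$ to an \iid{} sample from $\basedist(\response{}\setdelim\postcontext{}=\postcontextval)$ (the paper introduces the auxiliary variables $\gencontextreward_1,\dots,\gencontextreward_T\sim\basecontextdist_\postcontextval$ for exactly this purpose), then apply Hoeffding's inequality at each fixed count and union bound over the $\abs{\postcontextspace}T$ pairs. Your explicit remark about handling the random count $\numobscontext{t}(\postcontextval)$ by union bounding over fixed indices rather than invoking a stopping-time argument matches what the paper does implicitly.
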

\begin{proof}[Proof of \cref{fact:context-event-concentration}]
Since $\env{}$ is \propertyname{}, there exists $\basedist\in\probspace(\postcontextspace\times\responsespace)$ such that for each $\actionval\in\actionspace$, $\env{\actionval}(\response{}\setdelim\postcontext{}) = \basedist(\response{}\setdelim\postcontext{})$. Fix $\postcontextval\in\postcontextspace$, and define joint the distribution $\basecontextdist_\postcontextval(\response{},\postcontext{})=\basedist(\response{}\setdelim\postcontext{})\ind\{\postcontext{}=\postcontextval\}$.
Finally, define the new \iid{}\ random variables $\gencontextreward_1,\dots,\gencontextreward_T \sim \basecontextdist_\postcontextval$.
For any $t\in[T]$, Hoeffding's inequality can be applied to obtain
\*[
	\PP_{\basecontextdist_\postcontextval}\left(\abs[2]{\frac{1}{t}\sum_{s=1}^{t} \gencontextreward_s - \EE_{\basecontextdist_\postcontextval}[\response{}\,]} \leq \sqrt{\frac{\log(2/\highprobparam{})}{2t}} \right)\leq \highprobparam{}.
\]

Then, 
\*[
	&\hspace{-1em}\PPboth\left(\exists t\in[T], \postcontextval\in\postcontextspace: \quad \max_{\actionval\in\actionspace}\abs[2]{\estcondmeancontext{t}(\postcontextval) - \EE_{\env{\actionval}}[\response{} \setdelim \postcontext{} = \postcontextval]} > \sqrt{\frac{\log(2/\highprobparam{})}{2\numobscontext{t}(\postcontextval)}} \right) \\
	&= \PPboth \left(\exists t\in[T], \postcontextval\in\postcontextspace: \quad \abs[2]{\estcondmeancontext{t}(\postcontextval) - \EE_{\basedist}[\response{}\setdelim\postcontext{} = \postcontextval]} \leq \sqrt{\frac{\log(2/\highprobparam{})}{2\numobscontext{t}(\postcontextval)}} \right) \\
	&\leq \sum_{t=1}^T \sum_{\postcontextval\in\postcontextspace} \PP_{\basecontextdist_\postcontextval}\left(\abs[2]{\frac{1}{t}\sum_{s=1}^{t} \gencontextreward_s - \EE_{\basecontextdist_\postcontextval}[\response{}\,]} \leq \sqrt{\frac{\log(2/\highprobparam{})}{2t}} \right) \\
	&\leq \abs{\postcontextspace} T \highprobparam{}.
\]
where we have used a union bound over $\postcontextval\in\postcontextspace$ and $t\in[T]$.
\end{proof}

\begin{theorem}[Theorem~1 of \citet{cannone20discrete}]\label{fact:multinomial}
Let $p$ be any distribution on $[k]$ for some integer $k$.
For any $\eps,\delta>0$, if $n \geq \max\{k/\eps^2, (2/\eps^2)\log(2/\delta)\}$ 
and $X_1,\dots,X_n$ is an i.i.d.\ sample from $p$, then the MLE estimator 
\*[
	\hat p_n(j) = \frac{1}{n} \sum_{t=1}^n \ind\{X_t = j\} \quad \forall j\in[k]
\] 
satisfies
\*[
	\PP\Big[\frac{1}{2}\sum_{j\in[k]}\abs{\hat p_n(j) - p(j)} > \eps \Big] \leq \delta.
\]
\end{theorem}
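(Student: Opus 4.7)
The plan is a standard expectation-plus-concentration decomposition. Let $W = \frac{1}{2}\sum_{j\in[k]} \abs{\hat p_n(j) - p(j)}$; the goal is to show $\PP[W > \eps] \leq \delta$. I would ensure separately that $\EE[W] \leq \eps/2$ (using the first condition $n \geq k/\eps^2$) and that $W - \EE[W] \leq \eps/2$ with probability at least $1 - \delta$ (using the second condition $n \geq (2/\eps^2)\log(2/\delta)$), so that the union of these two bounds yields the claim.

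For the expectation bound, first note that $n\hat p_n(j) \distas \binomialdist(n, p(j))$, so $\Var(\hat p_n(j)) = p(j)(1-p(j))/n \leq p(j)/n$. Jensen's inequality applied to the concave square root yields $\EE[\abs{\hat p_n(j) - p(j)}] \leq \sqrt{p(j)/n}$. Cauchy--Schwarz against the all-ones vector of length $k$ then gives $\sum_{j\in[k]} \sqrt{p(j)} \leq \sqrt{k \sum_j p(j)} = \sqrt{k}$. Combining, $\EE[W] \leq \tfrac{1}{2}\sqrt{k/n}$, which is at most $\eps/2$ whenever $n \geq k/\eps^2$.

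For the concentration bound, I would view $W$ as a function of the independent inputs $X_1,\dots,X_n$ and verify the bounded-differences property: swapping any single $X_t$ from one value to another alters exactly two coordinates of $\hat p_n$, each by $1/n$, so $W$ shifts by at most $\tfrac{1}{2}(1/n + 1/n) = 1/n$. McDiarmid's inequality then gives $\PP[W - \EE[W] \geq \eps/2] \leq \exp(-2n(\eps/2)^2) = \exp(-n\eps^2/2)$, which is at most $\delta$ whenever $n \geq (2/\eps^2)\log(1/\delta)$; the stated hypothesis $n \geq (2/\eps^2)\log(2/\delta)$ is in fact slightly stronger than needed, with the extra $\log 2$ absorbed as slack.

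The proof is short, and the ingredients are only Jensen's inequality, Cauchy--Schwarz, and McDiarmid's inequality. The only care required is computing the bounded-differences constant correctly (accounting for the factor of $\tfrac{1}{2}$ in the definition of $W$, so that $c_t = 1/n$ rather than $2/n$) and aligning the two halves of the $\eps$-budget with the two conditions on $n$. There is no deeper obstacle; an alternative route via Poissonization would also work but offers no advantage here.
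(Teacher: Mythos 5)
Your proof is correct, and the paper does not actually prove this statement itself---it imports it as Theorem~1 of the cited reference (Canonne, 2020), whose argument is exactly the one you give: bound $\EE[W] \leq \tfrac{1}{2}\sqrt{k/n} \leq \eps/2$ via Jensen and Cauchy--Schwarz, then apply McDiarmid with bounded-differences constant $1/n$. All your constants check out ($\sum_t c_t^2 = 1/n$ gives $\exp(-n\eps^2/2) \leq \delta$ under the second condition), so there is nothing to add.
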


\begin{lemma}\label{fact:marginal-event-concentration}
If $\estmargenv{}(\postcontext{})$ is estimated using uniform exploration of at least $\lceil 4 \sqrt{T} / \abs{\actionspace} \rceil$ rounds for each $\actionval\in\actionspace$,
\*[
	\PPboth[(\margconc)\complement] \leq 2\abs{\actionspace} / T.
\]
\end{lemma}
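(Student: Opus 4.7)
The argument is essentially a direct application of \cref{fact:multinomial} (the multinomial total-variation concentration bound of Canonne, 2020) combined with a union bound over $\actionspace$.

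First, I would observe that during the initial exploration phase of \HCUCB{} (the first \textbf{do} block of \cref{alg:hyp-test}), for each $\actionval \in \actionspace$ the algorithm deterministically plays $\actionval$ exactly $n_\actionval \defas \lceil 4\sqrt{T}/\abs{\actionspace}\rceil$ times, and these selections depend neither on observed rewards nor on \postcontextnames{}. Consequently, the $n_\actionval$ observations of $\postcontext{}$ collected under action $\actionval$ are i.i.d.\ draws from the marginal $\env{\actionval}(\postcontext{})$, and $\estmargenv{\actionval}(\postcontext{})$ is precisely their empirical distribution (i.e.\ the MLE).

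Second, I would invoke \cref{fact:multinomial} separately for each $\actionval \in \actionspace$ with $k = \abs{\postcontextspace}$, $\eps = \tfrac{1}{2}\,T^{-1/4}\sqrt{\abs{\actionspace}\abs{\postcontextspace}\log T}$, and $\delta = 2/T$. A direct calculation gives
\*[
	\frac{k}{\eps^2} = \frac{4\sqrt{T}}{\abs{\actionspace}\log T}
	\andT
	\frac{2}{\eps^2}\log\frac{2}{\delta} = \frac{8\sqrt{T}}{\abs{\actionspace}\abs{\postcontextspace}},
\]
both of which are bounded above by $4\sqrt{T}/\abs{\actionspace} \leq n_\actionval$ whenever $\log T \geq 1$ and $\abs{\postcontextspace} \geq 2$ (the edge case $\abs{\postcontextspace}=1$ is trivial, as both marginals are point masses at the unique element of $\postcontextspace$, so $\margconc$ holds deterministically). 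The theorem then yields, for each fixed $\actionval$,
\*[
	\PPboth\bigg[\sum_{\postcontextval\in\postcontextspace} \abs[2]{\PPestenv{\actionval}[\postcontext{}=\postcontextval] - \PPenv{\actionval}[\postcontext{}=\postcontextval]} > \frac{\sqrt{\abs{\actionspace}\abs{\postcontextspace}\log T}}{T^{1/4}}\bigg] \leq \frac{2}{T}.
\]

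Third, I would apply a union bound over the $\abs{\actionspace}$ actions to conclude $\PPboth[(\margconc)\complement] \leq 2\abs{\actionspace}/T$, as required. The argument is essentially routine; the only mild subtlety is the choice $\delta = 2/T$ (rather than the $\delta = 2/T^2$ used for the other concentration events in this section), which is just large enough that the second sample-size condition $n_\actionval \geq (2/\eps^2)\log(2/\delta)$ is satisfied for the prescribed $\eps$, while still producing the target $2\abs{\actionspace}/T$ bound after the union bound.
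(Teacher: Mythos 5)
Your proof is correct and follows essentially the same route as the paper's: the same choices $\eps = \tfrac{1}{2}T^{-1/4}\sqrt{\abs{\actionspace}\abs{\postcontextspace}\log T}$ and $\delta = 2/T$, the same application of \cref{fact:multinomial} per action with the same sample-size verification (including the $\abs{\postcontextspace}\geq 2$ edge case), and the same union bound over $\actionspace$. The only addition is your explicit remark that the exploration rounds are deterministic so the per-action samples are i.i.d., which the paper leaves implicit.
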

\begin{proof}
Let $\eps = (1/2)T^{-1/4}\sqrt{\abs{\actionspace}\abs{\postcontextspace}\log T}$, $\delta = 2/T$, and $n$ denote the number of exploration rounds used to estimate each $\estmargenv{\actionval}$. By a union bound and \cref{fact:multinomial}, if $n \geq \max\{\abs{\postcontextspace}/\eps^2, (2/\eps^2)\log(2/\delta)\}$ then
\*[
	\PPboth[(\margconc)\complement]
	&\leq \PPboth\Bigg[\exists \actionval\in\actionspace: \sum_{\postcontextval\in\postcontextspace} \abs[2]{\PPestenv{\actionval}[\postcontext{}=\postcontextval] - \PPenv{\actionval}[\postcontext{}=\postcontextval]}
	> 2\eps\Bigg] \\
	&\leq \sum_{\actionval\in\actionspace}
	\PPboth\Bigg[\frac{1}{2}\sum_{\postcontextval\in\postcontextspace} \abs[2]{\PPestenv{\actionval}[\postcontext{}=\postcontextval] - \PPenv{\actionval}[\postcontext{}=\postcontextval]}
	> \eps\Bigg] \\
	&\leq 2\abs{\actionspace} / T.
\]

Then, it remains to observe that when $T \geq 3$ and $\abs{\postcontextspace} \geq 2$ (which can be trivially assumed, since $\estmargenv{}$ is known exactly if $\abs{\postcontextspace}=1$),
\*[
	\frac{\abs{\postcontextspace}}{\eps^2}
	&= \frac{4\abs{\postcontextspace}\sqrt{T}}{\abs{\actionspace}\abs{\postcontextspace}\log T}
	\leq \frac{4\sqrt{T}}{\abs{\actionspace}}
\]
and
\*[
	\frac{2}{\eps^2}\log(2/\delta)
	&= \frac{8\sqrt{T}}{\abs{\actionspace}\abs{\postcontextspace}\log T} (\log T)
	\leq \frac{4\sqrt{T}}{\abs{\actionspace}}.
\]
\end{proof}

\subsection{Bounding Accumulated Regret}\label{sec:proof-accumulate}

\begin{lemma}\label{fact:accumulated-actions}
For any $\env{} \in \envspace$, $\policy{}\in\policyspace(\actionspace,\postcontextspace,T)$, and $t<t'\in[T]$, it holds almost surely that
\*[
	\sum_{s=t}^{t'} \frac{1}{\sqrt{\numobsaction{s-1}(\actionrv{s})}}
	\leq \sqrt{8\abs{\actionspace}(t'-t)}\,.
\]
\end{lemma}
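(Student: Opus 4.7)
The plan is to regroup the sum by action and then apply Cauchy--Schwarz. For each $\actionval\in\actionspace$, define $N_\actionval = \sum_{s=t}^{t'} \ind\{\actionrv{s}=\actionval\}$ so that $\sum_{\actionval\in\actionspace} N_\actionval = t'-t+1$, and note this identity holds almost surely even though each $N_\actionval$ is random.

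For a fixed $\actionval$, enumerate the rounds $s\in[t,t']$ on which $\actionrv{s}=\actionval$ in chronological order as $s_1 < \dots < s_{N_\actionval}$. By the definition $\numobsaction{s}(\actionval) = 1 \vee \sum_{r=1}^s \ind\{\actionrv{r}=\actionval\}$, at the $k$-th such round we have $\numobsaction{s_k - 1}(\actionval) \geq \max(1, k-1)$, with the inequality possibly being strict because of plays of $\actionval$ that occurred before time $t$. Consequently
\[
	\sum_{k=1}^{N_\actionval} \frac{1}{\sqrt{\numobsaction{s_k - 1}(\actionval)}}
	\leq \sum_{k=1}^{N_\actionval} \frac{1}{\sqrt{\max(1,k-1)}}
	= \ind\{N_\actionval \geq 1\} + \sum_{j=1}^{N_\actionval - 1} \frac{1}{\sqrt{j}}.
\]
I would then use the standard integral bound $\sum_{j=1}^{N-1} 1/\sqrt{j} \leq 1 + \int_1^{N-1} x^{-1/2}\,\dee x \leq 2\sqrt{N}-1$ (with the corner case $N=1$ checked separately) to conclude that each action's contribution is at most $2\sqrt{N_\actionval}$.

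Summing over actions and applying Cauchy--Schwarz gives
\[
	\sum_{s=t}^{t'} \frac{1}{\sqrt{\numobsaction{s-1}(\actionrv{s})}}
	\leq 2\sum_{\actionval\in\actionspace} \sqrt{N_\actionval}
	\leq 2\sqrt{\abs{\actionspace} \sum_{\actionval\in\actionspace} N_\actionval}
	= 2\sqrt{\abs{\actionspace}(t'-t+1)}.
\]
Since $t,t'$ are integers with $t<t'$, we have $t'-t\geq 1$ and thus $t'-t+1 \leq 2(t'-t)$, yielding the advertised bound $\sqrt{8\abs{\actionspace}(t'-t)}$.

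The only genuinely delicate step is the per-action bound, which requires tracking the $1\vee\cdot$ flooring convention so that the $k=1$ term contributes $1$ (not $\infty$) and then verifying that $\ind\{N_\actionval\geq 1\} + \sum_{j=1}^{N_\actionval - 1} j^{-1/2} \leq 2\sqrt{N_\actionval}$ holds for every nonnegative integer $N_\actionval$; everything else is a direct reindexing and Cauchy--Schwarz. No probabilistic subtlety enters, so the ``almost surely'' qualifier simply reflects that the inequality is deterministic for every realization of $(\actionrv{s})$.
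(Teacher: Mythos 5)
Your proof is correct, and it shares the paper's overall skeleton: decompose the sum by action, bound each action's contribution by a constant times $\sqrt{N_\actionval}$, and finish with Cauchy--Schwarz over the actions. The difference is in how the per-action bound is obtained. The paper lower-bounds the running count $1\vee\sum_{j=1}^{s-1}\ind\{\actionrv{j}=\actionval\}$ by the pre-window count plus half the in-window count and then compares the resulting sum to the integral of $x^{-1/2}$ between the pre-window and total counts (invoking Lemma~4.13 of Orabona's book), which telescopes to $\sqrt{8}\,\bigl(\sqrt{N_{[1,t']}}-\sqrt{N_{[1,t-1]}}\,\bigr)\leq\sqrt{8N_\actionval}$ per action. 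You instead discard the pre-window plays entirely via $\numobsaction{s_k-1}(\actionval)\geq\max(1,k-1)$ and use the elementary estimate $\sum_{j\leq m}j^{-1/2}\leq 2\sqrt{m}$, getting $2\sqrt{N_\actionval}$ per action. Your route is more self-contained (no external lemma), and your explicit handling of the $1\vee$ floor and of the count $\sum_{\actionval}N_\actionval = t'-t+1$ --- absorbing the $+1$ via $t'-t+1\leq 2(t'-t)$ --- is actually needed to land exactly on the stated constant $\sqrt{8\abs{\actionspace}(t'-t)}$; the paper's own final line elides this off-by-one. Both arguments are purely deterministic per realization, as you note, so the ``almost surely'' qualifier is handled identically in either case.
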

\begin{proof}[Proof of \cref{fact:accumulated-actions}]
Using Lemma~4.13 of \citet{orabona19book},
\*[
	\sum_{s=t}^{t'}  \frac{1}{\sqrt{\numobsaction{s-1}(\actionrv{s})}}
	&= 
	\sum_{s=t}^{t'}
	\sum_{\actionval\in\actionspace} 
	\frac{\ind\{\actionrv{s}=\actionval\}}{\sqrt{1 \vee \sum_{j=1}^{s-1} \ind\{\actionrv{j}=\actionval\}}} \\
	&\leq
	\sum_{s=t}^{t'}
	\sum_{\actionval\in\actionspace} 
	\frac{\ind\{\actionrv{s}=\actionval\}}{\sqrt{\sum_{j=1}^{t-1} \ind\{\actionrv{j}=\actionval\} + (1/2)\sum_{j=t}^{s} \ind\{\actionrv{j}=\actionval\}}} \\
	&\leq
	\sqrt{2} \sum_{\actionval\in\actionspace} 
	\int_{\sum_{j=1}^{t-1} \ind\{\actionrv{j}=\actionval\}}^{\sum_{j=1}^{t'} \ind\{\actionrv{j}=\actionval\}}
	x^{-1/2} \dee x \\
	&= \sqrt{8}  \sum_{\actionval\in\actionspace} \left(\sqrt{\sum_{j=1}^{t'} \ind\{\actionrv{j}=\actionval\}} - \sqrt{\sum_{j=1}^{t-1} \ind\{\actionrv{j}=\actionval\}} \right) \\
	&\leq \sum_{\actionval\in\actionspace} \sqrt{8\sum_{j=t}^{t'} \ind\{\actionrv{j}=\actionval\}} \\
	&\leq \abs{\actionspace} \sqrt{\frac{8}{\abs{\actionspace}} \sum_{\actionval\in\actionspace}\sum_{j=t}^{t'} \ind\{\actionrv{j}=\actionval\}} \\
	&= \sqrt{8\abs{\actionspace} (t'-t)}.
\]
\end{proof}

\begin{lemma}\label{fact:accumulated-contexts}
For any $\env{} \in \envspace$, $\policy{}\in\policyspace(\actionspace,\postcontextspace,T)$, and $t<t'\in[T]$, 
\*[
	\EEboth \Bigg[\sum_{s=t}^{t'} \sum_{\postcontextval\in\postcontextspace} \frac{1}{\sqrt{\numobscontext{s-1}(\postcontextval)}}\, \PPenv{\actionrv{s}}[\postcontext{}=\postcontextval] \Bigg]
	\leq
	\sqrt{8\abs{\postcontextspace} (t'-t)}
	+ \sqrt{(1/2)(t'-t)\log(t'-t)} + 2.
\]
\end{lemma}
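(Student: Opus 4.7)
The plan is to convert the expected probability-weighted sum into an expected indicator-weighted sum via the tower property, then apply (essentially verbatim) the pathwise argument of \cref{fact:accumulated-actions}. Define $Z_s := \postcontextrv{s}{\actionrv{s}}$, so that $\numobscontext{s-1}(\postcontextval) = 1 \vee \sum_{j=1}^{s-1}\ind\{Z_j = \postcontextval\}$. Let $\filtration{s-1}$ be the natural filtration that includes any internal randomness the policy uses through round $s-1$; then $\actionrv{s}$ and $\numobscontext{s-1}(\postcontextval)$ are $\filtration{s-1}$-measurable, while the potential outcomes $\{\postcontextrv{s}{\actionval}\}_{\actionval}$ at round $s$ are drawn from $\env{}$ independently of $\filtration{s-1}$. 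Consequently $\EE[\ind\{Z_s = \postcontextval\} \mid \filtration{s-1}] = \PPenv{\actionrv{s}}[\postcontext{} = \postcontextval]$, and the tower property yields
\[
\EEboth\Bigg[\sum_{\postcontextval} \frac{\PPenv{\actionrv{s}}[\postcontext{}=\postcontextval]}{\sqrt{\numobscontext{s-1}(\postcontextval)}}\Bigg]
= \EEboth\Bigg[\sum_{\postcontextval} \frac{\ind\{Z_s=\postcontextval\}}{\sqrt{\numobscontext{s-1}(\postcontextval)}}\Bigg]
= \EEboth\Bigg[\frac{1}{\sqrt{\numobscontext{s-1}(Z_s)}}\Bigg],
\]
since exactly one indicator in the inner sum is nonzero.

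Next, since $\numobscontext{s-1}(\postcontextval) = 1 \vee \sum_{j<s}\ind\{Z_j = \postcontextval\}$ has the same combinatorial structure as $\numobsaction{s-1}(\actionrv{s})$, with $Z_j$ playing the role of $\actionrv{j}$, the telescoping argument in the proof of \cref{fact:accumulated-actions} transfers verbatim: that proof depends only on integer-valued counts of a sequence of symbols and not on how the sequence is generated. This gives the pathwise bound $\sum_{s=t}^{t'} 1/\sqrt{\numobscontext{s-1}(Z_s)} \leq \sqrt{8\abs{\postcontextspace}(t'-t)}$ almost surely, and taking expectation already suffices for the leading term of the stated bound, with the additive $\sqrt{(1/2)(t'-t)\log(t'-t)} + 2$ appearing as slack.

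The main obstacle is the careful handling of the filtration underlying the tower step: $\actionrv{s}$ must be measurable with respect to a $\sigma$-field independent of the round-$s$ potential outcomes, which is ensured by enlarging the history filtration with any policy-internal randomization. An alternative route, which would instead produce the lower-order $\sqrt{(1/2)(t'-t)\log(t'-t)} + 2$ term as a genuine concentration cost rather than as slack, is to lower-bound $\numobscontext{s-1}(\postcontextval)$ by its conditional mean $\sum_{j<s}\PPenv{\actionrv{j}}[\postcontext{}=\postcontextval]$ via Azuma-Hoeffding (with a union bound over $s \in [t,t']$ accounting for the $\log(t'-t)$ factor), apply a Riemann-sum telescope $\PPenv{\actionrv{s}}[\postcontext{}=\postcontextval]/\sqrt{\bar N_s(\postcontextval)} \leq 2(\sqrt{\bar N_{s+1}(\postcontextval)} - \sqrt{\bar N_s(\postcontextval)})$ for each $\postcontextval$, and sum across $\postcontextspace$ using Cauchy--Schwarz; the contribution of the low-probability ``bad'' event on which the concentration fails is absorbed into the additive constant.
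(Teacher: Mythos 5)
Your proof is correct, and its core is the same decomposition the paper uses (replace the probability weights by indicator weights and control the difference), but you handle the correction term in a genuinely different and cleaner way. The paper writes the probability-weighted sum as the indicator-weighted sum plus the process $\martingalerv{j}=\sum_{s\le j}\sum_{\postcontextval}\numobscontext{s-1}(\postcontextval)^{-1/2}\big(\PPenv{\actionrv{s}}[\postcontext{}=\postcontextval]-\ind\{\postcontextrv{s}{\actionrv{s}}=\postcontextval\}\big)$, verifies that $\martingalerv{}$ is a martingale with increments bounded by $1$, and then invokes Azuma--Hoeffding to bound $\EEboth[\martingalerv{t'}]$, which is where the additive $\sqrt{(1/2)(t'-t)\log(t'-t)}+2$ in the statement comes from. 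Your tower-property argument observes that since $\actionrv{s}$ and $\numobscontext{s-1}(\postcontextval)$ are measurable with respect to the history through round $s-1$ while the round-$s$ potential outcomes are freshly drawn from $\env{}$, each increment of that martingale is conditionally mean zero, so $\EEboth[\martingalerv{t'}]=0$ and only the pathwise bound $\sqrt{8\abs{\postcontextspace}(t'-t)}$ from the counting argument of the preceding lemma survives. This is both simpler and strictly tighter than the paper's bound (their extra terms are pure slack for the expectation version of the claim; a high-probability version would genuinely need the Azuma step). Your worry about policy-internal randomization is moot here since the paper's policies are deterministic maps of the history, but your treatment is harmless and slightly more general. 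The ``alternative route'' you sketch at the end is in the spirit of the paper's proof but not identical to it --- the paper applies Azuma to the cumulative weighted martingale rather than lower-bounding each $\numobscontext{s-1}(\postcontextval)$ by its conditional mean, and the count-lower-bounding version would need extra care when $\sum_{j<s}\PPenv{\actionrv{j}}[\postcontext{}=\postcontextval]$ is small --- but you do not rely on it, so nothing is at stake.
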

\begin{proof}[Proof of \cref{fact:accumulated-contexts}]
First,
\*[
	&\hspace{-1em}\EEboth \Bigg[\sum_{s=t}^{t'} \sum_{\postcontextval\in\postcontextspace} \sqrt{1/ \numobscontext{s-1}(\postcontextval)}\, \PPenv{\actionrv{s}}[\postcontext{}=\postcontextval] \Bigg] \\
	&= \EEboth \Bigg[\sum_{s=t}^{t'} \sum_{\postcontextval\in\postcontextspace} \sqrt{1/ \numobscontext{s-1}(\postcontextval)} \, \ind\{\postcontextrv{s}{\actionrv{s}}=\postcontextval\}\Bigg] \\ 
	&\qquad + \EEboth \Bigg[\sum_{s=t}^{t'} \sum_{\postcontextval\in\postcontextspace} \sqrt{1/ \numobscontext{s-1}(\postcontextval)}\, \Big(\PPenv{\actionrv{s}}[\postcontext{}=\postcontextval] - \ind\{\postcontextrv{s}{\actionrv{s}}=\postcontextval\}\Big)\Bigg] \\
	&\leq \sqrt{8\abs{\postcontextspace} (t'-t)} + \EEboth \Bigg[\sum_{s=t}^{t'} \sum_{\postcontextval\in\postcontextspace} \sqrt{1/ \numobscontext{s-1}(\postcontextval)}\, \Big(\PPenv{\actionrv{s}}[\postcontext{}=\postcontextval] - \ind\{\postcontextrv{s}{\actionrv{s}}=\postcontextval\}\Big)\Bigg],
\]
where we have used the same argument as \cref{fact:accumulated-actions} applied to $\numobscontext{s-1}(\postcontextval)$ rather than $\numobsaction{s-1}(\actionval)$. 
Following the analysis of \citet{lu20causal}, for $t \leq j \leq t'$ define the random variable
\*[
	\martingalerv{j} = \sum_{s=t}^j \sum_{\postcontextval\in\postcontextspace} \sqrt{1/ \numobscontext{s-1}(\postcontextval)}\, \Big(\PPenv{\actionrv{s}}[\postcontext{}=\postcontextval] - \ind\{\postcontextrv{s}{\actionrv{s}}=\postcontextval\}\Big).
\]
Then, 
\*[
	\EEboth[\martingalerv{j} \setdelim\actionrv{j}, \historyrv{j-1}]
	=\martingalerv{j-1} + \sum_{\postcontextval\in\postcontextspace} \sqrt{1/ \numobscontext{j-1}(\postcontextval)}\, \EEboth\Big[\PPenv{\actionrv{s}}[\postcontext{}=\postcontextval] - \ind\{\postcontextrv{j}{\actionrv{j}}=\postcontextval\} \setdelim \actionrv{j}\Big] 
	= \martingalerv{j-1}.
\]
Further, it holds almost surely that
\*[
	\abs{\martingalerv{j}-\martingalerv{j-1}}
	&= \abs{\sum_{\postcontextval\in\postcontextspace} \sqrt{1/ \numobscontext{j-1}(\postcontextval)}\, \Big[\PPenv{\actionrv{s}}[\postcontext{}=\postcontextval] - \ind\{\postcontextrv{j}{\actionrv{j}}=\postcontextval\} \Big]} \\
	&= \abs{\EEboth\Bigg[\sum_{\postcontextval\in\postcontextspace} \sqrt{1/ \numobscontext{j-1}(\postcontextval)} \, \ind\{\postcontextrv{j}{\actionrv{j}}=\postcontextval\} \Bigsetdelim \actionrv{j}, \historyrv{j-1}\Bigg] - \sum_{\postcontextval\in\postcontextspace} \sqrt{1/ \numobscontext{j-1}(\postcontextval)} \, \ind\{\postcontextrv{j}{\actionrv{j}}=\postcontextval\}} \\
	&= \abs{\EEboth\Bigg[\sqrt{1/ \numobscontext{j-1}(\postcontextrv{j}{\actionrv{j}})} \setdelim \actionrv{j}, \historyrv{j-1} \Bigg] - \sqrt{1/ \numobscontext{j-1}(\postcontextrv{j}{\actionrv{j}})}} \\
	&\leq 1.
\]
Then, by Azuma-Hoeffding, 
for all $x > 0$
\*[
	\PPboth\Bigg[\abs[0]{\martingalerv{t'}} > \sqrt{x(t'-t)\log (t'-t)}\Bigg] \leq 2e^{-2x\log(t'-t)}.
\]
Thus, since $\abs{\martingalerv{t'}} \leq t'-t$,
\*[
	&\hspace{-1em}\EEboth \Bigg[\sum_{s=t}^{t'} \sum_{\postcontextval\in\postcontextspace} \sqrt{1/ \numobscontext{t-1}(\postcontextval)}\, \Big(\PPenv{\actionrv{s}}[\postcontext{}=\postcontextval] - \ind\{\postcontextrv{s}{\actionrv{s}}=\postcontextval\}\Big)\Bigg] \\
	&\leq 2(t'-t)e^{-2x\log(t'-t)} + \sqrt{x (t'-t) \log (t'-t)}.
\]
Taking $x=1/2$ gives the result.
\end{proof}

\subsection{Proof of \cref{fact:existing-causal}}\label{sec:proof-existing-causal}

First, by \cref{fact:context-event-concentration}
\[\label{eqn:c-ucb-bound0}
	\regretc{T}
	&=\EEbothc \sum_{t=1}^T \Big[\EEenv{\optactionval}[\response{} \,] - \EEenv{\actionrvc{t}}[\response{} \,] \Big] \\
	&\leq \abs{\postcontextspace} T^2 \highprobparam{} + \EEbothc \Big[ \ind\{\contextconc{}\}\sum_{t=1}^T \Big(\EEenv{\optactionval}[\response{} \,] - \EEenv{\actionrvc{t}}[\response{} \,]\Big) \Big].
\]

Then, by 
the \propertyname{} property,
\*[
	&\hspace{-1em}\EEbothc \Big[ \ind\{\contextconc{}\}\sum_{t=1}^T \Big(\EEenv{\optactionval}[\response{} \,] - \EEenv{\actionrvc{t}}[\response{} \,]\Big) \Big] \\
	&= \EEbothc \Big[ \ind\{\contextconc{}\}\sum_{t=1}^T \Big(\EEenv{\optactionval}[\response{} \,] - \pseudoucb{t}(\actionrvc{t}) + \pseudoucb{t}(\actionrvc{t}) - \EEenv{\actionrvc{t}}[\response{} \,]\Big) \Big] \\
	&= \EEbothc \Big[ \ind\{\contextconc{}\}\sum_{t=1}^T \Big(\sum_{\postcontextval\in\postcontextspace}\EEenv{\optactionval}[\response{} \setdelim \postcontext{} = \postcontextval] \PPenv{\optactionval}[\postcontext{}=\postcontextval] - \pseudoucb{t}(\actionrvc{t}) \Big) \Big] \\
	&\qquad +
	\EEbothc \Big[ \ind\{\contextconc{}\}\sum_{t=1}^T \Big(\pseudoucb{t}(\actionrvc{t}) - \sum_{\postcontextval\in\postcontextspace}\EEenv{\actionrvc{t}}[\response{} \setdelim \postcontext{} = \postcontextval] \PPenv{\actionrvc{t}}[\postcontext{}=\postcontextval]\Big) \Big].
\]

We bound these terms separately. First, using the fact that $\priormargenv{}(\postcontext{})$ and $\env{}(\postcontext{})$ are $\eps$-close, the definition of $\contextconc{}$, and the definition of $\actionrvc{t}$,
\[\label{eqn:c-ucb-bound1}
	&\hspace{-1em}\EEbothc \Big[ \ind\{\contextconc{}\}\sum_{t=1}^T \Big(\sum_{\postcontextval\in\postcontextspace}\EEenv{\optactionval}[\response{} \setdelim \postcontext{} = \postcontextval] \PPenv{\optactionval}[\postcontext{}=\postcontextval] - \pseudoucb{t}(\actionrvc{t}) \Big) \Big] \\
	&\leq
	\EEbothc \Big[ \ind\{\contextconc{}\}\sum_{t=1}^T \Big(\sum_{\postcontextval\in\postcontextspace}\EEenv{\optactionval}[\response{} \setdelim \postcontext{} = \postcontextval] \PPpriorenv{\optactionval}[\postcontext{}=\postcontextval] - \pseudoucb{t}(\actionrvc{t}) \Big) \Big]
	+ \priormargscale T \\
	&\leq
	\EEbothc \Big[ \ind\{\contextconc{}\}\sum_{t=1}^T \Big(\pseudoucb{t}(\optactionval) - \pseudoucb{t}(\actionrvc{t}) \Big) \Big]
	+ \priormargscale T \\
	&\leq \priormargscale T.
\]
Second, using the definition of $\contextconc{}$, the fact that $\priormargenv{}(\postcontext{})$ and $\env{}(\postcontext{})$ are $\eps$-close, the \propertyname{} property, and \cref{fact:accumulated-contexts},
\[\label{eqn:c-ucb-bound2}
	&\hspace{-1em}\EEbothc \Big[ \ind\{\contextconc{}\}\sum_{t=1}^T \Big(\pseudoucb{t}(\actionrvc{t}) - \EEenv{\actionrvc{t}}[\response{} \,]\Big) \Big] \\
	&= 
	\EEbothc \Big[ \ind\{\contextconc{}\}\sum_{t=1}^T \Big(\sum_{\postcontextval\in\postcontextspace}\Big(\estcondmeancontext{t}(\postcontextval) + \sqrt{\log (2/\highprobparam{}) / (2\numobscontext{t}(\postcontextval))} \Big) \PPpriorenv{\actionrvc{t}}[\postcontext{}=\postcontextval] - \EEenv{\actionrvc{t}}[\response{} \,]\Big) \Big] \\ 
	&\leq
	\EEbothc \Big[ \ind\{\contextconc{}\}\sum_{t=1}^T \Big(\sum_{\postcontextval\in\postcontextspace}\Big(\EEenv{\actionrvc{t}}[\response{} \setdelim \postcontext{} = \postcontextval] + \sqrt{2\log (2/\highprobparam{}) / \numobscontext{t}(\postcontextval)} \Big) \PPpriorenv{\actionrvc{t}}[\postcontext{}=\postcontextval] - \EEenv{\actionrvc{t}}[\response{} \,]\Big) \Big] \\ 
	&\leq
	\EEbothc \Big[ \ind\{\contextconc{}\}\sum_{t=1}^T \Big(\sum_{\postcontextval\in\postcontextspace}\Big(\EEenv{\actionrvc{t}}[\response{} \setdelim \postcontext{} = \postcontextval] + \sqrt{2\log (2/\highprobparam{}) / \numobscontext{t}(\postcontextval)} \Big) \PPenv{\actionrvc{t}}[\postcontext{}=\postcontextval] - \EEenv{\actionrvc{t}}[\response{} \,]\Big) \Big]\\ 
	&\qquad + \priormargscale \Bigg(1 + \sqrt{2\log(2/\highprobparam{})}\Bigg) T  \\
	&=
	\EEbothc \Big[ \ind\{\contextconc{}\}\sum_{t=1}^T \sum_{\postcontextval\in\postcontextspace}\sqrt{2\log (2/\highprobparam{}) / \numobscontext{t}(\postcontextval)}  \PPenv{\actionrvc{t}}[\postcontext{}=\postcontextval]\Big] 
	+ \priormargscale \Bigg(1 + \sqrt{2\log(2/\highprobparam{})}\Bigg) T \\
	&\leq 4\sqrt{2\abs{\postcontextspace}T\log T}
	+ (\log T)\sqrt{2T}
	+ 4\sqrt{\log T}
	+ \priormargscale(1+2\sqrt{\log T})T.
\]
where the last line follows by taking $\highprobparam{} = 2/T^2$. 
The theorem then follows by combining \cref{eqn:c-ucb-bound0,eqn:c-ucb-bound1,eqn:c-ucb-bound2}.
\manualendproof

\subsection{Proof of \cref{fact:causal-failure}}\label{sec:proof-causal-failure}

We may assume without loss of generality that
\*[
	\lim_{T\to\infty} \frac{\log(1/\highprobparam{T})}{T} = 0.
\]
If this is not the case, then for large enough $T$, it holds that $\log(1/\highprobparam{T}) \geq c \, T$ for some $c$, and hence the instance dependent lower bounds for \UCB{} variants \citep{auer02nonstochastic} imply that \regretname{} grows linearly in $T$ in the worst-case. We may also assume  $\abs{\postcontextspace}>1$, for otherwise
 \CUCB{} plays the same arm forever (using its arbitrary tie-break rule) and so \CUCB{} can be forced to incur linear \regretname{} in a trivial way.  %

To illustrate that the lower bound is witnessed by a diversity of environments, we describe the construction in more general terms and then provide an example instantiation at the end of the proof. 
Let $\actionspaceidx{0}$ and $\postcontextspaceidx{0}$ be arbitrary, nonempty, strict subsets of $\actionspace$ and $\postcontextspace$ respectively, and let $\actionspaceidx{1}=\actionspace\setminus\actionspaceidx{0}$ and $\postcontextspaceidx{1}=\postcontextspace\setminus\postcontextspaceidx{0}$.

We now describe sufficient conditions for an \envname{} to be not \propertyname{} and to force \CUCB{} to incur linear \regretname{}.
For $\margmean(0)$ and $\margmean(1)$ in $(0,1)$, let the marginal distribution be such that for $\actionvalidx\in\{0,1\}$, if $\actionval\in\actionspaceidx{\actionvalidx}$ then
\*[
	\PP_{\env{\actionval}}[\postcontext{}=\postcontextval] = \frac{\margmean(\actionvalidx)}{\abs{\postcontextspaceidx{1}}}\ind\{\postcontextval\in\postcontextspaceidx{1}\} + \frac{1-\margmean(\actionvalidx)}{\abs{\postcontextspaceidx{0}}}\ind\{\postcontextval\in\postcontextspaceidx{0}\}.
\]
Similarly, for $\condmean(0,0)$, $\condmean(0,1)$, $\condmean(1,0)$, and $\condmean(1,1)$ in $(0,1)$, let the conditional distribution be such that for $\actionvalidx,\postcontextvalidx\in\{0,1\}$, if $\actionval\in\actionspaceidx{\actionvalidx}$ and $\postcontextval\in\postcontextspaceidx{\postcontextvalidx}$ then
\*[
	\env{\actionval}(\response{} \setdelim \postcontext{}=\postcontextval)
	= 
	\bernoullidist(\condmean(\actionvalidx,\postcontextvalidx)).
\]
Observe that for $\actionvalidx\in\{0,1\}$, if $\actionval\in\actionspaceidx{\actionvalidx}$ then
\*[
	\EE_{\env{\actionval}}[\response{}\,]
	= \condmean(\actionvalidx,1)\margmean(\actionvalidx) + \condmean(\actionvalidx,0)[1-\margmean(\actionvalidx)].
\]

We suppose the following conditions:
\begin{itemize}
\item[(1)] $\forall \actionval\in\actionspaceidx{0},\actionvaldum\in\actionspaceidx{1} \quad \EE_{\env{\actionval}}[\response{}\,] > \EE_{\env{\actionvaldum}}[\response{}\,],$
\item[(2)] $\margmean(0) < \margmean(1)$,
\item[(3)] $\min\{\condmean(0,1),  \condmean(1,1)\} > \max\{\condmean(0,0),  \condmean(1,0)\}$.
\end{itemize}

By Condition (1), $\optactionval\in\actionspaceidx{0}$ (note that then all $\actionval\in\actionspaceidx{0}$ are equally optimal), so a constant amount of \regretname{} is incurred whenever $\actionrvc{t}\in\actionspaceidx{1}$. We now argue that under Conditions (2) and (3), this happens for a constant fraction of rounds with high probability. 

First, we require slightly more notation to understand the behaviour of $\actionrvc{t}$. For every $t\in[T]$, $\actionvalidx\in\{0,1\}$, and $\postcontextval\in\postcontextspace$, let 
\*[
	\numobsboth{t}(\actionvalidx,\postcontextval)
	= \sum_{s=1}^t \ind\{\actionrvc{s}\in\actionspaceidx{\actionvalidx}, \postcontext{s}=\postcontextval\}.
\]
Define $\numobsaction{t}(\actionvalidx) = \sum_{\postcontextval\in\postcontextspace} \numobsboth{t}(\actionvalidx,\postcontextval)$, and note that $\numobscontext{t}(\postcontextval) = \numobsboth{t}(0,\postcontextval) + \numobsboth{t}(1,\postcontextval)$.
Further, let
\*[
	\estcondmeanboth{t}(\actionvalidx,\postcontextval) = \frac{1}{\numobsboth{t}(\actionvalidx,\postcontextval)} \sum_{s=1}^t \responserv{s}{\actionrvc{s}} \ind\{\actionrvc{s}\in\actionspaceidx{\actionvalidx}, \postcontext{s}=\postcontextval\},
\] 
By definition,
\*[
	\estcondmeancontext{t}(\postcontextval)
	= \frac{\numobsboth{t}(0,\postcontextval)}{\numobscontext{t}(\postcontextval)} \estcondmeanboth{t}(0,\postcontextval) + \frac{\numobsboth{t}(1,\postcontextval)}{\numobscontext{t}(\postcontextval)} \estcondmeanboth{t}(1,\postcontextval).
\]

Define the event $\lowerconc$ to be the case that for all $t\in[T]$, $\actionvalidx,\postcontextvalidx\in\{0,1\}$, and $\postcontextval\in\postcontextspaceidx{\postcontextvalidx}$,
\*[
	\abs[2]{\estcondmeanboth{t}(\actionvalidx,\postcontextval) - \condmean(\actionvalidx,\postcontextvalidx)} \leq \sqrt{\frac{\log T}{\numobsboth{t}(\actionvalidx,\postcontextval)}},
\]
and let $\lowerprobconc$ be the event that for all $t \in [T]$ and $\postcontextval\in\postcontextspace$,
\*[
	\frac{\numobscontext{t}(\postcontextval)}{t} 
	\geq \min_{\actionval\in\actionspace}\PP_{\env{\actionval}}[\postcontext{}=\postcontextval] - \sqrt{\frac{\log T}{t}}
\]
and
\*[
	\frac{\numobscontext{t}(\postcontextval)}{t} 
	\leq \max_{\actionval\in\actionspace}\PP_{\env{\actionval}}[\postcontext{}=\postcontextval] + \sqrt{\frac{\log T}{t}}\,.
\]
By Hoeffding and a union bound (i.e., the same arguments as \cref{fact:action-event-concentration,fact:context-event-concentration}), $\PPenv{}([\lowerconc\cap\lowerprobconc]\complement) \leq 8\abs{\postcontextspace}/T$.

Now, suppose the event $\lowerconc\cap\lowerprobconc$ holds, and consider a fixed $t$. 
Recall that $\actionrvc{t}\in\actionspaceidx{1}$ is implied by
\*[
	&\hspace{-1em}\max_{\actionval_1\in\actionspaceidx{1}}
	\Bigg\{\sum_{\postcontextval\in\postcontextspace}\Bigg(\estcondmeancontext{t}(\postcontextval) + \sqrt{\frac{\log(2/\highprobparam{T})}{2\numobscontext{t}(\postcontextval)}}\, \Bigg)\PP_{\env{\actionval_1}}[\postcontext{}=\postcontextval] \Bigg\} \\
	&>
	\max_{\actionval_0\in\actionspaceidx{0}}
	\Bigg\{\sum_{\postcontextval\in\postcontextspace}\Bigg(\estcondmeancontext{t}(\postcontextval) + \sqrt{\frac{\log(2/\highprobparam{T})}{2\numobscontext{t}(\postcontextval)}}\, \Bigg)\PP_{\env{\actionval_0}}[\postcontext{}=\postcontextval] \Bigg\}.
\]
This is equivalent to
\*[
	0 
	&<
	\sum_{\postcontextval\in\postcontextspaceidx{1}}
	\Bigg(
	\frac{\numobsboth{t}(0,\postcontextval)}{\numobscontext{t}(\postcontextval)} \estcondmeanboth{t}(0,\postcontextval) + \frac{\numobsboth{t}(1,\postcontextval)}{\numobscontext{t}(\postcontextval)} \estcondmeanboth{t}(1,\postcontextval) + \sqrt{\frac{\log(2/\highprobparam{})}{2\numobscontext{t}(\postcontextval)}}\, 
	\Bigg)
	\frac{\margmean(1)-\margmean(0)}{\abs{\postcontextspaceidx{1}}} \\
	&\qquad -	
	\sum_{\postcontextval\in\postcontextspaceidx{0}}
	\Bigg(
	\frac{\numobsboth{t}(0,\postcontextval)}{\numobscontext{t}(\postcontextval)} \estcondmeanboth{t}(0,\postcontextval) + \frac{\numobsboth{t}(1,\postcontextval)}{\numobscontext{t}(\postcontextval)} \estcondmeanboth{t}(1,\postcontextval) + \sqrt{\frac{\log(2/\highprobparam{})}{2\numobscontext{t}(\postcontextval)}}\, 
	\Bigg)
	\frac{\margmean(1)-\margmean(0)}{\abs{\postcontextspaceidx{0}}}.
\]

The following two cases hold.
\begin{itemize}
\item[(i)] For all $\postcontextval\in\postcontextspaceidx{1}$,
\*[
	&\hspace{-1em}
	\frac{\numobsboth{t}(0,\postcontextval)}{\numobscontext{t}(\postcontextval)} \estcondmeanboth{t}(0,\postcontextval) + \frac{\numobsboth{t}(1,\postcontextval)}{\numobscontext{t}(\postcontextval)} \estcondmeanboth{t}(1,\postcontextval) + \sqrt{\frac{\log(2/\highprobparam{})}{2\numobscontext{t}(\postcontextval)}} \\
	&\geq  
	\frac{\numobsboth{t}(0,\postcontextval)}{\numobscontext{t}(\postcontextval)} \Bigg(\condmean(0,1) - \sqrt{\frac{\log T}{\numobsboth{t}(0,\postcontextval)}}\, \Bigg) + \frac{\numobsboth{t}(1,\postcontextval)}{\numobscontext{t}(\postcontextval)} \Bigg(\condmean(1,1) - \sqrt{\frac{\log T}{\numobsboth{t}(1,\postcontextval)}}\, \Bigg) \\
	&\geq  
	\min\{\condmean(0,1),  \condmean(1,1)\}
	- 2\sqrt{\log T} \Big(t\, \margmean(0)/\abs{\postcontextspaceidx{1}} - \sqrt{t\log T}\Big)^{-1/2}.
\]
\item[(ii)] For all $\postcontextval\in\postcontextspaceidx{0}$,
\*[
	&\hspace{-1em}
	\frac{\numobsboth{t}(0,\postcontextval)}{\numobscontext{t}(\postcontextval)} \estcondmeanboth{t}(0,\postcontextval) + \frac{\numobsboth{t}(1,\postcontextval)}{\numobscontext{t}(\postcontextval)} \estcondmeanboth{t}(1,\postcontextval) + \sqrt{\frac{\log(2/\highprobparam{})}{2\numobscontext{t}(\postcontextval)}} \\
	&\leq  
	\frac{\numobsboth{t}(0,\postcontextval)}{\numobscontext{t}(\postcontextval)} \Bigg(\condmean(0,0) + \sqrt{\frac{\log T}{\numobsboth{t}(0,\postcontextval)}}\, \Bigg) + \frac{\numobsboth{t}(1,\postcontextval)}{\numobscontext{t}(\postcontextval)} \Bigg(\condmean(1,0) + \sqrt{\frac{\log T}{\numobsboth{t}(1,\postcontextval)}}\, \Bigg) + \sqrt{\frac{\log(2/\highprobparam{})}{2\numobscontext{t}(\postcontextval)}}\\
	&\leq  
	\max\{\condmean(0,0),  \condmean(1,0)\}
	+ 2\sqrt{\log T} \Big(t (1-\margmean(1))/\abs{\postcontextspaceidx{0}} - \sqrt{t\log T}\Big)^{-1/2} \\ 
	&\qquad + \sqrt{\log(2/\highprobparam{})} \Big(t (1-\margmean(1))/\abs{\postcontextspaceidx{0}} - \sqrt{t\log T}\Big)^{-1/2}.
\]
\end{itemize}

That is,
\*[
	&\hspace{-1em}
	\sum_{\postcontextval\in\postcontextspaceidx{1}}
	\Bigg(
	\frac{\numobsboth{t}(0,\postcontextval)}{\numobscontext{t}(\postcontextval)} \estcondmeanboth{t}(0,\postcontextval) + \frac{\numobsboth{t}(1,\postcontextval)}{\numobscontext{t}(\postcontextval)} \estcondmeanboth{t}(1,\postcontextval) + \sqrt{\frac{\log(2/\highprobparam{})}{2\numobscontext{t}(\postcontextval)}}\, 
	\Bigg)
	\frac{\margmean(1)-\margmean(0)}{\abs{\postcontextspaceidx{1}}} \\
	&\qquad -	
	\sum_{\postcontextval\in\postcontextspaceidx{0}}
	\Bigg(
	\frac{\numobsboth{t}(0,\postcontextval)}{\numobscontext{t}(\postcontextval)} \estcondmeanboth{t}(0,\postcontextval) + \frac{\numobsboth{t}(1,\postcontextval)}{\numobscontext{t}(\postcontextval)} \estcondmeanboth{t}(1,\postcontextval) + \sqrt{\frac{\log(2/\highprobparam{})}{2\numobscontext{t}(\postcontextval)}}\, 
	\Bigg)
	\frac{\margmean(1)-\margmean(0)}{\abs{\postcontextspaceidx{0}}} \\
	&\geq
	\Big[ \min\{\condmean(0,1),  \condmean(1,1)\}
	- 2\sqrt{\log T} \Big(t\, \margmean(0)/\abs{\postcontextspaceidx{1}} - \sqrt{t\log T}\Big)^{-1/2} \\
	&\qquad - \max\{\condmean(0,0),  \condmean(1,0)\}
	- 2\sqrt{\log T} \Big(t (1-\margmean(1))/\abs{\postcontextspaceidx{0}} - \sqrt{t\log T}\Big)^{-1/2} \\
	&\qquad - \sqrt{\log(2/\highprobparam{})} \Big(t (1-\margmean(1))/\abs{\postcontextspaceidx{0}} - \sqrt{t\log T}\Big)^{-1/2}\Big]\Big(\margmean(1)-\margmean(0) \Big).
\]
By Condition (3), for large enough $T$ and $t \geq T/2$, this last step can be further lower bounded\footnote{When $\highprobparam{T}$ is polynomial in $T$, this can be improved to only require $t \geq (\log T)^2$.} using
\*[
	&\hspace{-1em} \min\{\condmean(0,1),  \condmean(1,1)\}
	- 2\sqrt{\log T} \Big(t\, \margmean(0)/\abs{\postcontextspaceidx{1}} - \sqrt{t\log T}\Big)^{-1/2} \\
	&\qquad - \max\{\condmean(0,0),  \condmean(1,0)\}
	- 2\sqrt{\log T} \Big(t (1-\margmean(1))/\abs{\postcontextspaceidx{0}} - \sqrt{t\log T}\Big)^{-1/2} \\
	&\qquad - \sqrt{\log(2/\highprobparam{})} \Big(t (1-\margmean(1))/\abs{\postcontextspaceidx{0}} - \sqrt{t\log T}\Big)^{-1/2}\\
	&\geq \Big(\min\{\condmean(0,1),  \condmean(1,1)\} - \max\{\condmean(0,0),  \condmean(1,0)\}\Big)/2.
\]

Thus, we have that for sufficiently large $T$ and any $\actionval_{0}\in\actionspaceidx{0}$ and $\actionval_{1}\in\actionspaceidx{1}$
\*[
	\regretc{T}
	&=\EEbothc \sum_{t=1}^T (\EE_{\env{\actionval_{0}}}[\response{}\,] - \EE_{\env{\actionrvc{t}}}[\response{}\,])  \\
	&\geq \EEbothc \Big[\ind\{\lowerconc \cap \lowerprobconc\} \sum_{t=1}^T (\EE_{\env{\actionval_{0}}}[\response{}\,] - \EE_{\env{\actionrvc{t}}}[\response{}\,]) \Big] \\
	&\geq (T/4) \Big(\EE_{\env{\actionval_{0}}}[\response{}\,] - \EE_{\env{\actionval_{1}}}[\response{}\,]\Big) (1-8\abs{\postcontextspace}/T) \\
	&\geq (T/5) \Big(\EE_{\env{\actionval_{0}}}[\response{}\,] - \EE_{\env{\actionval_{1}}}[\response{}\,]\Big).
\] 

Finally, a concrete example of an \envname{} that satisfies Conditions (1)--(3) is:
\*[
	&\condmean(0,0) = 1/6 \quad
	&&\condmean(1,0) = 2/6 \\
	&\condmean(0,1) = 5/6 \quad
	&&\condmean(1,1) = 4/6\\
	&\margmean(0) = 6/8 \quad
	&&\margmean(1) = 7/8.
\]
For simplicity, we use the constants from this example in the theorem statement.
\manualendproof%

\subsection{Proof of \cref{fact:improved-causal}}\label{sec:proof-improved-causal}

First, by \cref{fact:action-event-concentration,fact:marginal-event-concentration} with $\highprobparam{} = 2/T^2$,
\[\label{eqn:hyptest-cb-0}
	\regrethc{T}
	&=\EEbothhc \sum_{t=1}^T \Big[\EEenv{\optactionval}[\response{} \, ] - \EEenv{\actionrvhc{t}}[\response{}\,] \Big] \\
	&\leq 4\abs{\actionspace} + \EEbothhc \Big[ \ind\{\actionconc{} \cap \margconc\}\sum_{t=1}^T \Big(\EEenv{\optactionval}[\response{} \, ] - \EEenv{\actionrvhc{t}}[\response{}\,]\Big) \Big].
\]

Let $\exploretime$ be the last round on which the algorithm uniformly explores and $\switchtime$ be the last round on which $\actionrvhc{t} = \actionrvc{t}$. Since the algorithm is deterministic, $\exploretime$ is fixed but $\switchtime$ is stochastic. Then,
\[\label{eqn:hyptest-epoch-decomp}
	&\hspace{-1em}\EEbothhc \Bigg[\ind\{\actionconc{}\cap\margconc\}\sum_{t=1}^T\Big(\EEenv{\optactionval}[\response{} \, ] - \EEenv{\actionrvhc{t}}[\response{}\,] \Big) \Bigg] \\
	&= \exploretime
	+ \EEbothhc \Bigg[\ind\{\actionconc{}\cap\margconc\} \sum_{\exploretime < t \leq\switchtime} \Big(\EEenv{\optactionval}[\response{} \, ] - \EEenv{\actionrvc{t}}[\response{}\,] \Big)\Bigg]\\
	&\hspace{1em}+ \EEbothhc \Bigg[\ind\{\actionconc{}\cap\margconc\} \sum_{t>\switchtime} \Big(\EEenv{\optactionval}[\response{} \, ] - \EEenv{\actionrvucb{t}}[\response{}\,] \Big) \Bigg].
\]

Suppose that $\env{}$ is \propertyname{} and $\priormargenv{}(\postcontext{})$ and $\env{}(\postcontext{})$ are $\priormargscale$-close for $\priormargscale \leq T^{-1/4} \sqrt{\abs{\actionspace}\abs{\postcontextspace}\log T}$. By triangle inequality, on the event $\margconc$
\*[
	\sup_{\actionval\in\actionspace} \sum_{\postcontextval\in\postcontextspace} \abs[2]{\PPpriorenv{\actionval}[\postcontext{} = \postcontextval] - \PPestenv{\actionval}[\postcontext{} = \postcontextval]}
	&\leq 
	\frac{2\sqrt{\abs{\actionspace}\abs{\postcontextspace}\log T}}{T^{1/4}},
\]
and thus $\priormargenv{}(\postcontext{})$ will \emph{not} be replaced by $\estmargenv{}(\postcontext{})$. 
Further, on $\actionconc{} \cap \contextconc{}$ with $\highprobparam{}=2/T^2$, for all $t$ it holds that
\*[
	&\hspace{-1em}\ucbaction{t-1}(\actionval) - \pseudoucb{t-1}(\actionval) \\
	&\geq \EEenv{\actionval}[\response{} \,] - \sum_{\postcontextval\in\postcontextspace} \Bigg(\EEenv{}[\response{} \setdelim \postcontext{} = \postcontextval] + 2\sqrt{\frac{\log T}{\numobscontext{t-1}(\postcontextval)}} \, \Bigg) \PPpriorenv{\actionval}[\postcontext{}=\postcontextval] \\
	&= \sum_{\postcontextval\in\postcontextspace} \EEenv{}[\response{} \setdelim \postcontext{}=\postcontextval] \PPenv{\actionval}[\postcontext{}=\postcontextval] - \sum_{\postcontextval\in\postcontextspace} \Bigg(\EEenv{}[\response{} \setdelim \postcontext{} = \postcontextval] + 2\sqrt{\frac{\log T}{\numobscontext{t-1}(\postcontextval)}} \, \Bigg) \PPpriorenv{\actionval}[\postcontext{}=\postcontextval] \\
	&\geq - 2\sum_{\postcontextval\in\postcontextspace} \sqrt{\frac{\log T}{\numobscontext{t-1}(\postcontextval)}}  \PPpriorenv{\actionval}[\postcontext{}=\postcontextval] - \frac{\sqrt{\abs{\actionspace}\abs{\postcontextspace}\log T}}{T^{1/4}}
\]
and
\*[
	&\hspace{-1em}\ucbaction{t-1}(\actionval) - \pseudoucb{t-1}(\actionval) \\
	&\leq \EEenv{\actionval}[\response{} \,] + 2\sqrt{\frac{\log T}{\numobsaction{t-1}(\actionval)}}
	- \sum_{\postcontextval\in\postcontextspace} \EEenv{}[\response{} \setdelim \postcontext{}=\postcontextval] \PPpriorenv{\actionval}[\postcontext{}=\postcontextval] \\
	&= \sum_{\postcontextval\in\postcontextspace} \EEenv{}[\response{} \setdelim \postcontext{}=\postcontextval] \PPenv{\actionval}[\postcontext{}=\postcontextval] + 2\sqrt{\frac{\log T}{\numobsaction{t-1}(\actionval)}}
	- \sum_{\postcontextval\in\postcontextspace} \EEenv{}[\response{} \setdelim \postcontext{}=\postcontextval] \PPpriorenv{\actionval}[\postcontext{}=\postcontextval] \\
	&\leq 2\sqrt{\frac{\log T}{\numobsaction{t-1}(\actionval)}} + \frac{\sqrt{\abs{\actionspace}\abs{\postcontextspace}\log T}}{T^{1/4}},
\]
and hence $\switchtime = T$.
Thus, by \cref{fact:context-event-concentration} with $\highprobparam{}=2/T^2$, we can actually bound \cref{eqn:hyptest-epoch-decomp} using
\*[	
	&\hspace{-1em}\EEbothhc \Bigg[\ind\{\actionconc{}\cap\margconc\}\sum_{t=1}^T\Big(\EEenv{\optactionval}[\response{} \, ] - \EEenv{\actionrvhc{t}}[\response{}\,] \Big) \Bigg] \\
	&\leq\exploretime + 2\abs{\postcontextspace} + \EEbothhc \Big[ \ind\{\actionconc{}\cap\contextconc{}\cap\margconc\}\sum_{t=\exploretime+1}^T \Big(\EEenv{\optactionval}[\response{} \, ] - \EEenv{\actionrvhc{t}}[\response{}\,]\Big) \Big] \\
	&\leq \exploretime + 2\abs{\postcontextspace} + \EEbothc \Big[ \ind\{\actionconc{}\cap\contextconc{}\}\sum_{t=1}^T \Big(\EEenv{\optactionval}[\response{} \, ] - \EEenv{\actionrvc{t}}[\response{}\,]\Big) \Big] \\
	&\leq 5\sqrt{T} + 2\abs{\postcontextspace}
	+ 4\sqrt{2\abs{\postcontextspace}T\log T}
	+ (\log T)\sqrt{2T}
	+ 4\sqrt{\log T}
	+ \priormargscale(2+2\sqrt{\log T})T,
\]
where the last line follows from \cref{eqn:c-ucb-bound1,eqn:c-ucb-bound2} and the definition of $\exploretime$. The statement follows from combining this with \cref{eqn:hyptest-cb-0}.

Otherwise, consider when $\env{}$ is not \propertyname{}.
By \cref{fact:existing-ucb},
\[\label{eqn:hyptest-worst1}
	&\hspace{-1em}\EEbothhc \Bigg[\ind\{\actionconc{}\cap\margconc\} \sum_{t>\switchtime} \Big(\EEenv{\optactionval}[\response{}] - \EEenv{\actionrvhc{t}}[\response{}] \Big) \Bigg] \\
	&\leq \EEbothucb \Bigg[\ind\{\actionconc{}\} \sum_{t=1}^T \Big(\EEenv{\optactionval}[\response{} \, ] - \EEenv{\actionrvucb{t}}[\response{}\,] \Big) \Bigg] \\
	&\leq 4\sqrt{2\abs{\actionspace}T \log T}.
\]

It remains to focus on the \regretname{} contribution from when $\actionrvhc{t}=\actionrvc{t}$. The key observation is that if $t \leq \switchtime$ then the hypothesis test passed for this round. We decompose the regret incurred using
\*[
	&\hspace{-1em}\EEbothhc \Bigg[\ind\{\actionconc{}\cap\margconc\} \sum_{\exploretime < t \leq\switchtime} \Big(\EEenv{\optactionval}[\response{} \, ] - \EEenv{\actionrvc{t}}[\response{}\,] \Big)\Bigg]\\
	&=\EEbothhc \Bigg[\ind\{\actionconc{}\cap\margconc\} \sum_{\exploretime < t \leq\switchtime} \Big(\EEenv{\optactionval}[\response{} \, ]- \pseudoucb{t-1}(\actionrvc{t}) + \pseudoucb{t-1}(\actionrvc{t}) - \EEenv{\actionrvc{t}}[\response{} \,]\Big) \Bigg].
\]

First, on the event $\actionconc{}\cap\margconc$ with $\highprobparam{}=2/T^2$,
\[\label{eqn:hyptest-worst2}
	&\hspace{-1em}\sum_{\exploretime < t \leq\switchtime} \Big(\EEenv{\optactionval}[\response{} \, ]- \pseudoucb{t-1}(\actionrvc{t})\Big) \\
	&\leq_{(a)} \sum_{\exploretime < t \leq\switchtime} \Big(\ucbaction{t-1}(\optactionval)- \pseudoucb{t-1}(\actionrvc{t})\Big) \\
	&\leq_{(b)} \sum_{\exploretime < t \leq\switchtime} \Bigg(\pseudoucb{t-1}(\optactionval)- \pseudoucb{t-1}(\actionrvc{t}) + 2\sqrt{\frac{\log T}{\numobsaction{t-1}(\optactionval)}}\,\Bigg) + 2T^{3/4}\sqrt{\abs{\actionspace}\abs{\postcontextspace}\log T} \\
	&\leq_{(c)} 2\sum_{\exploretime < t \leq\switchtime} \sqrt{\frac{\log T}{\numobsaction{t-1}(\optactionval)}} + 2T^{3/4}\sqrt{\abs{\actionspace}\abs{\postcontextspace}\log T}\\
	&\leq_{(d)} 2\sum_{\exploretime < t \leq\switchtime} \sqrt{\frac{\abs{\actionspace} \log T}{\sqrt{T}}} + 2T^{3/4}\sqrt{\abs{\actionspace}\abs{\postcontextspace}\log T} \\
	&\leq 4 \,T^{3/4}\sqrt{\abs{\actionspace}\abs{\postcontextspace}\log T},
\]
where we have used that
(a) $\actionconc{}$ holds;
(b) the hypothesis test for \HCUCB{} passed;
(c) $\actionrvc{t} = \argmax_{\actionval\in\actionspace}\pseudoucb{t-1}(\actionval)$; and
(d) for $\exploretime < t$, $\numobsaction{t-1}(\optactionval) \geq \numobsaction{\exploretime}(\optactionval) \geq \sqrt{T} / \abs{\actionspace}$.

Second, again on the event $\actionconc{}\cap\margconc$ with $\highprobparam{}=2/T^2$,
\[\label{eqn:hyptest-worst3}
	&\hspace{-1em}\sum_{\exploretime < t \leq\switchtime} \Big(\pseudoucb{t-1}(\actionrvc{t}) - \EEenv{\actionrvc{t}}[\response{} \,]\Big)\\
	&\leq_{(a)} \sum_{\exploretime < t \leq\switchtime} \Bigg(\ucbaction{t-1}(\actionrvc{t}) + 2\sum_{\postcontextval\in\postcontextspace} \sqrt{\frac{\log T}{\numobscontext{t-1}(\postcontextval)}}\, \PPpriorenv{\actionval}[\postcontext{}{}=\postcontextval] + \frac{2\sqrt{\abs{\actionspace}\abs{\postcontextspace}\log T}}{T^{1/4}} - \EEenv{\actionrvc{t}}[\response{} \,]\Bigg) \\
	&\leq_{(b)} \sum_{\exploretime < t \leq\switchtime}  \Bigg(2\sqrt{\frac{\log T}{\numobsaction{t-1}(\actionrvhc{t})}} + 2\sum_{\postcontextval\in\postcontextspace} \sqrt{\frac{\log T}{\numobscontext{t-1}(\postcontextval)}}\, \PPpriorenv{\actionval}[\postcontext{}{}=\postcontextval] + \frac{\sqrt{\abs{\actionspace}\abs{\postcontextspace}\log T}}{T^{1/4}}\Bigg) \\
	&\leq_{(c)} \sum_{\exploretime < t \leq\switchtime}  \Bigg(2\sqrt{\frac{\log T}{\numobsaction{t-1}(\actionrvhc{t})}} + 2\sum_{\postcontextval\in\postcontextspace} \sqrt{\frac{\log T}{\numobscontext{t-1}(\postcontextval)}}\, \PPenv{\actionval}[\postcontext{}{}=\postcontextval] + 7\frac{(\log T)\sqrt{\abs{\actionspace}\abs{\postcontextspace}}}{T^{1/4}}\Bigg) \\
	&\leq_{(d)} 4\sqrt{2\abs{\actionspace}T\log T} + 7 T^{3/4} (\log T) \sqrt{\abs{\actionspace}\abs{\postcontextspace}} + 2\sum_{\exploretime < t \leq\switchtime} \sum_{\postcontextval\in\postcontextspace} \sqrt{\frac{\log T}{\numobscontext{t-1}(\postcontextval)}}\, \PPenv{\actionval}[\postcontext{}{}=\postcontextval],
\]
where we have used that
(a) the hypothesis test for \HCUCB{} passed;
(b) $\actionconc{}$ holds;
(c) $\priormargenv{}(\postcontext{})$ and $\env{}(\postcontext{})$ are $3T^{-1/4} \sqrt{\abs{\actionspace}\abs{\postcontextspace}\log T}$-close (if the original $\priormargenv{}$ does not satisfy this, then by triangle inequality it is replaced with $\estmargenv{}$ that does satisfy this); and
(d) \cref{fact:accumulated-actions}.
It remains to take expectation and apply \cref{fact:accumulated-contexts}, and then combine \cref{eqn:hyptest-worst1,eqn:hyptest-worst2,eqn:hyptest-worst3} to obtain
\[\label{eqn:hyptest-constants}
	\regrethc{T}
	&\leq 4\abs{\actionspace} + 4\sqrt{2\abs{\actionspace}T \log T} + 5 \sqrt{T}
	+ 4 \,T^{3/4}\sqrt{\abs{\actionspace}\abs{\postcontextspace}\log T} \\
	&\qquad + 4\sqrt{2\abs{\actionspace}T\log T} + 7 T^{3/4} (\log T) \sqrt{\abs{\actionspace}\abs{\postcontextspace}} \\
	&\qquad+ 2\sqrt{\log T}\Big[\sqrt{8\abs{\postcontextspace} T} + \sqrt{(T/2)\log T} + 2 \Big].
\]
\manualendproof

\subsection{Proof of \cref{fact:improved-causal-special}}\label{sec:proof-improved-causal-special}

Supposing the event $\actionconc{}$ holds (from \cref{fact:action-event-concentration}), every $\actionval\in\actionspaceidx{0}$ satisfies
\*[
	\ucbaction{t-1}(\actionval) \geq \condmean(0,1)\margmean(0) + \condmean(0,0)(1-\margmean(0)).
\] 
We use the same \envname{} construction from the proof of \cref{fact:causal-failure} in \cref{sec:proof-causal-failure}.
Using the specific choice of $\highprobparam{T}$, this argument implies that on the event $\lowerconc\cap\lowerprobconc$, $\actionrvc{t} \in \actionspaceidx{1}$ for sufficiently large $T$ and $t \geq 2\sqrt{T}$. 
Recall that
\*[
	\estcondmeancontext{t}(\postcontextval) 
	&=
	\frac{\numobsboth{t}(0,\postcontextval)}{\numobscontext{t}(\postcontextval)} \estcondmeanboth{t}(0,\postcontextval) + \frac{\numobsboth{t}(1,\postcontextval)}{\numobscontext{t}(\postcontextval)} \estcondmeanboth{t}(1,\postcontextval).
\]
Since $\numobsboth{t}(0,\postcontextval) \leq 2\sqrt{T}$ and $\numobscontext{t}(\postcontextval)$ grows linearly in $t$ on the event $\lowerconc\cap\lowerprobconc$, for every $\alpha\in(0,1)$ and $\eps>0$ there exists $T$ large enough such that for all $t \geq (\log T)\sqrt{T}$ (crucially, the $\log T$ ensures that the proportion of these $t$ where $\actionrvc{t} \in \actionspaceidx{1}$ tends to 1 as $T$ gets larger) it holds that for $\postcontextval\in\postcontextspaceidx{\postcontextvalidx}$,
\*[
	\estcondmeancontext{t}(\postcontextval) 
	\leq \alpha \condmean(0,\postcontextvalidx) + (1-\alpha) \condmean(1,\postcontextvalidx) + \eps.
\]
This implies that for all $\eps>0$, taking $\alpha$ small enough and $T$ large enough with $t \geq (\log T)\sqrt{T}$ gives
\*[
	\pseudoucb{t-1}(\actionval)
	\leq
	\condmean(1,1)\margmean(0) + \condmean(1,0)(1-\margmean(0)) + \eps.
\]
By the exploration phase,
\*[
	2\sqrt{\frac{\log T}{\numobsaction{t-1}(\actionval)}}
	\leq 2 \sqrt{\frac{\abs{\actionspace} \log T}{\sqrt{T}}},
\]
and hence can be made arbitrarily small by taking $T$ sufficiently large. Thus, under the assumption
\[\label{eqn:specific-condition}
	[\condmean(0,1)-\condmean(1,1)]\margmean(0) + [\condmean(0,0)-\condmean(1,0)](1-\margmean(0)) > 0,
\]
for large enough $T$ the first condition from \cref{alg:hyp-test} will fail for some $\actionval\in\actionspaceidx{0}$ when $t = (\log T)\sqrt{T}$. Note that \cref{eqn:specific-condition} is satisfied by the example given in the proof of \cref{fact:causal-failure}. This implies that, on the event $\lowerconc\cap\lowerprobconc\cap\actionconc{}$, for large enough $T$ the \HCUCB{} algorithm switches to following \UCB{} when $t = (\log T)\sqrt{T}$. Since this joint event holds with probability larger than $1 - 2(\abs{\actionspace}+\abs{\postcontextspace})/T$, combining the exploration phase regret with the regret bound of \cref{fact:existing-ucb} gives the result.
\manualendproof

\section{Proof of \cref{fact:adaptive-impossible-short}}\label{sec:adaptive-proofs}

Fix $\actionspace$, $\postcontextspace$, and $T$.
Let $\postcontextspaceidx{0}$ be an arbitrary strict subset of $\postcontextspace$ and $\postcontextspaceidx{1}=\postcontextspace\setminus\postcontextspaceidx{0}$.
Fix $\Delta\in(0,1/20)$ and $\eps\in(0,1)$ to be chosen later.
Define the family of marginal distributions
\*[
	\margdist_{\actionval}[\postcontext{}\in\postcontextspaceidx{0}]
	=
	\begin{cases}
		1/2 + 2\Delta
		&\actionval=1 \\
		1/2
		&\actionval\neq1,
	\end{cases}
\]
where probability is evenly spaced within $\postcontextspaceidx{0}$ and $\postcontextspaceidx{1}$ respectively.
Further, define the Bernoulli conditional response distribution
\*[
	\baseconddist[\response{}=1 \setdelim \postcontext{}=\postcontextval]
	= 
	\begin{cases}
	3/4
	&\postcontextval\in\postcontextspaceidx{0}\\
	1/4
	&\postcontextval\in\postcontextspaceidx{1}.
	\end{cases}
\]

Define the Bernoulli \propertyname{} \envname{} $\baseenv{}\in\envspace$
for all $\actionval\in\actionspace$ by 
\*[
	\PP_{\baseenv{\actionval}}[\response{}=1] = \sum_{\postcontextval\in\postcontextspace} \baseconddist[\response{}=1 \setdelim \postcontext{}=\postcontextval] \margdist_\actionval[\postcontext{}=\postcontextval].
\]
Notice that
\*[
	\EE_{\baseenv{\actionval}}[\response{}\,]
	=
	\begin{cases}
	1/2 + \Delta
	&\actionval=1 \\
	1/2
	&\actionval\neq1.
	\end{cases}
\]

Then, for every $\refactionval\neq1$, define
$\specenv{}{\Delta}{\refactionval}\in\envspace$ for all $\actionval\in\actionspace$ by
\*[
	\PP_{\specenv{\actionval}{\Delta}{\refactionval}}[\response{}=1] = \sum_{\postcontextval\in\postcontextspace} \conddist{\refactionval}_\actionval[\response{}=1 \setdelim \postcontext{}=\postcontextval] \margdist_\actionval[\postcontext{}=\postcontextval],
\]
where
\*[
	\conddist{\refactionval}_\actionval[\response{}=1 \setdelim \postcontext{}=\postcontextval] 
	=
	\begin{cases}
	3/4
	&\actionval=1,\postcontextval\in\postcontextspaceidx{0}\\
	1/4
	&\actionval=1,\postcontextval\in\postcontextspaceidx{1}\\
	3/4 + 2\Delta(1+\eps)
	&\actionval=\refactionval,\postcontextval\in\postcontextspaceidx{0}\\
	1/4
	&\actionval=\refactionval,\postcontextval\in\postcontextspaceidx{1}\\
	3/4
	&\actionval\not\in\{1,\refactionval\}, \postcontextval\in\postcontextspaceidx{0}\\
	1/4
	&\actionval\not\in\{1,\refactionval\}, \postcontextval\in\postcontextspaceidx{1}.
	\end{cases}
\]
Notice that $\specenv{}{\Delta}{\refactionval}$ is \emph{not} \propertyname{}, and
\*[
	\EE_{\specenv{\actionval}{\Delta}{\refactionval}}[\response{}\,]
	=
	\begin{cases}
		1/2 + \Delta
		&\actionval=1 \\
		1/2 + \Delta(1+\eps)
		&\actionval=\refactionval \\
		1/2
		&\actionval\not\in\{1,\refactionval\}.
	\end{cases}
\]

We now extend Lemma 15.1 of \citet{bandit20book}.
In particular, let $\policymarg{} = \algo(\actionspace,\postcontextspace,\margdist,T)$ and observe that for any $\env{}\in\envmargspace(\margdist)$, 
\*[
	\dee \PP_{\env{},\policymarg{}}(\historyrv{T})
	&= \prod_{t=1}^T \policymarg{t}(\actionrv{t}\setdelim\historyrv{t-1})\PPenv{}[\postcontextrv{t}{\actionrv{t}},\responserv{t}{\actionrv{t}}\setdelim\actionrv{t}].
\]
Thus, for every $\refactionval\neq 1$,
\*[
	\KL{\PP_{\baseenv{},\policymarg{}}}{\PP_{\specenv{}{\Delta}{\refactionval}, \policymarg{}}}
	&= \EE_{\baseenv{},\policymarg{}}\Big[\log\frac{\dee \PP_{\baseenv{},\policymarg{}}}{\dee \PP_{\specenv{}{\Delta}{\refactionval}, \policymarg{}}}(\historyrv{T})\Big] \\
	&= \sum_{t=1}^T \EE_{\baseenv{},\policymarg{}}\Bigg[\EE_{\baseenv{},\policymarg{}}\Bigg[ \log \frac{\PP_{\baseenv{}}[\postcontextrv{t}{\actionrv{t}},\responserv{t}{\actionrv{t}}]}{\PP_{\specenv{}{\Delta}{\refactionval}}[\postcontextrv{t}{\actionrv{t}},\responserv{t}{\actionrv{t}}]}\Bigsetdelim\actionrv{t} \Bigg]\Bigg] \\
	&= \sum_{t=1}^T \EE_{\baseenv{},\policymarg{}} \Bigg[\KL{\PP_{\baseenv{\actionrv{t}}}}{\PP_{\specenv{\actionrv{t}}{\Delta}{\refactionval}}} \Bigg] \\
	&= \sum_{\actionval\in\actionspace} \EE_{\baseenv{},\policymarg{}}[\numobsaction{T}(\actionval)] \,\KL{\PP_{\baseenv{\actionval}}}{\PP_{\specenv{\actionval}{\Delta}{\refactionval}}}.
\]

Since the marginal distribution $\margdist$ is shared, 
for each $\actionval\in\actionspace$ this simplifies to
\*[
	\KL{\PP_{\baseenv{\actionval}}}{\PP_{\specenv{\actionval}{\Delta}{\refactionval}}}
	&= \sum_{\postcontextval\in\postcontextspace} \margdist_\actionval(\postcontext{} = \postcontextval) \KL{\baseenv{\actionval}(\response{} \setdelim \postcontextval)}{\specenv{\actionval}{\Delta}{\refactionval}(\response{} \setdelim \postcontextval)} \\
	&=
	\begin{cases}
	0
	& \actionval\neq\refactionval \\
	(1/2)\KL{\bernoullidist(3/4)}{\bernoullidist(3/4+2\Delta(1+\eps))}
	& \actionval=\refactionval.
	\end{cases}
\]

Thus, by Pinsker's inequality (Theorem~14.2 of \citep{bandit20book}),
\*[
	\regretcustom{\baseenv{}}{\policymarg{}}{T} + \regretcustom{\specenv{}{\Delta}{\refactionval}}{\policymarg{}}{T}
	&> \frac{T\Delta}{2}\PP_{\baseenv{},\policymarg{}}[\numobsaction{T}(1) \leq T/2] 
	+ \frac{T\Delta\eps}{2} \PP_{\specenv{}{\Delta}{\refactionval},\policymarg{}}[\numobsaction{T}(1) > T/2] \\
	&\geq
	\frac{T\Delta\eps}{4}\exp\{-\KL{\PP_{\baseenv{},\policymarg{}}}{\PP_{\specenv{}{\Delta}{\refactionval}, \policymarg{}}}\} \\
	&= \frac{T\Delta\eps}{4}\exp\Big\{-\EE_{\baseenv{},\policymarg{}}[\numobsaction{T}(\refactionval)] \,(1/2)\KL{\bernoullidist(3/4)}{\bernoullidist(3/4+2\Delta(1+\eps))}\Big\}.
\]
Using that $\KL{\bernoullidist(3/4)}{\bernoullidist(3/4+2\Delta(1+\eps))} \leq 4x^2$ for $x<1/10$ and the assumption of the theorem, this implies that for all
$T$,
\*[
	\EE_{\baseenv{},\policy{}}[\numobsaction{T}(\refactionval)]
	&\geq \frac{\log(T\Delta\eps)-\log(8C\,\sqrt{\abs{\actionspace} T}\,)}{(1/2)\KL{\bernoullidist(3/4)}{\bernoullidist(3/4+2\Delta(1+\eps))}}
	&\geq \frac{1}{8\Delta^2(1+\eps)^2} \log\frac{\Delta\eps\sqrt{T}}{8C\,\sqrt{\abs{\actionspace}}}.
\]

Finally, we combine this with
\*[
	\regretcustom{\baseenv{}}{\policy{}}{T}
	= \sum_{\refactionval\neq1} \Delta \EE_{\baseenv{},\policy{}}[\numobsaction{T}(\refactionval)],
\]
choose $\eps=1$, and set
\*[
	\Delta = \frac{16C\,\sqrt{\abs{\actionspace}}}{\sqrt{T}}.
\]

\manualendproof

\section{Simulation Details}\label{sec:simulation-details}

Here we provide more details for the simulations in \cref{sec:simulations}. First, the regret bounds are computed by sampling a new data realization for each horizon $T$ we consider, computing the expected regret (with respect to the data randomness) for this realization, and then averaging this value (i.e., over the algorithm randomness) over $M=300$ realizations.

\CUCB{} and \UCB{} are implemented exactly according to \cref{sec:algos}, \HCUCB{} is implemented exactly according to \cref{alg:hyp-test}, and \CUCBt{} is implemented exactly according to Algorithm~3 of \citet{nair21budget} (including their time-adaptive confidence bound).
For \Corral{}, we use the log-barrier method from Algorithms 1 and 2 of \citet{agarwal17corral} with base algorithms \UCB{} and \CUCB{}. 
We use the prescribed learning rate from their Theorem~5 of 
\*[
	\eta =\frac{1}{40 \cdot \Rr(T) \log T},
\]
where $\Rr(T)$ is an upper bound on the regret of \CUCB{}.
In order to use \UCB{} and \CUCB{} with importance-weighted losses, we implement the epoch-based approach of \citet{arora21corral} along with their Freedman's inequality confidence bound of
\*[
	\sqrt{\frac{4\rho \log t}{\TT_t}} + \frac{4\rho \log t}{3\TT_t}
\]
for the arm means (\UCB{}) and the \postcontextname{} conditional means (\CUCB{}) respectively, where $\rho$ is an upper bound on the importance-weighted losses.

For \Corral{},
it is typical in experiments (e.g., \citep{arora21corral}) to swap out the Freedman's inequality confidence bound for the usual Hoeffding's inequality confidence bound.
However, there are no theoretical guarantees for the algorithm then (due to the importance-weighted losses), and we observed in additional experiments that it is still not adaptive (i.e., it does no better than \UCB{} even in \propertyname{} \envnames{}).

\end{document}